\newcommand{\mymacro}[1]{{ #1}}
\newcommand{\defn}[1]{\textbf{#1}}
\newcommand{\paroutline}[3][false]{%
    \ifnum\pdfstrcmp{#1}{true}=0
        #3
    \else
        [\textit{\textcolor{DiverseMagenta}{#2}}] \textcolor{AccentBlue}{#3}
    \fi
}
\newcommand{\pdens}{{\mymacro{ p}}}
\newcommand{\qdens}{{\mymacro{ q}}}
\newcommand{\ind}[1]{\mathbbm{1} \left\{ #1 \right\}}
\newcommand{\digit}[2]{\mymacro{\evd_{#2}\left( #1 \right)}}
\newcommand{\digitVec}[1]{\mymacro{\vd\left( #1 \right)}}
\newcommand{\Q}{{\mymacro{ \mathbb{Q}}}}
\newcommand{\R}{{\mymacro{ \mathbb{R}}}}
\newcommand{\Rex}{{\mymacro{ \overline{\R}}}}
\newcommand{\B}{{\mymacro{ \mathbb{B}}}}
\newcommand{\func}{{\mymacro{ f}}}
\newcommand{\funcg}{{\mymacro{ g}}}
\newcommand{\vfunc}{{\mymacro{ \boldsymbol{f}}}}
\newcommand{\norm}[1]{{\mymacro{ \left\lVert #1 \right\rVert}}}
\newcommand{\abs}[1]{{\mymacro{ \left| #1 \right|}}}
\newcommand{\inner}[2]{{\mymacro{ #1^\top #2}}}
\newcommand{\innerProd}[2]{{\mymacro{ \left\langle #1, #2 \right\rangle }}}
\newcommand{\alphabet}{{\mymacro{ \Sigma}}}
\newcommand{\eosalphabet}{{\mymacro{ \overline{\alphabet}}}}
\newcommand{\bosalphabet}{{\mymacro{ \underline{\alphabet}}}}
\newcommand{\lang}{{\mymacro{\sL}}}
\newcommand{\kleene}[1]{{\mymacro{#1^*}}}
\newcommand{\str}{{\mymacro{\boldsymbol{y}}}}
\newcommand{\strlt}{{\mymacro{ \str_{<\tstep}}}}
\newcommand{\strlen}{{\mymacro{T}}}
\newcommand{\sym}{{\mymacro{y}}}
\newcommand{\syma}{{\mymacro{a}}}
\newcommand{\symb}{{\mymacro{b}}}
\newcommand{\defeq}{\mathrel{\stackrel{\textnormal{\tiny def}}{=}}}
\newcommand{\NTo}[1]{{\mymacro{\left[ #1 \right]}}}
\newcommand{\set}[1]{{\mymacro{\left\{ #1 \right\}}}}
\newcommand{\justification}[1]{%
    \refstepcounter{equation}%
    \tag{\theequation \textcolor{black!50}{, \footnotesize{#1}}}
}
\newcommand{\idxn}{{\mymacro{ n}}}
\newcommand{\idxd}{{\mymacro{ d}}}
\newcommand{\idxi}{{\mymacro{ i}}}
\newcommand{\idxj}{{\mymacro{ j}}}
\newcommand{\idxk}{{\mymacro{ k}}}
\newcommand{\idxm}{{\mymacro{ m}}}
\newcommand{\nsymbols}{{\mymacro{ |\alphabet|}}}
\newcommand{\bosnsymbols}{{\mymacro{ |\bosalphabet|}}}
\newcommand{\tstep}{{\mymacro{ t}}}
\newcommand{\tstepminus}{{\mymacro{\tstep - 1}}}
\newcommand{\finaltstep}{{\mymacro{ T}}}
\newcommand{\pLM}{\mymacro{\pdens}}
\newcommand{\qLM}{\mymacro{\qdens}}
\newcommand{\pLNSM}{\mymacro{\pdens}}
\newcommand{\bos}{{\mymacro{\textsc{bos}}}}
\newcommand{\eos}{{\mymacro{\textsc{eos}}}}
\newcommand{\ngr}{{\mymacro{ \textit{n}}}}
\newcommand{\ngram}{{\mymacro{ \textit{n}-gram}}\xspace}
\newcommand{\onehot}[1]{{\mymacro{ \llbracket#1\rrbracket}}}
\newcommand{\inEmbedding}{{\mymacro{ \vr}}}
\newcommand{\embedMtx}{{\mymacro{ \mE}}}
\newcommand{\eembedMtx}{{\mymacro{ \emE}}}
\newcommand{\symt}{{\mymacro{ \sym_{\tstep}}}}
\newcommand{\symtminus}{{\mymacro{ \sym_{\tstep-1}}}}
\newcommand{\symone}{{\mymacro{ \sym_{1}}}}
\newcommand{\zero}{{\mymacro{\mathbf{0}}}}
\newcommand{\outMtx}{{\mymacro{ \mE}}}
\newcommand{\eOutMtx}{{\mymacro{ \eembedMtx}}}
\newcommand{\hiddDim}{{\mymacro{ D}}}
\newcommand{\staticRepr}{{\mymacro{\mathcal{R}}}}
\newcommand{\symordering}{{\mymacro{ m}}}
\newcommand{\anOrdering}{{\mymacro{ s}}}
\newcommand{\enc}{{\mymacro{\mathsf{enc}}}}
\newcommand{\mlp}{{\mymacro{\mathrm{MLP}}}}
\newcommand{\simplexFun}[1]{{\mymacro{ \boldsymbol{\Delta}}^{#1}}}
\newcommand{\Simplexdminus}{{\mymacro{ \boldsymbol{\Delta}^{D-1}}}}
\newcommand{\negterm}[1]{{\mymacro{ {\raise.17ex\hbox{$\scriptstyle\sim$}} #1}}}
\newcommand{\ifcondition}{\textbf{if }}
\newcommand{\otherwisecondition}{\textbf{otherwise }}
\newcommand{\ignore}[1]{}
\newcommand{\expandLater}[1]{}
\newcommand{\tfheadnum}{\mymacro{H}}
\newcommand{\qTransf}{\mymacro{Q}}
\newcommand{\kTransf}{\mymacro{K}}
\newcommand{\vTransf}{\mymacro{V}}
\newcommand{\oTransf}{\mymacro{O}}
\newcommand{\fTransf}{\mymacro{F}}
\newcommand{\oTransfFun}[1]{\mymacro{\oTransf\left(#1\right)}}
\newcommand{\attn}{\mymacro{\texttt{Att}}}
\newcommand{\tfheadCombine}{\mymacro{\mathcal{H}}}
\newcommand{\transformernetwork}{\mymacro{\mathcal{T}}}
\newcommand{\tf}{\mymacro{\mathcal{T}}}
\newcommand{\tfFun}[1]{\tf\left(#1\right)}
\newcommand{\tfpLM}{\mymacro{\pLM_\transformernetwork}}
\newcommand{\tfencfun}{\mymacro{\enc}}
\newcommand{\tfscorefun}{\mymacro{\func}}
\newcommand{\hardmax}{\mymacro{\mathrm{hardmax}}}
\newcommand{\hardmaxAvg}{\mymacro{\hardmax}}
\newcommand{\tflayer}{\mymacro{\mathrm{\mathcal{L}}}}
\newcommand{\tflayerinputmat}{\mymacro{\mX}}
\newcommand{\tflayerinputsy}{\mymacro{\vx}}
\newcommand{\tflayeroutputsy}{\mymacro{\vz}}
\newcommand{\tflayeridx}{\mymacro{\ell}}
\newcommand{\tfnumlayer}{\mymacro{L}}
\newcommand{\posInEmbedding}{\mymacro{\inEmbedding}}
\newcommand{\posInEmbeddingFun}[1]{\mymacro{\posInEmbedding\left(#1\right)}}
\def\1{\mathbf{1}}
\def\rvp{{{\mymacro{ \mathbf{p}}}}}
\def\va{{{\mymacro{ \mathbf{a}}}}}
\def\vb{{{\mymacro{ \mathbf{b}}}}}
\def\vd{{{\mymacro{ \mathbf{d}}}}}
\def\vk{{{\mymacro{ \mathbf{k}}}}}
\def\vq{{{\mymacro{ \mathbf{q}}}}}
\def\vr{{{\mymacro{ \mathbf{r}}}}}
\def\vs{{{\mymacro{ \mathbf{s}}}}}
\def\vu{{{\mymacro{ \mathbf{u}}}}}
\def\vv{{{\mymacro{ \mathbf{v}}}}}
\def\vx{{{\mymacro{ \mathbf{x}}}}}
\def\vz{{{\mymacro{ \mathbf{z}}}}}
\def\eva{{{\mymacro{ a}}}}
\def\evb{{{\mymacro{ b}}}}
\def\evd{{{\mymacro{ d}}}}
\def\evs{{{\mymacro{ s}}}}
\def\evu{{{\mymacro{ u}}}}
\def\evv{{{\mymacro{ v}}}}
\def\evx{{{\mymacro{ x}}}}
\def\evz{{{\mymacro{ z}}}}
\def\mE{{{\mymacro{ \mathbf{E}}}}}
\def\mI{{{\mymacro{ \mathbf{I}}}}}
\def\mK{{{\mymacro{ \mathbf{K}}}}}
\def\mO{{{\mymacro{ \mathbf{O}}}}}
\def\mQ{{{\mymacro{ \mathbf{Q}}}}}
\def\mV{{{\mymacro{ \mathbf{V}}}}}
\def\mW{{{\mymacro{ \mathbf{W}}}}}
\def\mX{{{\mymacro{ \mathbf{X}}}}}
\def\sI{{{\mymacro{ \mathcal{I}}}}}
\def\sL{{{\mymacro{ \mathcal{L}}}}}
\def\emE{{\mymacro{ E}}}
\newcommand{\N}{{\mymacro{ \mathbb{N}}}}
\newcommand{\projfunc}{{\mymacro{\boldsymbol{\pi}}}}
\newcommand{\softmax}{{\mymacro{ \mathrm{softmax}}}}
\newcommand{\sparsemax}{{\mymacro{ \mathrm{sparsemax}}}}
\newcommand{\ReLU}{{\mymacro{ \mathrm{ReLU}}}}
\newcommand{\softmaxfunc}[2]{{\mymacro{ \mathrm{softmax}\!\left(#1\right)_{#2}}}} 
\newcommand{\sparsemaxfunc}[2]{{\mymacro{ \mathrm{sparsemax}\!\left(#1\right)_{#2}}}} 
\newcommand{\ReLUfunc}[1]{{\mymacro{ \ReLU\!\left(#1\right)}}} 
\DeclareMathOperator*{\argmax}{{\mymacro{ argmax}}}
\DeclareMathOperator*{\argmin}{{\mymacro{ argmin}}}
\newcommand{\bigO}[1]{{\mymacro{ \mathcal{O}\left(#1\right)}}}
\title{Transformers Can Represent $n$-gram Language Models}
\author{
Anej Svete%
~\;~\;~Ryan Cotterell\\
\texttt{\{\href{asvete@inf.ethz.ch}{asvete}, \href{ryan.cotterell@inf.ethz.ch}{ryan.cotterell}\}@inf.ethz.ch}\\
    {%
\setlength{\fboxsep}{2.5pt}%
\setlength{\fboxrule}{2.5pt}%
\fcolorbox{white}{white}{
    \includegraphics[width=.15\linewidth]{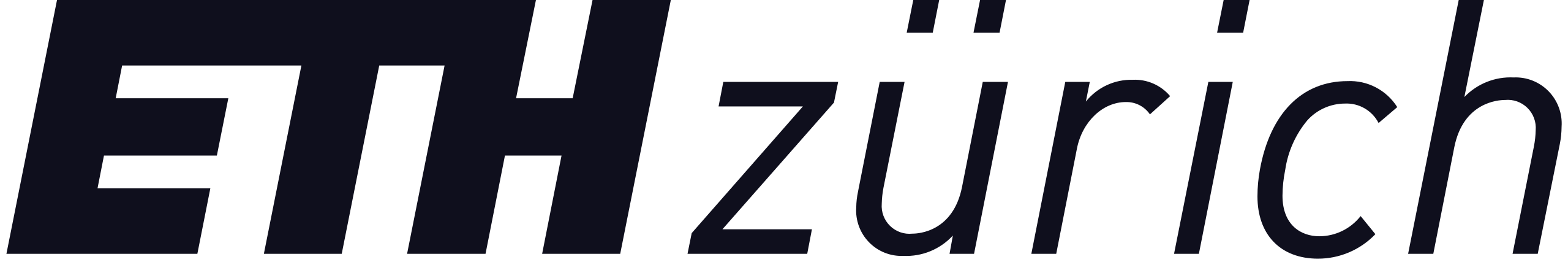}
}
}}
\begin{document}
\maketitle

\begin{abstract}
    Existing work has analyzed the representational capacity of the transformer architecture by means of formal models of computation.
    However, the focus so far has been on analyzing the architecture in terms of language \emph{acceptance}.
    We contend that this is an ill-suited problem in the study of \emph{language models} (LMs), which are definitionally \emph{probability distributions} over strings.
    In this paper, we focus on the relationship between transformer LMs and \ngram LMs, a simple and historically relevant class of language models. 
    We show that transformer LMs using the hard or sparse attention mechanisms can exactly represent any \ngram LM, giving us a concrete lower bound on their probabilistic representational capacity.
    This provides a first step towards understanding the mechanisms that transformer LMs can use to represent probability distributions over strings.
    
    \vspace{0.5em}
    \hspace{.5em}\includegraphics[width=1.25em,height=1.25em]{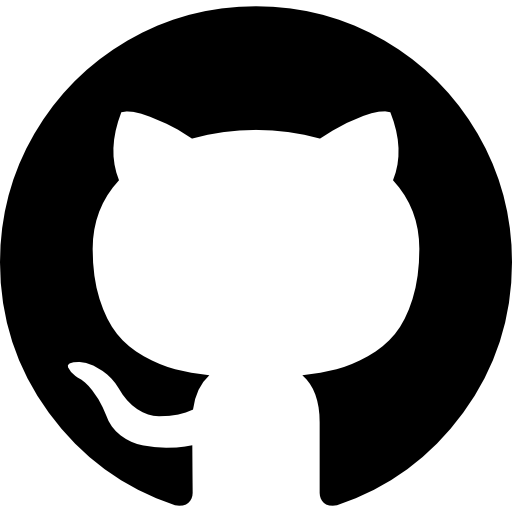}\hspace{.75em}\parbox{\dimexpr\linewidth-2\fboxsep-2\fboxrule}{\url{https://github.com/rycolab/transformer-ngrams}}
    \vspace{-.5em}
\end{abstract}

\section{Introduction} \label{sec:intro}
Neural language models (LMs) have become the backbone of many NLP systems. 
Their widespread adoption has prompted a plethora of theoretical work investigating what they can and cannot do by studying their representational capacity.
Most state-of-the-art LMs are based on the transformer architecture \citep{Vaswani2017}, whose theoretical abilities and limitations have been studied extensively; see, e.g., the survey by \citet{strobl2023transformers}. 
But, many questions remain unanswered.
Most existing work studies the architecture in terms of binary language recognition.
This introduces a category error between the object of study---an LM, which is definitionally a \emph{distribution} over strings---and the theoretical abstraction---a \emph{set} of strings.
To amend this discrepancy, we ask: What classes of probability distributions over strings can transformer LMs represent?\looseness=-1

\begin{figure}
    \centering

    \begin{tikzpicture}[
        tape node/.style={draw=ETHBlue!80,minimum size=0.85cm,fill=ETHBlue!20},
        head node/.style={draw=ETHGreen!80,circle,minimum size=0.65cm,fill=ETHGreen!60,text=white},
        attn arrow/.style={-{Latex[length=2.25mm,width=1.5mm]},ETHGreen!100},
        comb arrow/.style={-{Latex[length=2.25mm,width=1.5mm]},ETHRed!70},
        comb node/.style={draw=ETHRed!80,circle,minimum size=0.75cm,fill=ETHRed!40},
        ]

        \foreach \i/\y in {0/$\sym_1$,1/$\sym_2$,2/$\cdots$,3/$\sym_{\tstep-4}$,4/$\sym_{\tstep-3}$,5/$\sym_{\tstep-2}$,6/$\sym_{\tstep-1}$,7/$\symt$,8/$\cdots$} {
                \ifnum \i=7
                    \node[tape node,fill=ETHBlue!40] (tape-\i) at (0.85*\i,0) {\footnotesize \y};
                \else
                    \ifnum \i>3
                        \ifnum \i<7
                            \node[tape node,fill=ETHBlue!25] (tape-\i) at (0.85*\i,0) {\footnotesize \y};
                        \fi
                    \else
                        \node[tape node,fill=ETHBlue!15] (tape-\i) at (0.85*\i,0) {\footnotesize \y};
                    \fi
                    \ifnum \i>7
                        \node[tape node,fill=ETHBlue!15] (tape-\i) at (0.85*\i,0) {\footnotesize \y};
                    \fi
                \fi
            }

        \node[head node] (head-1) at (2,1.7) {\scriptsize \texttt{Head 3}};
        \node[head node] (head-2) at (3.5,1.7) {\scriptsize \texttt{Head 2}};
        \node[head node] (head-3) at (5,1.7) {\scriptsize \texttt{Head 1}};

        \draw[attn arrow, ETHGreen!20] (head-1) to[out=270,in=90] (tape-0.north);
        \draw[attn arrow, ETHGreen!20] (head-1) to[out=270,in=90] (tape-1.north);
        \draw[attn arrow, ETHGreen!20] (head-1) to[out=270,in=90] (tape-3.north);
        \draw[attn arrow, ETHGreen!20] (head-1) to[out=270,in=90] (tape-5.north);
        \draw[attn arrow, ETHGreen!20] (head-2) to[out=270,in=90] (tape-0.north);
        \draw[attn arrow, ETHGreen!20] (head-2) to[out=270,in=90] (tape-1.north);
        \draw[attn arrow, ETHGreen!20] (head-2) to[out=270,in=90] (tape-3.north);
        \draw[attn arrow, ETHGreen!20] (head-2) to[out=270,in=90] (tape-4.north);
        \draw[attn arrow, ETHGreen!20] (head-3) to[out=270,in=90] (tape-0.north);
        \draw[attn arrow, ETHGreen!20] (head-3) to[out=270,in=90] (tape-1.north);
        \draw[attn arrow, ETHGreen!20] (head-3) to[out=270,in=90] (tape-3.north);
        \draw[attn arrow, ETHGreen!20] (head-3) to[out=270,in=90] (tape-4.north);
        \draw[attn arrow, ETHGreen!20] (head-3) to[out=270,in=90] (tape-5.north);
        \draw[attn arrow] (head-1) to[out=270,in=90] (tape-4.north);
        \draw[attn arrow] (head-2) to[out=270,in=90] (tape-5.north);
        \draw[attn arrow] (head-3) to[out=270,in=90] (tape-6.north);

        \node[comb node] (combiner) at (4.5,3.4) {$\tfheadCombine$};

        \draw[comb arrow] (head-1.north) to[out=90,in=270] (combiner.south);
        \draw[comb arrow] (head-2.north) to[out=90,in=270] (combiner.south);
        \draw[comb arrow] (head-3.north) to[out=90,in=270] (combiner.south);

        \node[fill=none] (out) at (0.5,4.25) {$\pLNSM\left(\sym_\tstep\mid\str_{<\tstep}\right)$};

        \draw[comb arrow] (combiner.north) to[out=90,in=270] (out.south) ;
        \node[fill=none] (out) at (3.2,4.5) {\footnotesize $\softmax\left(\outMtx \; \cdot\right)$};

    \end{tikzpicture}
    \caption{A transformer LM can simulate a $4$-gram LM using \textcolor{ETHGreen}{3 heads}. The stronger arrows from the heads to the symbols show where the heads focus their attention.}
    \label{fig:transformer-n-gram-label}
    \vspace{-10pt}
\end{figure}

Formal models of probabilistic computation provide a natural, well-understood, and precise framework for studying the classes of probability distributions language models can represent.
Traditionally, the representational capacity of neural networks, both in terms of lower bounds (what they can provably do) as well as upper bounds (what they can provably \emph{not} do), has been studied in terms of Boolean sequential models of computation, such as finite-state automata and Turing machines \citep[e.g.,][]{Kleene1956,Minsky1954,Siegelmann1992OnTC,hao-etal-2018-context,merrill-2019-sequential,merrill-etal-2020-formal,merrill-etal-2022-saturated,merrill2022extracting}.
Recent work has extended this paradigm to work with probabilistic models of computation \citep{svete-cotterell-2023-recurrent,nowak-etal-2023-representational}, but so far only for LMs based on recurrent neural networks.\looseness=-1

However, the sequential nature of classical models makes the connection to the inherently \emph{parallelizable} transformer architecture less straightforward and has resulted in a number of results upper-bounding their representational capacity \citep{hahn-2020-theoretical,bhattamishra-etal-2020-on-ability,chiang-cholak-2022-overcoming,hao-etal-2022-formal,merrill-sabharwal-2023-parallelism}.
We connect transformer LMs to a classical class of LMs that lend themselves particularly well to parallelized computations: \ngram LMs.
We show that both hard as well as sparse attention transformer LMs can represent any \ngram LM (\cref{thm:transformers-n-gram-label,thm:transformers-n-gram-label-sparse}).\footnote{An analysis completely analogous to practical implementations would also consider \emph{soft} attention transformer LMs, whose full support when attending over the preceding symbols makes the analysis trickier. We, therefore, omit its analysis here and reserve it for a separate treatment.}
This gives us a concrete lower bound on their probabilistic representational capacity.
We also study the role of the number of heads (\cref{thm:transformers-n-gram-label}) and the number of layers (\cref{thm:transformers-n-gram-label-multi-layer}), illustrating a trade-off between the number of heads, layers, and the complexity of the non-linear transformations required for the simulation of \ngram LMs.
Altogether, these results offer a step towards understanding the probabilistic representational capacity of transformer LMs and the mechanisms they might employ to implement formal models of computation.\looseness=-1

\section{Preliminaries}
Let $\alphabet$ be an alphabet, i.e., a finite, non-empty set of symbols, and $\kleene{\alphabet}$ the (infinite) set of all strings formed from symbols of $\alphabet$.
Most modern LMs define $\pLM\left(\str\right)$ for $\str \in \kleene{\alphabet}$ autoregressively---as a product of conditional probability distributions:
\begin{equation} \label{eq:lnlm}
    \pLM\left(\str\right) \defeq \pLNSM\left(\eos\mid\str\right) \prod_{\tstep = 1}^{|\str|} \pLNSM\left(\symt \mid \strlt\right).
\end{equation}
Here, $\eos \notin \alphabet$ is a distinguished \underline{e}nd-\underline{o}f-\underline{s}tring symbol.
The $\eos$ symbol enables us to define the probability of a string purely based on the conditional distributions.
Such a factorization can be done without loss of generality \citep{du-etal-2023-measure}.
We define $\eosalphabet \defeq \alphabet \cup \left\{\eos\right\}$.
Further, the conditional probability distributions $\pLNSM\left(\symt \mid \strlt\right)$ are usually defined based on \emph{vectorial} representations of $\strlt$ computed by some function $\enc\colon \kleene{\alphabet} \to \R^\hiddDim$.
This leads us to the definition of representation-based LMs below.
\begin{definition} \label{def:repr-lm}
    Let $\alphabet$ be an alphabet and $\enc\colon \kleene{\alphabet} \to \R^\hiddDim$ a \defn{representation function} encoding strings as $\hiddDim$-dimensional representations.
    Let $\outMtx \in \R^{|\eosalphabet| \times \hiddDim}$ be an \defn{output matrix}.
    A \defn{representation-based} LM $\pLM$ defines the conditional probability distributions $\pLNSM\left(\symt \mid \strlt\right)$ as\footnote{One could, more generally, swap the softmax for any other normalization function, such as the sparsemax \citep{sparsemax}. Here, however, we focus on the softmax for conciseness.} 
    \begin{equation}
        \pLNSM\left(\symt \mid \strlt\right) \defeq \softmaxfunc{\outMtx \; \enc\left(\strlt\right)}{\symt}.
    \end{equation}
\end{definition}

At a high level, we are interested in encoding an arbitrary \ngram LM using a transformer LM.
To do so, we need a notion of equivalence between language models.
In this paper, we will work with the following simple definition.
\begin{definition} \label{def:weak-equivalence}
    Two LMs $\pLM$ and $\qLM$ over $\kleene{\alphabet}$ are \defn{weakly equivalent} if $\pLM\left(\str\right) = \qLM\left(\str\right)$ for all $\str \in \kleene{\alphabet}$.
\end{definition}
This paper precisely explains and proves the following theorem, stated informally below.
\begin{theorem}[Informal]
\label{thm:informal}
For every \ngram LM, there exists a weakly equivalent (\{hard, sparse\} attention) transformer LM.
\end{theorem}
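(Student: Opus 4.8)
The plan is to construct, for a given $n$-gram LM $\qLM$, a representation-based transformer LM $\pLM$ in the sense of \cref{def:repr-lm} that is weakly equivalent to $\qLM$. By the autoregressive factorization in \cref{eq:lnlm}, weak equivalence (\cref{def:weak-equivalence}) reduces to matching conditionals: it suffices to guarantee $\pLNSM(\symt \mid \strlt) = \qLM(\symt \mid \strlt)$ for every prefix $\strlt$ and every $\symt \in \eosalphabet$. Since $\qLM$ is an $n$-gram model, this conditional depends only on the last $n-1$ symbols of $\strlt$, which we make well-defined for every prefix by prepending a fixed \bos-padding. Hence the whole task splits into (i) building a transformer encoder $\enc$ whose representation of $\strlt$ recovers the one-hot identities of the $n-1$ most recent symbols, and (ii) choosing the feed-forward block and $\outMtx$ so that, after the softmax, the output equals $\qLM(\cdot \mid \text{that context})$.

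For step (i), I would use $n-1$ attention heads in a single layer together with a standard positional-encoding gadget: for each offset $i \in \{1, \dots, n-1\}$, head $i$'s query/key scores are maximized at a unique position, namely the one exactly $i$ steps before the current one, and this holds uniformly in the string length, so under hard attention the head retrieves exactly that position's value --- which we set to the one-hot encoding of its symbol. Concatenating the $n-1$ head outputs gives a vector encoding the full $(n-1)$-gram context; the \bos-padding makes this work verbatim for short prefixes, so the boundary conditionals $\qLM(\sym_1 \mid \varepsilon), \qLM(\sym_2 \mid \sym_1), \dots$ are handled by the same construction.

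For step (ii), I would realize the finite lookup $c \mapsto \qLM(\cdot \mid c)$ with the feed-forward block: the concatenated context vector sits on vertices of a hypercube, so one ReLU hidden unit per context $c$ --- firing iff the input matches the one-hot encoding of $c$, which is detectable by thresholding the inner product with the indicator of $c$ --- followed by a linear read-out produces the log-probability vector $\bigl(\log \qLM(\sym \mid c)\bigr)_{\sym \in \eosalphabet}$. Since a softmax applied to a log-probability vector is the identity on the simplex, $\pLNSM(\cdot \mid \strlt)$ then equals $\qLM(\cdot \mid c)$ exactly, and \cref{eq:lnlm} gives weak equivalence. I would also record the head/layer trade-off: rather than $n-1$ parallel heads, one can stack $n-1$ single-head layers, the $\ell$-th shifting the symbol $\ell$ positions back into the running representation; dually, a more expressive feed-forward transformation can absorb part of this bookkeeping --- this is the content of the finer-grained (multi-layer) version of the theorem.

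The main obstacle is exactly reproducing conditionals that assign probability $0$ to some symbol, since the softmax has full support and $\log 0$ is not finitely representable. I would resolve this either by restricting to $n$-gram LMs with strictly positive conditionals (which covers every smoothed $n$-gram model, i.e.\ the historically relevant ones) or, in general, by replacing the output softmax with a normalization admitting hard zeros, e.g.\ $\sparsemax$; this is exactly the route behind the sparse-attention variant. A secondary point needing care is making each ``look $i$ steps back'' attention score attain its maximum at a single position --- so that $\hardmax$ returns exactly one position rather than an average over ties --- and doing so independently of the string length, which is precisely where the choice of positional encoding does its work.
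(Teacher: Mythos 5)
Your construction for the hard-attention half of the claim is essentially the paper's: $\ngr-1$ heads whose query/key positional gadgets each uniquely select one of the $\ngr-1$ preceding positions, values equal to one-hot symbol encodings, a ReLU layer computing the logical \texttt{AND} over the retrieved one-hots to produce a one-hot encoding of the history (\cref{fact:and}), and an output matrix whose columns are the log-conditionals so that the softmax is the identity on each stored distribution; the multi-layer and single-head variants you sketch correspond to \cref{thm:transformers-n-gram-label-multi-layer,thm:transformers-n-gram-single}. One divergence: the paper handles zero conditionals neither by restricting to strictly positive models nor by changing the output normalization, but by letting $\outMtx$ take values in the extended reals, storing $\log 0 = -\infty$, which the softmax maps back to an exact zero; your restriction to positive conditionals would weaken the theorem as stated.

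The genuine gap is that you have misidentified what ``sparse attention'' means in the statement. It does not refer to replacing the output softmax by $\sparsemax$ to admit hard zeros; it refers to computing the \emph{attention weights} with $\sparsemax$ instead of $\hardmax$ (\cref{def:sparse-attention}), with the output normalization remaining a softmax. Your proposal therefore contains no argument for the sparse-attention half of the theorem. The missing ingredient is a proof that a $\sparsemax$-normalized head still places all of its mass on a single position: this holds only when the maximal attention score exceeds every other score by a fixed margin (at least $1$, \cref{lem:sparsemax-unique}), and arranging that margin uniformly over all string lengths forces one to abandon the bounded unit-vector positional encodings of the hard-attention construction---whose score gaps shrink to zero as $\tstep$ grows, so $\sparsemax$ would smear mass over many positions---in favour of unbounded encodings (entries $1$ and $\tstep$) and a scoring function such as $-\abs{\innerProd{\vq}{\vk}}$ that keeps the scores integer-separated. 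This is the content of \cref{lem:sparse-attention-lemma-1}; without it the sparse case does not follow from the hard case.
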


\subsection{An Aside about Boolean Recognition}
Fundamentally, \Cref{thm:informal} is about weak equivalence (\Cref{def:weak-equivalence}) between two LMs.
In this subsection, we make our case against treating LMs as \emph{recognizers}.
The most common manner of analyzing a language model as a recognizer is based on using its representations as an input to a classifier \citep{merrill-2019-sequential,merrill-etal-2020-formal}.
We recapitulate a common definition below.
\begin{definition} \label{def:classifier}
Let $\pLM$ be a representation-based LM with the representation function $\enc\colon \kleene{\alphabet} \to \R^\hiddDim$ and let $\funcg\colon \R^\hiddDim \to \set{0, 1}$ be a classifier.
The \defn{binary language} of $\pLM$ with $\funcg$ is defined as
\begin{equation}\label{eq:eps-support}
  \lang_\funcg\left(\pLM\right) \defeq  \set{ \str \in \kleene{\alphabet} \mid \funcg\left(\enc\left(\str\right)\right) = 1 }.
\end{equation}
\end{definition}
Related is the notion of truncated recognition.
\begin{definition}[\citet{hewitt-etal-2020-rnns}, Definition 4] \label{def:truncated}
Let $\pLM$ be a language model over $\kleene{\alphabet}$ and $\alpha > 0$.
The $\alpha$-\defn{truncated language} of $\pLM$ is defined as
\begin{align}
  \lang_\alpha\left(\pLM\right) \defeq \{ \str &\in \kleene{\alphabet} \mid \pLM\left(\eos \mid \str\right) \geq \alpha \\
  & \text{and } \pLM\left(\symt \mid \strlt\right) \geq \alpha \quad \forall \tstep \in \NTo{|\str|} \}. \nonumber
\end{align}
\end{definition}
There are many results in the literature treating transformers' ability to recognize languages in the sense of the two definitions above.
For instance, transformers are unable to recognize the Dyck language with more than one bracket type and the $\textsc{Parity}$ language in the sense of \cref{def:classifier} \citep{hahn-2020-theoretical}, but \emph{can} recognize \emph{bounded} Dyck languages in the sense of \cref{def:truncated} \citep{yao-etal-2021-self}.
Our indictment of analyzing $\lang_\funcg\left(\pLM\right)$ and $\lang_\alpha\left(\pLM\right)$ is that proceeding in such a manner disregards the probabilities assigned to strings by $\pLM$, which we view as essential to language modeling.
Moreover, \cref{def:classifier} depends on the form of the classifier $\funcg$ while \Cref{def:truncated} depends on the hyperparameter $\alpha$.
For example, positively classified strings from a language could have their conditional probabilities only slightly above the classification threshold and the negatively classified ones only slightly below the threshold \citep{hahn-2020-theoretical}, which hides the true distribution defined by the LM.
In this context, our contention is that $\lang_\funcg\left(\pLM\right)$ and $\lang_\alpha\left(\pLM\right)$ are not useful definitions for studying the representational capacity of LMs. 
Instead, we advocate for analyzing LMs as \emph{probabilistic} formal languages.

\subsection{Language Modeling with \texorpdfstring{\ngram{}}{n-gram}s}
The next-symbol probabilities in \ngram LMs are computed under the \ngram assumption.
\begin{assumption} \label{def:ngram}
    The \defn{\ngram{} assumption} states that the conditional probability of the symbol $\sym_\tstep$ given $\strlt$ only depends on $\ngr-1$ previous symbols $\str^{\tstep - 1}_{\tstep - \ngr + 1} \defeq \sym_{\tstep-1},\ldots,\sym_{\tstep-\ngr+1}$:\looseness=-1
    \begin{equation} \label{eq:n-gram-assumption}
        \pLNSM\left(\sym_\tstep\mid \str_{<\tstep}\right) = \pLNSM\left(\sym_\tstep \mid \str^{\tstep - 1}_{\tstep - \ngr + 1}\right).
    \end{equation}
\noindent We will refer to $\str^{\tstep - 1}_{\tstep - \ngr + 1}$ as the \defn{history} of $\sym_\tstep$.
\end{assumption}

\begin{figure}
    \centering
    \begin{tikzpicture}
        \node[align=center] at (0, 0.6) {The \quad quick \quad brown \quad fox \quad jumps \quad over \ldots};

        \draw[draw=none, rounded corners, fill=DarkBlue!15] (-4, -0.3) rectangle (0.9, 0.3);
        \draw[draw=none, rounded corners, fill=DarkBlue!30] (-0.5 + 0.45, -0.3) rectangle (0.9, 0.3);
        \draw[draw=none, rounded corners, fill=AccentBlue!15] (-
        3.1, -0.3-0.61) rectangle (2.3, 0.3-0.61);
        \draw[draw=none, rounded corners, fill=AccentBlue!30] (0.9, -0.3-0.61) rectangle (2.3, 0.3-0.61);
        \draw[draw=none, rounded corners, fill=LightBlue!15] (-1.6, -0.3-0.61-0.61) rectangle (3.5, 0.3-0.61-0.61);
        \draw[draw=none, rounded corners, fill=LightBlue!30] (2.3, -0.3-0.61-0.61) rectangle (3.5, 0.3-0.61-0.61);

        \node[align=center] at (-2, 0) {$\pLM\left(\text{\small fox} \mid \text{\small The quick brown}\right)$};
        \node[align=center] at (-1.05, -0.61) {$\pLM\left(\text{\small jumps} \mid \text{\small quick brown fox}\right)$};
        \node[align=center] at (0.35, -0.61-0.61) {$\pLM\left(\text{\small over} \mid \text{\small brown fox jumps}\right)$};
        \node[align=center] at (0.45, 0) {$\cdot$};
        \node[align=center] at (1.6, -0.61) {$\cdot$};
        \node[align=center] at (2.9, -0.61-0.61) {$\cdot$};
    \end{tikzpicture}
    \caption{An illustration of how a $4$-gram LM computes the probability of a string.
        All conditional probabilities can be computed in parallel and then multiplied into the probability of the entire string.}
    \label{fig:example-ngram}
\end{figure}

\paragraph{Padding.}
\cref{eq:n-gram-assumption} assumes the existence of $\ngr - 1$ preceding symbols that define the conditional distribution $\pLNSM\left(\sym_\tstep\mid \str_{<\tstep}\right)$.
To ensure this is the case even at the beginning of the string, it is standard to \emph{pad} the input string with $\ngr - 1$ \underline{b}eginning-\underline{o}f-\underline{s}tring symbols $\bos$.
For ease of notation, we index the $\ngr - 1$ $\bos$ tokens with indices $-\ngr + 2, \ldots, 0$ so that \ngram LMs conveniently fit the autoregressive factorization from \cref{eq:lnlm}.
We also define $\bosalphabet \defeq \alphabet \cup \set{\bos}$.\looseness=-1

Despite their simplicity, \ngram LMs have a storied place in language modeling \citep{6773024,1162650,10.5555/907280,1454428,4767370,10.5555/108235.108270,NIPS2000_728f206c,10.5555/944919.944966,Bengio2006,SCHWENK2007492,heafield-2011-kenlm,heafield-etal-2013-scalable}.
Because the conditional probabilities of \ngram LMs only depend on the previous $\ngr - 1$ symbols, different parts of the string can be processed independently, i.e., in parallel.
This facilitates a natural connection to transformer LMs since parallelizability is a prevalent feature of the architecture and one of its main advantages over other neural LMs such as RNN LMs \citep{Vaswani2017}.\looseness=-1

\subsection{Transformer Language Models}
Transformer LMs are LMs whose conditional distributions $\pLNSM\left(\symt \mid \strlt\right)$ are computed by a \defn{transformer}.
A transformer is a composition of multiple transformer \defn{layers}, each of which implements the \defn{attention mechanism}.
We give definitions of these building blocks in what follows.\looseness=-1

\begin{table*} \centering\footnotesize
    \begin{tabular}{@{}clp{8cm}@{}}
        \toprule
        Symbol & Type & Meaning \\
        \midrule
        $\NTo{N}$ & $\subset \N$ & The set $\set{1, \ldots, N}$ for $N \in \N$. \\
         $\sym$ & $\in \alphabet$ & A symbol, element of $\alphabet$. \\
        $\alphabet, \bosalphabet, \eosalphabet$ & alphabet & $\alphabet$ is a set of symbols, $\bosalphabet \defeq \alphabet \cup \set{\bos}$, $\eosalphabet \defeq \alphabet \cup \set{\eos}$ \\
        $\str$ & $\in \kleene{\alphabet}$ & A string over $\alphabet$. \\
        $\str^{\idxi}_{\idxj}$ & $\in \kleene{\alphabet}$ & A substring of $\str$, a string. \\
        $\onehot{\sym}$ & $\in \{0, 1\}^{\nsymbols}$ & One-hot encoding of the symbol $\sym \in \alphabet$. \\
        $\hiddDim$ & $\in \N$ & Size of the contextual representations in the transformer. \\
        $\simplexFun{N - 1}$ & $\subseteq \R^{N}$ & The $N-1$-dimensional probability simplex. \\ 
        $\tfscorefun$ & $\R^{\hiddDim} \times \R^{\hiddDim} \to \R$ & A scoring function. \\
        $\projfunc$ & $\R^N \to \simplexFun{N - 1}$ & A normalization function. \\
        $\qTransf$, $\kTransf$, $\vTransf$, $\oTransf$ & $\R^\hiddDim \to \R^\hiddDim$ & The query, key, value, and output functions. \\
        $\fTransf$ & $\R^\hiddDim \to \R^\hiddDim$ & The final transformer LM transformation function. \\
        $\enc$ & $\kleene{\alphabet} \to \R^\hiddDim$ & The string representation function. \\
        $\posInEmbedding$ & $\bosalphabet \times \N \to \R^\hiddDim$ & The position-augmented representation function. \\
        $\tfnumlayer$ & $\in \N$ & Number of layers. \\
        $\tfheadnum$ & $\in \N$ & Number of heads. \\
        $\tfheadCombine$ & $\R^{\tfheadnum \hiddDim} \to \R^\hiddDim$ & The head combining function. \\
        $\left(\cdot\;; \cdots; \cdot\right)$ & & Vertical concatenation operator of vectors or matrices. \\
        \bottomrule
    \end{tabular}
    \caption{A summary of the notation used in the paper.}
    \label{tab:notation}
\end{table*}

\paragraph{Notation.}
We use bold, unitalicized letters such as $\vx \in \R^\hiddDim$ to denote real-valued vectors and italicized letters $\evx_\idxj \in \R$ for their entries.
Capital bold letters such as $\mX \in \R^{N \times \hiddDim}$ denote matrices.
All vectors are \emph{column} vectors unless transposed.
We define the vertically stacking operator $\left(\cdot \; ; \cdots; \; \cdot\right)$, which denotes the vertical concatenation of the $\hiddDim$-dimensional \emph{column} vectors $\vx_1, \ldots, \vx_N$ into a $N\hiddDim$-dimensional vector $\left(\vx_1; \cdots; \vx_N\right) \in \R^{N \hiddDim}$ and the concatenation of the $\hiddDim$-dimensional \emph{row} vectors $\vx^\top_1, \ldots, \vx^\top_N$ into a matrix $\mX \in \R^{N \times \hiddDim}$ with $N$ rows and $\hiddDim$ columns.
Given the matrix $\mX = \left(\vx^\top_1; \cdots; \vx^\top_N\right)$, we write $\mX_\idxn = \left(\vx^\top_1; \cdots; \vx^\top_\idxn\right)$ for the submatrix composed of the first $n$ rows.
We call a function $\tfscorefun \colon \R^{\hiddDim} \times \R^{\hiddDim} \to \R$ whose purpose is to evaluate the compatibility of two vectors a \defn{scoring function}.
A \defn{normalization function} $\projfunc\colon \R^N \to \simplexFun{N - 1}$ maps vectors in $\R^N$ to $N$ probabilities.
Here, $\simplexFun{N - 1} \defeq \set{\vx \in \left[0, 1\right]^N \mid \sum_{\idxn = 1}^N \evx_\idxn = 1}$ is the $(N-1)$-dimensional probability simplex.
This notation is summarized in \cref{tab:notation}.

\paragraph{The Attention Mechanism.}
The attention mechanism works as follows.
It takes a \defn{\underline{q}uery} vector $\vq \in \R^{\hiddDim}$ and two matrices: The matrix $\mK \in \R^{N \times \hiddDim}$ of \defn{\underline{k}eys} and the matrix $\mV\in \R^{N \times \hiddDim}$ of \defn{\underline{v}alues} and computes a weighted average of the value vectors based on the compatibilities of the key vectors and the query vector, as determined by a scoring function $\tfscorefun$.
A formal definition is given below.
\begin{definition}[Attention Mechanism] \label{def:attention}
    Let $\tfscorefun$ be a scoring function and $\projfunc$ a normalization function.
    Let $\vq \in \R^{\hiddDim}$ be a query vector and let $\mK = \left(\vk^\top_1; \cdots; \vk^\top_N\right) \in \R^{N \times \hiddDim}$ and $\mV = \left(\vv^\top_1; \cdots; \vv^\top_N\right) \in \R^{N \times \hiddDim}$ be matrices of keys and values, respectively.
    An \defn{attention mechanism} $\attn\colon \R^\hiddDim \times \R^{N \times \hiddDim} \times \R^{N \times \hiddDim} \to \R^\hiddDim$ is defined as
    \begin{equation} \label{eq:attention-sum}
        \attn\left(\vq, \mK, \mV\right) \defeq \sum_{\idxn = 1}^{N} \evs_\idxn\vv_\idxn,
    \end{equation}
    where 
    \begin{equation}
        \vs \defeq \projfunc\left(\tfscorefun\left(\vq, \vk_1\right), \dots ,\tfscorefun\left(\vq, \vk_N\right) \right)
    \end{equation}
    is the vector of normalized scores between the query $\vq$ and the keys in $\mK$.
\end{definition}
The most standard implementation of the scoring function $\tfscorefun$ is the (scaled) inner product $\tfscorefun\left(\vq, \vk\right) \defeq \innerProd{\vq}{\vk}$.
Some of our results rely on this standard formulation.
However, some also rely on the more general, but still simple and just as efficiently computable scoring functions.

\paragraph{The Transformer Architecture.}
A transformer layer uses the attention mechanism to compute augmented representations $\vz_\tstep = \attn\left(\vq_\tstep, \mK_\tstep, \mV_\tstep\right)$ of the input representations $\mX_\tstep = \left(\vx_1; \cdots; \vx_\tstep\right)$.
The query $\vq_\tstep$, the keys $\mK_\tstep$, and values $\mV_\tstep$ are all transformations of the input representations $\mX_\tstep$.

\begin{definition} \label{def:transformer-layer}
Let $\qTransf, \kTransf, \vTransf, \oTransf\colon \R^\hiddDim\!\to\!\R^\hiddDim$ be the query, key, value, and \defn{\underline{o}utput} functions.
A \defn{transformer layer} is a function $\tflayer\colon\R^{\finaltstep \times \hiddDim} \to \R^{\finaltstep \times \hiddDim}$ that computes 
\begin{equation}
    \tflayer\left(\vx_1^\top; \ldots; \vx_\strlen^\top\right) = \left(\vz_1^{\top}; \ldots;  \vz_\finaltstep^{\top}\right) \in \R^{\finaltstep \times \hiddDim}
\end{equation}
for $\tstep \in \NTo{\finaltstep}$ where
\begin{subequations}
    \begin{alignat}{2} \label{eq:attn-block-1}
        \va_\tstep &\defeq \attn\left(\vq_\tstep, \mK_\tstep, \mV_\tstep\right) + \tflayerinputsy_\tstep &&\in \R^\hiddDim \\
        \tflayeroutputsy_\tstep &\defeq \oTransf\left(\va_\tstep\right) + \va_\tstep &&\in \R^\hiddDim. \label{eq:attn-block-2}
    \end{alignat}
\end{subequations}
Here, we define 
\begin{subequations}
    \begin{alignat}{2}
        \vq_\tstep &\defeq \qTransf\left(\vx_\tstep\right) &&\in \R^\hiddDim \\
        \mK_\tstep &\defeq \left(\kTransf\left(\vx_1\right)^\top; \cdots; \kTransf\left(\vx_\tstep\right)^\top\right) &&\in \R^{\tstep \times \hiddDim} \\
        \mV_\tstep &\defeq \left(\vTransf\left(\vx_1\right)^\top; \cdots; \kTransf\left(\vx_\tstep\right)^\top\right) &&\in \R^{\tstep \times \hiddDim}.
    \end{alignat}
    \end{subequations}
\emph{Note:} For simplicity, we do not include layer normalization. 
\end{definition}

Without further modification, the transformations applied by the transformer layer are position-invariant, which necessitates the addition of explicit positional information.
\begin{definition} \label{def:positional-encodings}
    A position-augmented symbol \defn{representation function} $\posInEmbedding\colon \alphabet \times \N \to \R^\hiddDim$ 
    is a function representing symbols and their positions as $\hiddDim$-dimensional vectors.
\end{definition}
Position-augmented symbol representation functions are often implemented as an addition or concatenation of separate symbol-only and position-only representation functions \citep{Vaswani2017}.
Here, we define it more generally as any function of the symbol and its position.
\begin{definition} \label{def:static-representations}
    A \defn{static encoding} $\staticRepr$ is a function $\staticRepr\colon \alphabet^\strlen \to \R^{\strlen \times \hiddDim}$ defined for any $\strlen \in \N$ as
    \begin{equation}
        \staticRepr\left(\str\right) \defeq \left(\posInEmbeddingFun{\symone, 1}^\top; \cdots; \posInEmbeddingFun{\sym_\strlen, \strlen}^\top\right).
    \end{equation}
\end{definition}

Multiple transformer layers are stacked into a transformer, which computes the (deep) contextual representations of all symbols in the string.
\begin{definition} \label{def:transformer}
    For $\tfnumlayer \in \N$, let $\tflayer_\tflayeridx$ for $\tflayeridx \in \NTo{\tfnumlayer}$ be transformer layers.
    Let $\staticRepr$ be a static encoding.
    An $\tfnumlayer$-layer \defn{transformer} $\tf$ is defined as
    \begin{equation} \label{eq:transformer-model}
        \tfFun{\staticRepr} \defeq \tflayer_\tfnumlayer \circ \cdots \circ \tflayer_1 \circ \staticRepr.
    \end{equation}
\end{definition}
A transformer computes the contextual representations of the symbols $\str = \sym_1 \cdots \sym_\strlen$ as 
\begin{equation}
    \left(\vx_1^{\tfnumlayer\top}; \cdots; \vx_\strlen^{\tfnumlayer\top}\right) \defeq \tf\left(\staticRepr\right)\left(\str\right).
\end{equation}
If $\staticRepr$ is clear from the context or arbitrary, we will omit it as an argument to $\tf$ and just write $\tf\left(\str\right)$.
\begin{definition} \label{def:enc}
    Let $\tf$ be a transformer, $\fTransf\colon \R^{\hiddDim} \to \R^{\hiddDim}$ the \underline{f}inal representation transformation function, and $\str \in \kleene{\alphabet}$ with $|\str| = \strlen$.
    We define\looseness=-1
    \begin{equation} \label{eq:enc}
        \tfencfun\left(\str\right) \defeq \fTransf\left(\vx_{\strlen}^\tfnumlayer\right), 
    \end{equation}
    where $\vx_{\strlen}^\tfnumlayer$ is the representation of the $\strlen\textsuperscript{th}$ symbol in $\str$ computed by $\tf$, i.e., $\left(\vx_1^{\tfnumlayer\top}; \cdots; \vx_\strlen^{\tfnumlayer\top}\right) = \tf\left(\str\right)$.
\end{definition}

\paragraph{Transformer Language Models.}
So far, we have only defined how the transformer architecture can be used to compute the contextual representations of the symbols.
To complete the definition, we define a transformer \emph{language model} as follows.
\begin{definition} \label{def:transformer-plnsm}
    A \defn{transformer LM} $\tfpLM$ is the representation-based autoregressive LM with the representation function $\tfencfun$ from \cref{eq:enc}.
    That is, $\tfpLM$ defines the conditional probability distributions
    \begin{align} \label{eq:transformer-plnsm}
        \tfpLM\left(\symt \mid \strlt \right) \defeq \softmaxfunc{\embedMtx\,\tfencfun\left(\strlt\right)}{\symt}.
    \end{align}
\end{definition}

\subsubsection{Variants of the Attention Mechanism}
In this subsection, we discuss many common variants of the attention mechanism.
First, \defn{multi-headed} attention uses $\tfheadnum$ \defn{attention heads} to compute $\tfheadnum$ representations of the symbols in the string.
The representations constructed by the different attention heads are concatenated into a long vector and projected down to the output size of a single head with a head-combiner function $\tfheadCombine$.
\begin{definition}
    For $\tfnumlayer, \tfheadnum \in \N$, let $\tflayer^h_\tflayeridx\colon\R^{\strlen \times \hiddDim} \to \R^{\strlen \times \hiddDim}, \tflayeridx \in \NTo{\tfnumlayer}, h \in \NTo{\tfheadnum}$ be transformer layers.
    Define $\tflayer_\ell \colon \R^{\strlen \times \hiddDim} \to \R^{\strlen \times \tfheadnum \hiddDim}$ as
    \begin{equation}
        \tflayer_\ell\left(\mX\right) \defeq \left(
            \tflayer^1_\tflayeridx\left(\mX\right)^\top; \cdots ; \tflayer^{\scaleto{\tfheadnum}{5pt}}_\tflayeridx\left(\mX\right)^\top
        \right)^\top.
    \end{equation}
    Furthermore, let $\tfheadCombine\colon \R^{\tfheadnum\hiddDim} \to \R^\hiddDim$.
    An $\tfnumlayer$-layer transformer with $\tfheadnum$ heads computes:
    \begin{align}
        \tfFun{\staticRepr} \defeq \tflayer_\tfnumlayer \circ \tfheadCombine \circ \cdots \circ \tfheadCombine \circ \tflayer_1 \circ \staticRepr,
    \end{align}
    where $\tfheadCombine$ is applied \emph{row-wise} to project the representations of $\tfheadnum$ heads to $\R^\hiddDim$.
\end{definition}

\paragraph{Attention types.}
Attention weights are computed by normalizing the scores $\tfscorefun\left(\vq, \vk_1\right), \dots ,\tfscorefun\left(\vq, \vk_\tstep\right)$.
The choice of the projection function $\projfunc$ determines the type of attention and has concrete implications on representational capacity \citep{hao-etal-2022-formal}.
\begin{definition}  \label{def:hard-attention}
    \defn{Hard attention} is computed with the $\hardmaxAvg$ projection function:
    \begin{equation}
        \hardmaxAvg\left(\vx\right)_\idxd \defeq \begin{cases}
            \frac{1}{m} &\ifcondition \idxd\in \argmax\left(\vx\right) \\
            0 &\otherwisecondition
        \end{cases}
    \end{equation}
    for $\idxd \in \NTo{\hiddDim}$, where $\vx \in \R^\hiddDim$ and $m \defeq |\argmax\left(\vx\right)|$ is the cardinality of the argmax set.
\end{definition}
We also introduce \emph{sparse} attention, which uses the sparsemax normalization function to compute the attention weights.
\begin{definition}  \label{def:sparse-attention}
    \defn{Sparse attention} is computed with the $\sparsemax$ projection function:
    \begin{equation}\label{eq:spmax}
        \sparsemaxfunc{\vx}{} \defeq \argmin_{\rvp\in \Simplexdminus} \norm{\rvp- \vx}^2_2.
    \end{equation}
\end{definition}

\section{Hard Attention Transformer LMs} \label{sec:hard-attention}
This section presents a set of results describing the representational capacity of hard attention transformer LMs.
Concretely, we show that transformer LMs with hard attention can represent \ngram LMs, either using $\ngr - 1$ heads (\cref{thm:transformers-n-gram-label}) or $\ngr - 1$ layers (\cref{thm:transformers-n-gram-label-multi-layer}).
Simulation is possible even with a single head and a single layer (\cref{thm:transformers-n-gram-single}) but might require a more elaborate set of non-linear transformations and positional encodings whose precision scales linearly with the string length.

\begin{restatable}{reTheorem}{singleLayerKHeadsTheorem} \label{thm:transformers-n-gram-label}
    For any \ngram LM, there exists a weakly equivalent single-layer hard attention transformer LM with $\ngr - 1$ heads.
\end{restatable}
\begin{proof}[Proof intuition]
    Given an \ngram LM $\pLM$, we can construct a weakly equivalent LM $\tfpLM$ defined by a transformer $\tf$ that looks back at the preceding $\ngr - 1$ positions using $\ngr - 1$ heads, each of them uniquely attending to exactly one position.
    The symbols attended to can be used to identify the full history, which can be used to access the conditional distribution over the next symbol.
    This is illustrated in \cref{fig:transformer-n-gram-label}.
    See \cref{sec:proofs-hard-attention} for the full proof.
\end{proof}

\cref{thm:transformers-n-gram-label} shows that transformer LMs with hard attention can represent \ngram LMs, establishing, to the best of our knowledge, the first concrete relationship between transformer LMs and probabilistic languages.
A natural follow-up question then is whether $\ngr - 1$ heads are \emph{necessary} to correctly simulate an \ngram LM.
Besides aiming to illuminate different mechanisms enabling the implementation of classical LMs, this question also follows the line of inquiry about the \emph{uniqueness} and \emph{interpretability} of the representations of formal models by neural LMs \citep{liu2023transformers}.
The following two theorems show that the intuitive construction using $\ngr - 1$ heads is far from unique: \cref{thm:transformers-n-gram-label-multi-layer} shows that a similarly simple simulation is possible with $\ngr - 1$ layers and a single head, while \cref{thm:transformers-n-gram-single} shows that even a transformer LM with a single head and a single layer can simulate an \ngram LM, albeit with more complex position invariant transformation $\fTransf$.
This suggests that there is no canonical way of determining whether a transformer LM has learned an \ngram LM by looking at individual components (e.g., positions attended to by the different heads).\looseness=-1

\begin{restatable}{reTheorem}{kLayersSingleHeadTheorem}  \label{thm:transformers-n-gram-label-multi-layer}
    For any \ngram LM, there exists a weakly equivalent $(\ngr - 1)$-layer hard attention transformer LM with a single head.
\end{restatable}
\begin{proof}[Proof intuition]
    Whereas the transformer LM constructed in \cref{thm:transformers-n-gram-label} used $\ngr - 1$ heads to look at all the $\ngr - 1$ positions of interest, an $\ngr - 1$-layer transformer LM can use the $\ngr - 1$ layers to look back at the \emph{immediately} preceding position and copy it forward $\ngr - 1$ times (keeping the current symbol there as well).
    After $\ngr - 1$ layers of such transformations, the entire history can be read from the current contextual representation.
    See \cref{sec:proofs-hard-attention} for the full proof.
\end{proof}

Apart from using hard attention, both transformer LMs used in \cref{thm:transformers-n-gram-label,thm:transformers-n-gram-label-multi-layer} rely on modeling assumptions often found in practical implementations of the transformer: The transformations $\qTransf, \kTransf$, and $\vTransf$ are linear functions, the scoring function is implemented as a dot-product and positional encodings are bounded.
This makes the results comparable to practical implementations. 
The following theorem, in contrast, shows that if we permit the use of less standard components, transformer LMs can identify the history of interest using only a single head and a single layer.
\begin{restatable}{reTheorem}{singleLayerSingleHeadTheorem}  \label{thm:transformers-n-gram-single}
    For any \ngram LM, there exists a weakly equivalent single-layer hard attention transformer LM with a single head.
\end{restatable}
\begin{proof}[Proof intuition]
    The bulk of this construction lies in the encoding $\str^{\tstep - 1}_{\tstep - \ngr + 1}$ in a vector that can be constructed by a single attention head in one layer.
    This is done by an attention head that 
    \begin{enumerate*}[label=\textit{(\arabic*)}]
        \item puts non-zero attention on only the previous $\ngr - 1$ symbols and
        \item encodes the identities and the positions of symbols in a $\bosnsymbols$-dimensional value vector.
    \end{enumerate*}
    The value vector can then be decoded into a one-hot encoding of $\str^{\tstep - 1}_{\tstep - \ngr + 1}$ by an $\ngr - 1$-layer MLP that defines $\fTransf$, which allows us to match the conditional probabilities of the \ngram LM as in \cref{thm:transformers-n-gram-label,thm:transformers-n-gram-label-multi-layer}.
    See \cref{sec:proofs-hard-attention} for the full proof.
\end{proof}

\section{Sparse Attention Transformer LMs} \label{sec:sparse-attention}
While the results in \cref{sec:hard-attention} concretely characterize the abilities of hard attention transformer LMs, the assumption of hard attention is somewhat removed from practical implementations of the model. 
Those most often rely on differentiable normalization functions, such as the softmax.\footnote{As noted in \cref{sec:intro}, the analysis of soft attention transformers requires a different type of analysis in terms of approximation of the probabilities. 
A complete study would have to consider the approximation over \emph{arbitrarily} long strings (since $\kleene{\alphabet}$ is an infinite set), which is difficult by simply scaling model parameters to a large constant.
We focus on exact simulation here, but conjecture that soft attention transformers can approximate LMs whose \emph{average} string length is finite.}
However, the full support of the softmax function makes the connection to formal models of computation difficult \citep{hahn-2020-theoretical}.
To bring the theoretical models closer to practical implementations yet still be able to make clear analogies to formal models of computation, we now consider \emph{sparse} attention transformers, which use the sparsemax normalization function. 
The sparsity allows sparse attention transformers to simulate \ngram LMs just like hard attention transformers while relying on differentiable operations.

\begin{restatable}{reTheorem}{singleLayerSparseTheorem} \label{thm:transformers-n-gram-label-sparse}
    For any \ngram LM, there exists a weakly equivalent single-layer sparse attention transformer LM with $\ngr - 1$ heads.
\end{restatable}
\begin{proof}[Proof intuition]
The intuition behind the simulation with sparse attention is similar to the hard attention one; each head attends to a single position, as illustrated in \cref{fig:transformer-n-gram-label}.
Effectively, the construction results in a sparse attention transformer that simulates hard attention.
In contrast to \cref{thm:transformers-n-gram-label,thm:transformers-n-gram-label-multi-layer}, we here require a model with \emph{unbounded} positional encodings and a non-linearly transformed dot-product scoring function.
Intuitively, the unbounded positional encodings are required to scale the unnormalized attention scores to differ enough for the sparsemax to focus on a single position.
The rest of the proof follows that of \cref{thm:transformers-n-gram-label}; see \cref{sec:proofs-sparse-attention} for the details.
\end{proof}

\cref{thm:transformers-n-gram-label-sparse} (representing an \ngram LM with $\ngr - 1$ heads) could naturally be extended to analogs of \cref{thm:transformers-n-gram-label-multi-layer} (representing an \ngram LM with $\ngr - 1$ layers) and \cref{thm:transformers-n-gram-single} (representing an \ngram LM with a single head and a single layer) using a similar adaptation of the construction from the hard attention case to the sparse attention one as in \cref{thm:transformers-n-gram-label-sparse}.

\section{Space Complexity}
In \cref{sec:hard-attention} and \cref{sec:sparse-attention}, we describe lower bounds that tell us what types of probability distributions transformer LMs \emph{can} represent, but do not say how \emph{efficiently} they can do so.
The space complexity of simulating \ngram LMs is discussed in this section.
We focus on hard-attention transformer LMs with multiple heads or multiple layers (\cref{thm:transformers-n-gram-label,thm:transformers-n-gram-label-multi-layer}) since their modeling assumptions (bar hard attention) are closest to practical implementations.
The constructive proofs of \cref{thm:transformers-n-gram-label,thm:transformers-n-gram-label-multi-layer} allow us to directly analyze the space requirements for the simulation of \ngram LMs, both in terms of 
\begin{enumerate*}[label=\textit{(\arabic*)}]
    \item the size of the contextual representations $\mX^h_\ell$ as well as 
    \item the number of bits required to represent the individual entries of the vectors $\vx^h_{\ell, \tstep}$.
\end{enumerate*}

\subsection{Scaling with Respect to the Number of Computational Steps} \label{sec:bits-scaling}
We first address the second point.
Specifically, we are interested in how the number of bits scales with respect to $\tstep$, the number of computational steps performed during the generation of a string $\str \in \kleene{\alphabet}$.
As summarized by \cref{tab:summary}, the models constructed in the proofs of \cref{thm:transformers-n-gram-label,thm:transformers-n-gram-label-multi-layer} use positional encodings with entries of the form $\sqrt{\frac{1}{\tstep}}$ for $\tstep \in \N$, i.e., they contain square roots of rational numbers.
This makes the scaling of the space complexity difficult, as square roots of rational numbers are not in general representable with a finite number of bits.
While this might seem discouraging, we emphasize that these specific positional encodings were only used to keep the contextual representations bounded and the scoring function in line with the original formulation \citep{Vaswani2017} and concurrent work \citep{merrill2023expressive}.
A closer look at the constructions in the proof of \cref{thm:transformers-n-gram-label,thm:transformers-n-gram-label-multi-layer,thm:transformers-n-gram-label-sparse} reveals that simpler (but unbounded) positional encodings with a less standard scoring function can be used to the same effect.
In particular, we can use positional encodings that contain entries of the form $\tstep$, which only require a \emph{logarithmic} number of bits with respect to $\tstep$.
Since such scaling is required to uniquely encode the positional information in general \citep{merrill-sabharwal-2023-parallelism}, this represents an asymptotically optimal scaling of the space complexity of the contextual representations.\footnote{The construction in \cref{thm:transformers-n-gram-single}, in contrast, relies on encoding the entire preceding string in a single dimension with one digit per position in the string, which requires a number of bits that scales linearly with respect to the string length. A simplification to a logarithmic number of bits does not seem as straightforward.}

Importantly, \ngram LMs are \emph{real-time}: they, by definition, generate a symbol at each step of the computation.
This means that the scaling of the space complexity with respect to the number of computation steps coincides with its scaling with respect to the length of the generated string $\str$---the scaling is logarithmic in $|\str|$.
This is in contrast to non-real-time models of computation which might not generate a symbol at each step of the computation; while those might still require an asymptotically optimal scaling with respect to $\tstep$, the additional computational steps that do not emit any symbol might mean that the space complexity is \emph{unbounded} with respect to the length of the generated string.
An example of such a model is a transformer LM simulating a (probabilistic) Turing machine, which would require the model to not emit symbols at some points of the computation \citep{nowak-etal-2023-representational,nowak-etal-2024-computational}.

\subsection{The Dimensionality of the Contextual Representations}
We now discuss the size of the contextual representations required for the simulation of \ngram LMs.
From a high level, we have to consider two stages: 
\begin{enumerate*}[label=\textit{(\arabic*)}]
    \item the contextual representations $\vx_{\ell, \tstep}^h$ of the different layers and heads and
    \item the size of the final representation $\tfencfun\left(\str\right)$.
\end{enumerate*}
The contextual representations $\vx_{\ell, \tstep}^h$ in stage \textit{(1)} are composed of the symbol and positional encodings.
The symbol representations include (two copies of) the one-hot encodings while the positional encodings include between two and $2 \ngr$ dimensions encoding positional information. 
This means that the per-head and per-layer space complexity scales with $\nsymbols$ and $\ngr$.
For stage \textit{(2)} we use the one-hot encodings of the entire \emph{history} $\str^{\tstep - 1}_{\tstep - \ngr + 1}$, with which we index the matrix of $\bosnsymbols^{\ngr - 1}$ conditional probabilities defined by the \ngram LM. 
While the size of $\tfencfun\left(\str\right)$ is theoretically only lower-bounded by $|\eosalphabet|$ \citep{yang2018breaking,svete-cotterell-2023-recurrent},\footnote{This is because the (logits of the) conditional probabilities can span a $|\eosalphabet|$-dimensional space.} reducing its size requires a lower-rank decomposition of the matrix of conditional probabilities and a corresponding reparametrization of the contextual symbol representation.
This might require a blow-up in the number of bits required to represent individual dimensions, or, in general, result in a real-valued vector that could not be represented on a finite-precision system.
Altogether, most of the space complexity of the contextual representations comes from the one-hot encodings in $\tfencfun\left(\str\right)$, which require $\bosnsymbols^{\ngr - 1}$ dimensions.\looseness=-1

The discussion in this section can be summarized by the following theorem.
\begin{restatable}{reTheorem}{spaceBoundsThm}
    Let $\pLM$ be an \ngram LM over the alphabet $\alphabet$. 
    There exists a weakly equivalent hard-attention transformer LM with contextual representations $\vx$ of size $\bigO{\ngr \nsymbols}$ and representation $\tfencfun\left(\str\right)$ of size $\bigO{|\alphabet|^{\ngr - 1}}$.
    Each of $\vx$'s entries can be represented with $\bigO{\log_2\left(|\str|\right)}$ bits for $\str \in \kleene{\alphabet}$.
\end{restatable}   

\section{Discussion and Related Work} \label{sec:discussion}

To the best of our knowledge, \cref{sec:hard-attention} and \cref{sec:sparse-attention} provide the first results on the probabilistic representational capacity of transformer LMs.

\paragraph{The relevance of \ngram LMs to modern LMs.}
One might rightfully question the utility of connecting the state-of-the-art language modeling architecture to \ngram LMs.
LMs based on the \ngram assumption indeed constitute some of the simplest and least expressive classes of probability distributions.
Nevertheless, \ngram LMs provide a useful playground and theoretical foundation for contextualizing and interpreting the inner workings of modern LMs.
For example, \ngram LMs have been found to comprise a crucial component of \emph{in-context learning}, where attention heads in different layers of the model together identify individual \ngram{}s and base their predictions on the presence of such \ngram{}s \citep{olsson2022context,akyürek2024incontext}.
The presence of specific \ngram{}s might therefore present the foundations of in-context-based knowledge of transformer LMs.
As such, understanding the requirements for correct simulation of \ngram LMs is important for a thorough grasp of the abilities of LMs to learn in context.
Encouraging \ngram{}-LM-based learning has also been observed to aid in-context learning abilities \citep{akyürek2024incontext}.
Existing work has also linked \ngram LMs to other neural network architectures such as one-dimensional convolutional neural networks \citep{merrill-2019-sequential}.
Moreover, the theoretical grounding in classical formal language theory makes precise statements about \ngram LMs possible while their simplicity makes them inherently interpretable and easy to analyze.
\ngram LMs also have a cognitive interpretation \citep{Jager2012-kv}.
Most importantly, however, \ngram LMs lend themselves to \emph{paralellized} processing, which affords a succinct and natural connection to transformer LMs and has been suggested to be a crucial part of any theoretical treatment of transformer LMs \citep{strobl2023transformers,merrill-sabharwal-2023-parallelism}.

\paragraph{Understanding the limitations and abilities of hard attention transformer LMs.}
Our results are the strongest in the hard attention setting, which follows the trend of using hard attention in theoretical treatments.
Hard attention makes singling out the important aspects of the string (in our case, the history) possible.
Concretely, we showcase three different mechanisms that make it possible for transformer LMs to implement \ngram LMs with hard attention and investigate the role of the number of heads and layers.
Particularly, our constructions suggest a possible mechanism in which the different transformer heads or layers can specialize in focusing on different positions in the string, a feature that has previously been suggested as an explanation of how transformer LMs process strings \citep{elhage2021mathematical} and has been observed in trained transformer LMs \citep{olsson2022context,akyürek2024incontext}.\footnote{Note that the constructions presented in this paper are purely meant to showcase the existence of a mechanism that can be used to simulate \ngram LMs; we do not suggest that the same mechanisms will be employed by models used in practice, which is also why do not present any empirical results.
    For example, the use of the sparse one-hot encodings differs from the standard dense representations of symbols.\looseness=-1}
Our presentation of multiple orthogonal mechanisms that can simulate \ngram LMs equivalently well is another confirmation of the observation that algorithmic principles learned or implemented by neural LMs do not always correspond to a \emph{single} intuitive implementation of a formal model of computation, which has concrete implications on interpretability methods for the architecture. 
This has been observed in practical scenarios and warns us that focusing on individual components of the model (that is, using myopic interpretability methods) might result in misleading interpretability results \citep{wen2023transformers}.\looseness=-1

\paragraph{Probabilistic representational capacity.}
We augment existing literature by providing an explicit connection between transformer LMs and \ngram language models.
In line with work comparing transformers to circuits \citep{merrill-sabharwal-2023-parallelism,merrill2023expressive,weiss21a,hao-circuits,merrill2023logic,merrill-etal-2022-saturated,chiang-tighter,angluin2023masked}, we also show the utility of analyzing transformers with parallelizable models of computation, which go hand in hand with the parallelizable nature of the transformer architecture.
Moreover, the formalization of an LM with the output matrix indexed by the contextual representation (cf. \cref{def:transformer-plnsm}) makes it easy to connect existing results on the expressivity of the transformer architecture with the probabilistic setting by defining a mapping from the contextual representation to the conditional probability distribution through the output matrix.
We suppose other simple and parallelizable classes of distributions might lend themselves well to similar probabilistic treatments; probabilistic analysis may be particularly interesting in the context of circuit complexity with sigmoid-activated circuits \citep{185447}, \citep[Chapter 4]{mathNN}.

\paragraph{Connection to (sub-)regular languages}
This work focuses on connecting transformer LMs to \ngram LMs, which are a special instance of the more general class of \emph{sub-regular LMs}.
In words, sub-regular LMs are LMs that can be described without using the full power of the probabilistic finite-state automata \citep{Jager2012-kv}.
In this sense, sub-regular LMs fall below regular languages in the Chomsky hierarchy and themselves define their own hierarchy \citep{Jager2012-kv,heinz-rogers-2013-learning,avcu2017subregular}.
For example, some classes of sub-regular languages do not require sequential processing of the input string that is usually required by finite-state automata for correct recognition.
The intuitive connection between transformer LMs and \ngram LMs encourages further work on the connections between other classes of (sub-)regular LMs and transformer LMs.
Transformers have been linked to (sub)-regular languages by \citet{yao-etal-2021-self} and \citet{liu2023transformers}.
\citet{yao-etal-2021-self} study the ability of transformers to generate bounded hierarchical languages using a transformer but do not extend their analysis to the fully probabilistic case.\footnote{The natural connection of the transformer architecture to both bounded hierarchical languages as well as to \ngram LMs is interesting since those two classes of LMs can both be represented particularly \emph{efficiently} by \emph{recurrent} neural LMs \citep{svete2024theoretical}.}
\citet{liu2023transformers} connect transformers with both hard and soft attention to general finite-state automata (FSAs), which define a strictly larger set of languages than \ngram languages.
This additional generality, however, comes with some caveats. 
\citet[Theorem 1]{liu2023transformers}---their most general result---relies on a model whose depth \emph{scales} (logarithmically) with the string length. 
As such, no particular finite-size transformer construction can simulate FSAs on strings of \emph{arbitrary} length (which is required for equivalence). 
This is in contrast to our results, which feature transformers of \emph{fixed size} with respect to the string length. 
While \citet[Theorem 2]{liu2023transformers} also provides a result using a finite-depth transformer for a subset of FSAs, that construction results in a network much bigger and deeper than ours.
For example, their construction would result in $\left(\nsymbols^{\ngr - 1}\right)^2 \left(\ngr - 1\right) \log{\nsymbols}$ layers in contrast to $\ngr - 1$ layers with a single head in our case. 
Our focus on \ngram LMs thus allows for a more compact and simpler representation. 
Worth noting is that despite being close to our analysis, none of the related work treats probability distributions over strings but rather focuses on the binary decision of whether a string belongs to a language or not based on the conditional probabilities output by the model, or, in the case of \citet{liu2023transformers}, only the computation of state sequences.
Transformer LMs are studied probabilistically by \citet{xie2022an}, who provide a Bayesian interpretation of their ability to learn \emph{in context} by studying the learning abilities of hidden Markov models (HMMs).
While HMMs are equivalent to probabilistic finite-state automata, \citet{xie2022an} do not connect the in-context learning ability to concrete representational capacity results.
Rather, they seek to \emph{explain} the behavior of in-context learning as implicit Bayesian inference.

\section{Conclusion}
We study the representational capacity of transformer LMs with \ngram LMs.
We show how the parallelizable nature of \ngram LMs is easy to capture with the transformer architecture and provide multiple lower bounds on the probabilistic representational capacity of transformer LMs.
Concretely, we show that transformer LMs can represent \ngram LMs both with hard and sparse attention, exhibiting multiple mechanisms transformer LMs can employ to simulate \ngram LMs.
Altogether, our results reinforce the utility of non-sequential models of computation for the study of transformers, particularly in the language modeling setting.\looseness=-1

\section*{Limitations}

We connect transformer LMs to \ngram LMs because of their parallelizable nature and their traditional popularity in NLP.
However, \ngram LMs describe a very simple class of LMs, meaning that the lower bounds are somewhat less relevant than the characterization in terms of more expressive formal models of computation would be.
Accordingly, we expect that the lower bounds are somewhat loose and that transformer LMs can represent more than \ngram LMs, which is also in line with the empirical success of transformer LMs.
We leave it to future work to tighten the established lower bounds.

As with most theoretical investigations of transformers, our results are strongest and the most precise in the hard attention setting.
However, hard attention is not used in practice, which limits the applicability of the results.
The constructions presented in this paper are also purely meant to showcase the existence of a mechanism that can be used to simulate \ngram LMs.
They do not suggest that the same mechanisms will be learned by models used in practice.
Indeed, the very sparse representations are not in line with the common dense contextual representations usually learned by trained models.\looseness=-1

We also only focus on \emph{lower bounds} of the representational capacity.
We do not consider any upper bounds and existing results for similar models to ours suggest that the lower bound is indeed somewhat loose \citep{yao-etal-2021-self}.\footnote{For example, we cannot say that the lower bounds imply any limitations of hard attention transformer LMs.}
That is, we expect that transformer LMs can represent much more than \ngram LMs, and expect that many of the existing results on the computational power of such models can be extended to the probabilistic setting.\looseness=-1

While we present a comprehensive analysis of transformer LMs in the context of \ngram LMs, we do not consider various aspects of the relationship that could be interesting.
This is done to keep the presentation focused and concise.
For example, we do not consider whether such simulations can be \emph{learned} from data, an interesting avenue for future research.
Lastly, note that while we focus here specifically on the commonly deployed transformer-based \emph{language models}, there are many other interesting applications of transformers, such as encoder-only acceptors of unweighted languages.
These applications are better covered by existing work.\looseness=-1

\section*{Ethics Statement}
The paper provides a way to theoretically analyze language models.
To the best knowledge of the authors, there are no ethical implications of this paper.\looseness=-1

\section*{Acknowledgements}
Ryan Cotterell acknowledges support from the Swiss National Science Foundation (SNSF) as part of the ``The Forgotten Role of Inductive Bias in Interpretability'' project.
Anej Svete is supported by the ETH AI Center Doctoral Fellowship.
We thank Tim Vieira and Shannon Veitch for helpful discussions and comments on the manuscript.

\bibliography{anthology,custom}

\newpage{}

\appendix
\onecolumn

\section{Modelling Assumptions}
As encouraged by \citet{strobl2023transformers}, we provide in \cref{tab:summary} a short summary of the assumptions behind the specific transformer architecture we consider in this work.
This aims to facilitate the placement of the results in the context of existing work and to make the results more accessible to the reader.
\begin{table}[h!] \centering\footnotesize
    \begin{tabular}{@{}lllllp{2.5cm}l@{}}
        \toprule
        Lower bound                          & PE          & Precision  & Attention                                                                                                & Attention    & Architecture                                                    & Notes                                                     \\
        \midrule
        $\ni$ \ngram LMs                       & $\begin{pmatrix}
                  \sqrt{\frac{1}{\tstep + k}} \\
                  \sqrt{1 - \frac{1}{\tstep + k}}
              \end{pmatrix}_{k = 0, \ldots, \ngr - 1}$ & $\Q$, $\R$ & hard & decoder-only & $\ngr - 1$ heads, $1$ layer                                      & \cref{thm:transformers-n-gram-label}                    \\
        $\ni$ \ngram LMs                      & $\begin{pmatrix}
                  \sqrt{\frac{1}{\tstep}} \\
                  \sqrt{1 - \frac{1}{\tstep}} \\
                  \sqrt{\frac{1}{\tstep + 1}} \\
                  \sqrt{1 - \frac{1}{\tstep + 1}}
              \end{pmatrix}$ & $\Q$, $\R$ & hard                                                                                                     & decoder-only & $1$ head, $\ngr - 1$ layers                                       & \cref{thm:transformers-n-gram-label-multi-layer}        \\
        $\ni$  \ngram LMs                     & $1, \tstep, 10^{-\tstep}$         & $\Q$, $\R$ & hard                                                                                                     & decoder-only & $1$ head, $1$ layer                                             & \cref{thm:transformers-n-gram-single} \\
        $\ni$  \ngram LMs                     & $1, \tstep$         & $\Q$, $\R$ & sparse                                                                                                     & decoder-only & $\ngr - 1$ heads, $1$ layer                                             & \cref{thm:transformers-n-gram-label-sparse} \\
        \bottomrule
    \end{tabular}
    \caption{A summary of the main assumptions about the models in the style of \citet[][Table 1]{strobl2023transformers}. Hard attention here refers to average-hard in the vocabulary of \citet{strobl2023transformers}.}
    \label{tab:summary}
\end{table}

\section{Proofs: Hard Attention}
This section provides detailed proofs of all theorems about the representational capacity of hard-attention transformer LMs stated in the main part of the paper.
\subsection{Computing Logical \texorpdfstring{\texttt{AND}}{AND} with an MLP} \label{sec:and}

\begin{definition} \label{def:mlp}
A $\ReLU$-activated \defn{multi-layer-perceptron} (MLP) $\mlp\colon\R^N\to\R^M$ is a function defined as the composition of functions $\vfunc_1, \cdots, \vfunc_L$ 
\begin{equation}
    \mlp\left(\vx\right) \defeq \left( \vfunc_L \circ \vfunc_{L-1} \circ \cdots \circ \vfunc_1\right) \left(\vx\right)
\end{equation}
where each $\vfunc_\ell$ for $\ell \in \NTo{L}$ is defined as
\begin{subequations}
\begin{align}
    \vfunc_\ell\left(\vx\right) &\defeq \ReLU\left(\mW_\ell\vx + \vb_\ell \right)\quad \ell\in \NTo{L-1}\\
    \vfunc_L\left(\vx\right) &\defeq \mW_L\vx + \vb_L
\end{align}
\end{subequations}
where $\mW_\ell\in\R^{N_\ell\times M_\ell}$ is a weight matrix with dimensions $N_\ell$ and $M_\ell$ specific to layer $\ell$, $\vb_\ell\in\R^{M_\ell}$ is a bias vector.
We refer to MLPs by the number of hidden layers, e.g., a one-layer-MLP is an MLP with one \emph{hidden} layer.
\end{definition}

In our construction, simulating a \ngram LM with a transformer LM requires a component of the transformer to perform the logical \texttt{AND} operation between specific entries of binary vectors $\vx \in \B^\hiddDim$.
The following lemma shows how this can be performed by an MLP with appropriately set parameters.
\begin{lemma} \label{fact:and}
    Consider $\idxm$ indices $\idxi_1, \ldots, \idxi_\idxm\in \NTo{\hiddDim}$ and vectors $\vx, \vv \in \B^\hiddDim$ such that
    \begin{equation}
        \evv_{\idxi} = \ind{\idxi \in \set{\idxi_1, \ldots, \idxi_\idxm}},
    \end{equation}
    i.e., with entries $1$ at indices $\idxi_1, \ldots, \idxi_\idxm$.
    Then, it holds for the MLP $\mlp\left(\vx\right) \defeq \ReLU\left(\vv^\top \vx - \left(\idxm - 1\right) \right)$ that 
    \begin{equation}
        \mlp\left(\vx\right) = 1 \text{ if and only if } \evx_{\idxi_\idxk} = 1 \text{ for all } \idxk = 1, \ldots, \idxm.
    \end{equation}
    In other words,
    \begin{equation}
        \mlp\left(\vx\right) = \evx_{\idxi_1} \wedge \cdots \wedge \evx_{\idxi_\idxm}.
    \end{equation}
\end{lemma}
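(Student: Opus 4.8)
The plan is to verify the claimed equivalence by a direct case analysis on the value of the inner product $\vv^\top \vx$, using the fact that both $\vv$ and $\vx$ are binary vectors and that $\vv$ is the indicator of the index set $\set{\idxi_1, \ldots, \idxi_\idxm}$.

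First I would observe that since $\evv_\idxi = \ind{\idxi \in \set{\idxi_1, \ldots, \idxi_\idxm}}$, the inner product telescopes to a sum over just the relevant indices: $\vv^\top \vx = \sum_{\idxi = 1}^\hiddDim \evv_\idxi \evx_\idxi = \sum_{\idxk = 1}^\idxm \evx_{\idxi_\idxk}$. Because each $\evx_{\idxi_\idxk} \in \set{0, 1}$, this sum is an integer between $0$ and $\idxm$, and it equals $\idxm$ if and only if $\evx_{\idxi_\idxk} = 1$ for every $\idxk \in \NTo{\idxm}$ — that is, exactly when the logical \texttt{AND} of those entries is $1$.

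Next I would plug this into the MLP and split into two cases. If all $\evx_{\idxi_\idxk} = 1$, then $\vv^\top \vx = \idxm$, so $\mlp\left(\vx\right) = \ReLU\left(\idxm - \left(\idxm - 1\right)\right) = \ReLU\left(1\right) = 1$. If at least one $\evx_{\idxi_\idxk} = 0$, then $\vv^\top \vx \leq \idxm - 1$, so $\vv^\top \vx - \left(\idxm - 1\right) \leq 0$ and $\mlp\left(\vx\right) = \ReLU\left(\text{something} \leq 0\right) = 0$. This establishes both the ``if and only if'' and the stated identity $\mlp\left(\vx\right) = \evx_{\idxi_1} \wedge \cdots \wedge \evx_{\idxi_\idxm}$, since the right-hand side is $1$ precisely in the first case and $0$ precisely in the second.

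I do not anticipate any real obstacle here: the statement is essentially a one-line arithmetic identity about $\ReLU$ applied to an affine function of a bounded integer, and the only thing to be slightly careful about is confirming that the binary-ness of $\vx$ (not merely nonnegativity) is what forces $\vv^\top\vx$ to land exactly on an integer, so that the threshold at $\idxm - 1$ cleanly separates the ``all ones'' case from every other case. One should also note in passing that $\mlp$ as written is a valid MLP in the sense of \cref{def:mlp} — it is the composition of a single affine map $\vx \mapsto \vv^\top \vx - (\idxm - 1)$ with a $\ReLU$, i.e., it has no hidden layer in the terminology used there, or can be padded to one if a hidden layer is formally desired.
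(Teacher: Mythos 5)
Your proposal is correct and follows essentially the same argument as the paper: both observe that $\vv^\top\vx$ is an integer in $\{0,\ldots,\idxm\}$ equal to $\idxm$ exactly when all the indexed entries are $1$, so that subtracting $\idxm-1$ and applying $\ReLU$ yields $1$ in that case and $0$ otherwise. Your version is simply a slightly more explicit write-up of the paper's two-line proof.
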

\begin{proof}
    By the definition of $\vv$, $\inner{\vv}{\vx} \leq \idxm$ for all $\vx \in \B^\hiddDim$.
    Furthermore, $\inner{\vv}{\vx} = \idxm$ if and only if $\evx_{\idxi_\idxk} = 1$ for all $\idxk = 1, \ldots, \idxm$.
    The $\ReLU$ function maps all other values to $0$ and does not change the output value $1$ in this case.
\end{proof}

\subsection{Proofs of the Hard Attention Case} \label{sec:proofs-hard-attention}
This subsection contains all proofs of the representational capacity of hard attention transformer LMs.
We tackle three cases: the simulation with $\ngr - 1$ heads and a single layer, with $\ngr - 1$ layers and a single head, and lastly, the simulation with a single head and a single layer.
All sections first outline the intuition behind the proofs and then provide the details in the form of finer-grained lemmata.

\subsubsection{Simulation with \texorpdfstring{$\ngr - 1$}{n-1} Heads: The Intuition} \label{sec:construction-intuition}
We now outline the intuition behind the construction of a hard attention transformer LM simulating an \ngram LM, as first presented in \cref{fig:transformer-n-gram-label}.\footnote{For simplicity, we disregard the role of residual connections in the following outline. Residual connections are, however, considered in the full proof later.}
To ease the exposition, we start with the final step of the construction: Assuming we have identified the appropriate history $\str^{\tstep - 1}_{\tstep - \ngr + 1}$ after combining the head values using the head combining function $\tfheadCombine$, we show how $\tfpLM$ can encode the conditional probability distribution $\pLNSM\left(\sym_\tstep\mid\str_{\tstep- \ngr + 1 : \tstep - 1}\right)$.
The intuition of this step is simple: Knowing what the individual $\pLNSM\left(\sym_\tstep\mid\str^{\tstep - 1}_{\tstep - \ngr + 1}\right)$ for $\sym_\tstep \in \eosalphabet$ are, we can simply put their logits into a vector and combine the constructed vectors for all possible histories into the output matrix $\outMtx$:\footnote{To be able to take the $\log$ of $0$ probabilities, we work over the set of \emph{extended} reals $\Rex = \R \cup \set{-\infty, \infty}$.}\textsuperscript{,}\footnote{Throughout the paper, we implicitly index the matrices directly with symbols and histories. We assume that the symbols and histories are ordered in some way and that the matrices are ordered accordingly.}
\begin{equation}
    \eOutMtx_{\sym, \str^{\tstep - 1}_{\tstep - \ngr + 1}} \defeq \log{\pLNSM\left(\sym_\tstep\mid\str^{\tstep - 1}_{\tstep - \ngr + 1}\right)}
\end{equation}
In the following, we write $\tfencfun\left(\str_{<\tstep}\right)$ as shorthand notation for $\tfencfun\left(\str_{<\tstep}\right) \defeq \fTransf\left(\vx_{\tstep - 1}^\tfnumlayer\right)$ (i.e., the representation which is linearly transformed by $\outMtx$ to compute $\pLM\left(\symt \mid \strlt\right)$ after normalization) where $\mX^\tfnumlayer = \tf\left(\staticRepr\right)\left(\strlt\right)$.
If we one-hot encode the identified history with $\tf$ as
\begin{equation}
    \tfencfun\left(\str_{<\tstep}\right) \defeq \onehot{\str^{\tstep - 1}_{\tstep - \ngr + 1}}
\end{equation}
we can then, using the formulation of the transformer LM from \cref{def:transformer-plnsm}, use the $\tfencfun\left(\str_{<\tstep}\right)$ to look up the appropriate column in $\outMtx$ containing the logits of the conditional probabilities given the identified history for all possible $\sym_\tstep \in \eosalphabet$, i.e., $\left(\outMtx \ \tfencfun\left(\str_{<\tstep}\right)\right)_\sym = \log\pLNSM\left(\sym \mid\str^{\tstep - 1}_{\tstep - \ngr + 1}\right)$.

We now consider the preceding step of the simulation: Identifying the history given that the $\ngr - 1$ heads identified the symbols $\sym_1, \ldots, \sym_{\ngr - 1}$ in the positions they attended to.
If we concatenate the values of the $\ngr - 1$ heads into a vector $\vv$, this vector of size $\left(\ngr - 1\right)|\eosalphabet|$ will contain the \defn{multi-hot} representation of the history of interest:
\begin{equation}
    \vv = \begin{pmatrix}
        \onehot{\sym_1} \\
        \vdots          \\
        \onehot{\sym_{\ngr - 1}}
    \end{pmatrix}
\end{equation}
and $\vv_{\idxi |\eosalphabet| + \idxj} = 1$ if and only if $\symordering\left(\sym_\idxi\right) = \idxj$ for a bijection $\symordering\colon \eosalphabet \to \NTo{|\eosalphabet|}$ that determines the indices of the one-hot representations of the symbols.
We would then like to transform this vector into a vector $\vu \in \R^{|\alphabet|^{\ngr - 1}}$ such that
\begin{equation}
    \evu_{\idxi} = 1 \iff \idxi = \anOrdering\left(\sym_1, \ldots, \sym_{\ngr - 1}\right)
\end{equation}
for a bijection $\anOrdering\colon \underbrace{\eosalphabet \times \cdots \times \eosalphabet}_{\ngr - 1\text{ times}} \to \NTo{|\eosalphabet|^{\ngr - 1}}$.
This can be equivalently written as
\begin{equation}
    \evu_{\idxi} = 1 \iff \evv_{\idxj |\eosalphabet| + \symordering\left(\sym_\idxj\right)} = 1 \text{ for all } \idxj = 1, \ldots, \ngr - 1
\end{equation}
where $\idxi = \anOrdering\left(\sym_1, \ldots, \sym_{\ngr - 1}\right)$.
This is an instance of performing the logical \texttt{AND} operation, which can be implemented by an MLP as described in \cref{fact:and}.
This MLP will form the transformation $\tfheadCombine$ combining the information obtained from all the heads of the transformer.

This brings us to the final part of the proof: Identifying the symbols at the previous $\ngr - 1$ positions by the $\ngr - 1$ transformer heads.
To show how this can be done, let us consider the parameters we can still set to define a transformer:
\begin{itemize}[itemsep=0.5pt,parsep=0.5pt]
    \item The position-augmented symbol representations $\posInEmbedding$.
          Inspired by concurrent work from \citet{merrill2023expressive}, we use the representations of the form
          \begin{equation}
              \posInEmbeddingFun{\symt, \tstep} = \begin{pmatrix}
                  \onehot{\symt} \\
                  \sqrt{\frac{1}{\tstep}} \\
                  \sqrt{1 - \frac{1}{\tstep}} \\
                  \sqrt{\frac{1}{\tstep + 1}} \\
                  \sqrt{1 - \frac{1}{\tstep + 1}} \\
                  \vdots \\
                  \sqrt{\frac{1}{\tstep + \ngr - 1}} \\
                  \sqrt{1 - \frac{1}{\tstep + \ngr - 1}}
              \end{pmatrix} \in \R^{2 \ngr}
          \end{equation}
          This results in vectors $\posInEmbedding\left(\sym_\tstep, \tstep\right)$ of size $|\eosalphabet| + 2 + 2 \left(\ngr - 1\right)$.
          Note that such a symbol representation function can be implemented by concatenating or adding symbol- ($\onehot{\symt}$) and position- ($\sqrt{\frac{1}{\tstep}}, \ldots, \sqrt{1 - \frac{1}{\tstep + \ngr - 1}}$) specific components, which is in line with most practical implementations of the transformer architecture.
    \item The attention scoring function $\tfscorefun$.
          We will use the standard dot-product scoring function
          \begin{equation} \label{eq:transformer-ngram-function}
              \tfscorefun\left(\vq, \vk\right) \defeq \innerProd{\vq}{\vk}.
          \end{equation}
          $\tfscorefun$ will, together with the positional encodings, allow us to easily single out the relevant positions in the string.
    \item The parameters of each of the attention heads, that is, the transformations $\qTransf$, $\kTransf$, and $\vTransf$.
          Each of those will take the form of a linear transformation of the symbol (and positional) representations.
          We describe them and their roles in more detail below.
\end{itemize}
The parameters of all the heads will be identical, with the only difference being a single parameter that depends on the ``index'' of the head, $h$.
In the following, we describe the construction of a single head.
At any time step $\tstep$ (i.e., when modeling the conditional distribution $\pLNSM\left(\sym_\tstep \mid \str_{<\tstep}\right)$), the head $h$ will attend to the symbol at position $\tstep - h$, $\sym_{\tstep - h}$.
In \cref{fig:transformer-n-gram-label}, for example, \texttt{Head $3$} attends to the position $\tstep - 3$, which is denoted by the stronger arrow to that position.
We now describe the individual transformations $\qTransf_h$, $\kTransf_h$, $\vTransf_h$, and $\oTransf_h$ of the head $h$.
All of them will be \emph{affine} transformations.
Since we are considering only the first layer of the transformer, we can think of the inputs to the layer as the original symbol representations together with their position encodings (rather than some contextual representations at higher levels).
As mentioned, the head $h$ will be responsible for identifying the symbol at position $\tstep - h$.
Therefore, we want it to put all its attention to this position.
In other words, given the query $\vq_{\tstep - 1}$, we want the attention function in \cref{eq:transformer-ngram-function} to be uniquely maximized by the key of the symbol at position $\tstep - h$.
Notice that, therefore, the key does not have to depend on the identity of the symbol at position $\tstep - h$---only the positional information matters.
Let us then consider the following query and key transformations for head $h$:\looseness=-1
\begin{align}
    \qTransf_h\colon & \posInEmbedding\left(\sym_\tstep, \tstep\right) \mapsto \begin{pmatrix}
      \sqrt{\frac{1}{\tstep}} \\
      \sqrt{1 - \frac{1}{\tstep}}
    \end{pmatrix} \\
    \kTransf_h\colon & \posInEmbedding\left(\sym_\tstep, \tstep\right) \mapsto \begin{pmatrix}
      \sqrt{\frac{1}{\tstep + h}} \\
      \sqrt{1 - \frac{1}{\tstep + h}}
    \end{pmatrix}.
\end{align}
Given such a query and such keys, the scoring function computes
\begin{equation} \label{eq:hard-attention-scores}
    \tfscorefun\left(\vq_\tstep, \vk_\idxj\right) = \innerProd{\begin{pmatrix}
      \sqrt{\frac{1}{\tstep}} \\
      \sqrt{1 - \frac{1}{\tstep}}
    \end{pmatrix}}{\begin{pmatrix}
      \sqrt{\frac{1}{\idxj + h}} \\
      \sqrt{1 - \frac{1}{\idxj + h}}
    \end{pmatrix}}.
\end{equation}
\cref{eq:hard-attention-scores} is an inner product between two unit vectors, and is therefore maximized if and only if they are the same, that is, if $\idxj = \tstep - h$. This is exactly the position that we want the head $h$ to attend to.\footnote{Note that while the choice of the positional encodings in this construction is uncommon in practice, the popular sinusoidal positional encodings \citep{Vaswani2017} have also been linked to the ability of the transformer-based models to attend to specific positions of interest based on linear transformations of the positional encodings \citep{Vaswani2017}.}
Intuitively, both transformations keep only the positional information.
The query transformation ``injects'' the knowledge of which position should maximize the attention score, while the key transformation simply ``exposes'' the positional information about the symbol.
The constants $1$ and $-1$ and the index of the position ensure that the inner product simply computes the difference between the position of the symbol and the position of interest.

This leaves us with the question of how to use the position of the symbol of interest ($\tstep - h$) to extract the one-hot encoding of the symbol at that position.
Due to the information contained in the symbol representations $\posInEmbedding\left(\sym_\idxj\right)$, this is trivial:
\begin{equation} \label{eq:hard-attention-values}
    \vTransf\colon \posInEmbedding\left(\sym_\tstep, \tstep\right) \mapsto \onehot{\sym_\idxj}.
\end{equation}
With this, the identity of the symbol is carried forward through the attention mechanism.
Notice that the only head-depend transformation is the query transformation---it depends on the index of the head, determining the position of interest, meaning that every head defines a different query transformation, while the keys and values transformations are the same among all heads.
This concludes the outline of the proof.

\subsubsection{Simulation with \texorpdfstring{$\ngr - 1$}{n-1} Heads: Proofs}
This subsection formally proves the construction intuited in \cref{sec:construction-intuition} by proving a sequence of lemmata that formalize each of the steps described in the intuition.
Specifically,
\begin{enumerate}
    \item \cref{lemma:hard-attention-lemma-2} shows how the one-hot encodings of individual symbols in the history can be combined into the one-hot encoding of the history.
    \item \cref{lemma:hard-attention-lemma-3-1} shows that the scoring function is maximized at the position of interest.
    \item \cref{lemma:hard-attention-lemma-3} shows how the one-hot encodings of the symbols in the history can be identified by the hard attention mechanism.
    \item The proof of \cref{thm:transformers-n-gram-label} shows how the construction of the one-hot encoding of the current history allows us to define the appropriate next-symbol conditional distribution of the \ngram LMs.
\end{enumerate}

\begin{lemma} \label{lemma:hard-attention-lemma-2}
    Let $\tf$ be a transformer with $\tfheadnum = \ngr - 1$ heads.
    Let $\vz_h = \onehot{\sym_{\tstep - h}}$ be the output of the $h\textsuperscript{th}$ head at time $\tstep$.
    Then, there exists a function $\tfheadCombine$ implemented by a single-layer MLP such that
    \begin{equation} \label{eq:hard-attention-lemma-2}
        \tfheadCombine\left(\vz_1, \ldots, \vz_{\ngr - 1}\right) = \onehot{\str^{\tstep - 1}_{\tstep - \ngr + 1}}.
    \end{equation}
\end{lemma}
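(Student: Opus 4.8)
The plan is to realize $\tfheadCombine$ by applying the logical-\texttt{AND} gate of \cref{fact:and} in parallel, once for every possible history. First I would stack the head outputs into a single vector $\vv \defeq \left(\vz_1; \cdots; \vz_{\ngr - 1}\right) \in \B^{\left(\ngr - 1\right)\eosnsymbols}$. Because each $\vz_h = \onehot{\sym_{\tstep - h}}$ has exactly one non-zero entry, $\vv$ is the multi-hot encoding of the history $\str^{\tstep - 1}_{\tstep - \ngr + 1}$; precisely, with the bijection $\symordering$ from \cref{sec:construction-intuition}, $\evv_{\left(h - 1\right)\eosnsymbols + \symordering\left(\syma\right)} = 1$ if and only if $\sym_{\tstep - h} = \syma$, and all other entries in the $h\textsuperscript{th}$ block of size $\eosnsymbols$ are $0$.

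Next, for every tuple $\strPrime = \left(\syma_1, \ldots, \syma_{\ngr - 1}\right) \in \eosalphabet^{\ngr - 1}$ I would form the index set $I_{\strPrime} \defeq \set{\left(h - 1\right)\eosnsymbols + \symordering\left(\syma_h\right) \mid h \in \NTo{\ngr - 1}}$, which has exactly $\ngr - 1$ elements, and the detector row $\vw_{\strPrime} \in \B^{\left(\ngr - 1\right)\eosnsymbols}$ with $\left(\vw_{\strPrime}\right)_\idxi = \ind{\idxi \in I_{\strPrime}}$. Applying \cref{fact:and} with $\idxm = \ngr - 1$ shows that $\ReLU\left(\inner{\vw_{\strPrime}}{\vv} - \left(\ngr - 2\right)\right) = 1$ if and only if $\evv_\idxi = 1$ for every $\idxi \in I_{\strPrime}$, i.e., if and only if $\sym_{\tstep - h} = \syma_h$ for all $h \in \NTo{\ngr - 1}$, which is exactly the condition $\str^{\tstep - 1}_{\tstep - \ngr + 1} = \strPrime$; in every other case $\inner{\vw_{\strPrime}}{\vv} \leq \ngr - 2$ and the gate outputs $0$.

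Finally I would assemble these gates into a single-layer MLP: let $\mW \in \B^{\eosnsymbols^{\ngr - 1} \times \left(\ngr - 1\right)\eosnsymbols}$ be the matrix whose $\anOrdering\left(\strPrime\right)\textsuperscript{th}$ row is $\vw_{\strPrime}^\top$, where $\anOrdering$ is the bijection from \cref{sec:construction-intuition}, set $\vb \defeq -\left(\ngr - 2\right)\vone$, and define $\tfheadCombine\left(\vv\right) \defeq \ReLU\left(\mW\vv + \vb\right)$, equivalently a single-hidden-layer MLP with identity output layer in the sense of \cref{fact:and}. By the previous paragraph, the $\anOrdering\left(\strPrime\right)\textsuperscript{th}$ coordinate of $\tfheadCombine\left(\vv\right)$ is $1$ when $\strPrime = \str^{\tstep - 1}_{\tstep - \ngr + 1}$ and $0$ otherwise; since $\str^{\tstep - 1}_{\tstep - \ngr + 1}$ is a single well-defined tuple, exactly one coordinate equals $1$, so $\tfheadCombine\left(\vv\right) = \onehot{\str^{\tstep - 1}_{\tstep - \ngr + 1}}$, which is \cref{eq:hard-attention-lemma-2}.

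This is essentially index bookkeeping; the one point to be careful about is that the multi-hot-to-one-hot passage is genuinely a conjunction — the coordinate for a history fires only when \emph{all} $\ngr - 1$ of its associated entries of $\vv$ are on simultaneously — and that distinct histories yield distinct rows of $\mW$, so $\tfheadCombine\left(\vv\right)$ never has more or fewer than one active coordinate. I expect the main (minor) obstacle to be stating the interplay of $\symordering$, $\anOrdering$, and the block structure of $\vv$ cleanly enough that the application of \cref{fact:and} is manifestly correct; it is also worth noting in passing that histories at the start of a padded string contain the symbol $\bos$, so the relevant per-position alphabet for $\symordering$ is really $\bosalphabet$ rather than $\eosalphabet$ (the construction is insensitive to this choice, and $\eosalphabet$ is used here only for consistency with \cref{sec:construction-intuition}).
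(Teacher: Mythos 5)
Your proposal is correct and follows essentially the same route as the paper's proof: both reduce the multi-hot-to-one-hot conversion to a per-history logical \texttt{AND} over the concatenated head outputs and realize it via the $\ReLU$ gate of \cref{fact:and}; you merely spell out the detector matrix $\mW$ and the uniqueness of the active coordinate, which the paper leaves implicit. Your closing remark about $\bosalphabet$ versus $\eosalphabet$ is also apt --- the paper's own proof indexes histories over $\bosalphabet^{\ngr-1}$ even though the intuition section uses $\eosalphabet$.
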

\begin{proof}
    Let $\str = \sym_1 \ldots \sym_{\ngr - 1} \in \bosalphabet^{\ngr - 1}$ and let $\idxi$ be the index in $\NTo{|\bosalphabet|^{\ngr - 1}}$ that corresponds to $\str$.
    Furthermore, let $\idxi_1, \ldots, \idxi_{\ngr - 1}$ be the indices corresponding to the symbols $\sym_1, \ldots, \sym_{\ngr - 1}$ in $\vz_1, \ldots, \vz_{\ngr - 1}$. 
    Then, we have
    \begin{equation}
        \onehot{\str^{\tstep - 1}_{\tstep - \ngr + 1}}_{\idxi} = 1 \iff \evz_{1, \idxi_1} = 1 \wedge \cdots \wedge \evz_{\ngr - 1, \idxi_{\ngr - 1}} = 1 
    \end{equation}
    \cref{eq:hard-attention-lemma-2} is an instance of the logical \texttt{AND} operation on the indices encoding the individual histories, which can be implemented by an MLP as shown in \cref{fact:and}.
\end{proof}

The next lemma presents a useful equality about the standard dot-product attention scoring function: For unit vectors, the attention score is maximized if and only if the vectors are identical. 
We will use this fact in our construction to attend to particular positions in the string.
\begin{lemma} \label{lemma:hard-attention-lemma-3-1}
    Given a fixed $\tstep \in \N$, $\tfscorefun$, define
    \begin{equation} 
        \funcg\left(\idxj\right) \defeq \innerProd{\begin{pmatrix}
      \sqrt{\frac{1}{\tstep - 1}} \\
      \sqrt{1 - \frac{1}{\tstep - 1}}
    \end{pmatrix}}{\begin{pmatrix}
      \sqrt{\frac{1}{\idxj + h}} \\
      \sqrt{1 - \frac{1}{\idxj + h}}
    \end{pmatrix}}
    \end{equation} 
    for $\idxj \in \NTo{\tstep - 1}$. 
    Then, $\funcg$ is maximized at $\idxj = \tstep - 1 - h$:
    \begin{equation} \label{eq:inner-prod-lemma}
        \argmax_{\idxj \in \NTo{\tstep - 1}}\left(\innerProd{\begin{pmatrix}
      \sqrt{\frac{1}{\tstep - 1}} \\
      \sqrt{1 - \frac{1}{\tstep - 1}}
    \end{pmatrix}}{\begin{pmatrix}
      \sqrt{\frac{1}{\idxj + h}} \\
      \sqrt{1 - \frac{1}{\idxj + h}}
    \end{pmatrix}}\right) = \tstep - 1 - h.
    \end{equation}
\end{lemma}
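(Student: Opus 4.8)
The claim is that for fixed $\tstep$ and $h$, the function $\funcg(\idxj) = \innerProd{\vu}{\vw(\idxj)}$, where $\vu = \big(\sqrt{\tfrac{1}{\tstep-1}}, \sqrt{1-\tfrac{1}{\tstep-1}}\big)^\top$ and $\vw(\idxj) = \big(\sqrt{\tfrac{1}{\idxj+h}}, \sqrt{1-\tfrac{1}{\idxj+h}}\big)^\top$, is maximized over $\idxj \in \NTo{\tstep-1}$ at $\idxj = \tstep-1-h$. The plan is to observe that both $\vu$ and $\vw(\idxj)$ are unit vectors in $\R^2$ lying in the nonnegative quadrant, so the inner product equals the cosine of the angle between them and is at most $1$, with equality exactly when $\vw(\idxj) = \vu$.

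First I would verify the unit-norm claims: $\norm{\vu}_2^2 = \tfrac{1}{\tstep-1} + \big(1-\tfrac{1}{\tstep-1}\big) = 1$ and likewise $\norm{\vw(\idxj)}_2^2 = 1$ for every $\idxj$ (this requires $\idxj + h \geq 1$, which holds since $\idxj \geq 1$ and $h \geq 1$; also $\tstep - 1 \geq 1$ so $\vu$ is well-defined — this is implicitly guaranteed since the head attends within a nonempty prefix). By Cauchy–Schwarz, $\funcg(\idxj) = \innerProd{\vu}{\vw(\idxj)} \leq \norm{\vu}_2 \norm{\vw(\idxj)}_2 = 1$, with equality if and only if $\vw(\idxj)$ is a nonnegative scalar multiple of $\vu$; since both are unit vectors, equality holds iff $\vw(\idxj) = \vu$.

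Next I would show the equality is attainable and pins down $\idxj$ uniquely. The first coordinates match, $\sqrt{\tfrac{1}{\idxj+h}} = \sqrt{\tfrac{1}{\tstep-1}}$, iff $\idxj + h = \tstep - 1$, i.e. $\idxj = \tstep - 1 - h$; and when the first coordinates agree the second coordinates automatically agree. The map $\idxj \mapsto \tfrac{1}{\idxj+h}$ is strictly decreasing in $\idxj$, so this value of $\idxj$ is the unique maximizer, giving the $\argmax$ as a singleton. I would add a one-line remark that $\tstep - 1 - h \in \NTo{\tstep-1}$ precisely when $1 \leq \tstep - 1 - h$, i.e. when the target position actually exists within the prefix — which is exactly the regime in which the head is used in the construction (when the history is long enough; padding with $\bos$ handles the short-prefix case).

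There is no real obstacle here — the lemma is a routine Cauchy–Schwarz / monotonicity argument. The only point requiring a moment's care is the well-definedness of the radicands and making explicit that strict monotonicity of $t \mapsto 1/t$ is what upgrades "a maximizer exists" to "the maximizer is unique," which matters downstream because hard attention with a non-singleton $\argmax$ would average over several positions rather than copy a single symbol.
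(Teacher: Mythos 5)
Your proof is correct and follows essentially the same route as the paper's: both arguments observe that the query and key are unit vectors, so the inner product is at most $1$ with equality exactly when the vectors coincide, which (by matching first coordinates) forces $\idxj = \tstep - 1 - h$. Your additional remarks on Cauchy--Schwarz equality conditions, well-definedness of the radicands, and the range condition $\tstep - 1 - h \geq 1$ are welcome extra care but do not change the argument.
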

\begin{proof}
    The two arguments to the inner product in \cref{eq:inner-prod-lemma} are unit vectors. 
    Inner products of unit vectors are at most $1$, with the maximum achieved only if the two vectors are identical.
    This means that the function in \cref{eq:inner-prod-lemma} is maximized if and only if 
    \begin{subequations}
    \begin{align}
        \begin{pmatrix}
            \sqrt{\frac{1}{\tstep - 1}} \\
            \sqrt{1 - \frac{1}{\tstep - 1}}
        \end{pmatrix} &= \begin{pmatrix}
            \sqrt{\frac{1}{\idxj + h}} \\
            \sqrt{1 - \frac{1}{\idxj + h}}
        \end{pmatrix} \iff \label{eq:matrixEquality} \\
        \sqrt{\frac{1}{\tstep - 1}} &= \sqrt{\frac{1}{\idxj + h}} \iff \label{eq:sqrtEquality} \\
        \frac{1}{\tstep - 1} &= \frac{1}{\idxj + h} \iff \label{eq:fracEquality} \\
        \idxj &= \tstep - 1 - h. \label{eq:finalEquality}
    \end{align}
    \end{subequations}
\end{proof}

The following lemma presents the core of the proof of \cref{thm:transformers-n-gram-label}, exhibiting the construction of a transformer head that can single out and one-hot encode a symbol at a specific position in the input string.
The lemma relies on a simple pre-processing of the input string, where the string is prepended (padded) with $\ngr - 1$ \underline{b}eginning \underline{o}f \underline{s}tring symbols $\sym_1 = \ldots = \sym_{\ngr - 1} \defeq \bos$, which is common practice in language modeling literature, especially when talking about \ngram LMs.
We will denote $\bosalphabet \defeq \alphabet \cup \set{\bos}$.
This enables a cleaner presentation of the concrete construction of the attention mechanism.

\paragraph{The idea of the proof.}
\cref{lemma:hard-attention-lemma-3} contains a number of technical definitions of the parameters of a transformer layer (cf. \cref{def:transformer-layer}.
Together, they describe a single head of a transformer layer (which will contain $\tfheadnum = \ngr - 1$ such heads) that is able to extract the one-hot encoding of a particular symbol in the history.
The layer of $\ngr - 1$ heads will then be able to extract the $\ngr - 1$ symbols, as required by \cref{lemma:hard-attention-lemma-2}.
We now describe a single head more formally.
Define the following position-augmented symbol representation function of the transformer head $h$:
  \begin{equation} \label{eq:single-head-eq-1}
      \posInEmbeddingFun{\sym, \tstep} = \begin{pmatrix}
          \onehot{\sym} \\ 
          \zero_{\bosnsymbols} \\
          \sqrt{\frac{1}{\tstep}} \\
          \sqrt{1 - \frac{1}{\tstep}} \\
          \sqrt{\frac{1}{\tstep + 1}} \\
          \sqrt{1 - \frac{1}{\tstep + 1}} \\
          \vdots \\
          \sqrt{\frac{1}{\tstep + \ngr - 1}} \\
          \sqrt{1 - \frac{1}{\tstep + \ngr - 1}}
      \end{pmatrix} \in \R^{2 \bosnsymbols + 2 \ngr}.
  \end{equation}
Here, $\onehot{\cdot} \in \set{0, 1}^\bosnsymbols$ one-hot encodes symbols over $\bosalphabet$.
This means that the entire static representations contain multiple components:
\begin{itemize}
    \item two $\bosnsymbols$-dimensional slots for \emph{symbol} representations and
    \item $\ngr$ 2-dimensional slots for head-specific \emph{positional} representations.
\end{itemize}
We further define
\begin{equation}
    \tfscorefun\left(\vq, \vk\right) \defeq \innerProd{\vq}{\vk},
\end{equation}
\begin{subequations}
    \begin{alignat}{2}
        \qTransf\left(\vx\right) & \defeq \mQ \vx, \qquad &&\mQ \in \R^{2 \times \left(2 \bosnsymbols + 2\ngr\right)},            \\
        \kTransf\left(\vx\right) & \defeq \mK \vx, \qquad &&\mK \in \R^{2 \times \left(2 \bosnsymbols + 2\ngr\right)},                                \\
        \vTransf\left(\vx\right) & \defeq \mV \vx, \qquad &&\mV \in \R^{\left(2 \bosnsymbols + 2\ngr\right) \times \left(2 \bosnsymbols + 2\ngr\right)},                              \\
        \oTransf\left(\vx\right) & \defeq \mO \vx, \qquad &&\mO \in \R^{\left(2 \bosnsymbols + 2\ngr\right) \times \left(2 \bosnsymbols + 2\ngr\right)},
    \end{alignat}
\end{subequations}
\begin{subequations}
    \begin{align}
        \mQ_{:, 2 \bosnsymbols + 1: 2 \bosnsymbols + 2}             & \defeq \mI_2 \\
        \mK_{:, 2 \bosnsymbols + 2 h + 1: 2 \bosnsymbols + 2 h + 2}             & \defeq \mI_2,                   \\
        \mV_{\bosnsymbols + 1: 2 \bosnsymbols, 1: \bosnsymbols} & \defeq \mI_\bosnsymbols \\
        \mO_{1: \bosnsymbols, 1:\bosnsymbols} &\defeq -\mI_{|\bosalphabet|} \label{eq:single-head-last-eq}
    \end{align}
\end{subequations}
We can visualize these matrices as 
\begin{subequations}
    \begin{align}
        \mQ &= \begin{blockarray}{ccccc}
            {\scriptstyle \bosnsymbols} & {\scriptstyle \bosnsymbols} & {\scriptstyle 2} & {\scriptstyle \left(\ngr - 1\right)2} & \\
            \overbrace{\hspace{1.5cm}}^{\text{$1\textsuperscript{st}$ symbol slot}} & \overbrace{\hspace{1.5cm}}^{\text{$2\textsuperscript{nd}$ symbol slot}} & \overbrace{\hspace{1.5cm}}^{\text{$0\textsuperscript{th}$ position slot}} & \overbrace{\hspace{1.5cm}}^{\text{$\ngr - 1$ position slots}} & \\
            \begin{block}{(cccc)c}
                & & \mI_2 & & {\scriptstyle 2} \\
            \end{block}
        \end{blockarray} \\
        \mK &= \begin{blockarray}{cccccc}
            {\scriptstyle \bosnsymbols} & {\scriptstyle \bosnsymbols} & {\scriptstyle 2 h} & {\scriptstyle 2} & {\scriptstyle \left(\ngr - 1 - h\right) 2} & \\
            \overbrace{\hspace{1.5cm}}^{\text{$1\textsuperscript{st}$ symbol slot}} & \overbrace{\hspace{1.5cm}}^{\text{$2\textsuperscript{nd}$ symbol slot}} & \overbrace{\hspace{1.5cm}}^{\text{$h + 1$ position slots}} & \overbrace{\hspace{1.5cm}}^{\text{$h\textsuperscript{th}$ position slot}} & \overbrace{\hspace{1.5cm}}^{\text{$\ngr - h$ position slots}} & \\
            \begin{block}{(ccccc)c}
                & & & \mI_2 & & {\scriptstyle 2} \\
            \end{block}
        \end{blockarray} \\
        \mV &= \begin{blockarray}{cccc}
            \overbrace{\hspace{1.5cm}}^{\text{$1\textsuperscript{st}$ symbol slot}} & \overbrace{\hspace{1.5cm}}^{\text{$2\textsuperscript{nd}$ symbol slot}} & \overbrace{\hspace{1.5cm}}^{\text{$\ngr$ position slots}} & \\
            \begin{block}{(ccc)c}
                & & & \text{\small $1\textsuperscript{st}$ symbol slot ($\bosnsymbols$)} \\
                 \mI_\bosnsymbols & & & \text{\small $2\textsuperscript{nd}$ symbol slot ($\bosnsymbols$)} \\
                & & & \text{\small $\ngr$ position slots ($2\ngr$)} \\
            \end{block}
        \end{blockarray} \\
        \mO &= \begin{blockarray}{cccc}
            \overbrace{\hspace{1.5cm}}^{\text{$1\textsuperscript{st}$ symbol slot}} & \overbrace{\hspace{1.5cm}}^{\text{$2\textsuperscript{nd}$ symbol slot}} & \overbrace{\hspace{1.5cm}}^{\text{$\ngr$ position slots}} & \\
            \begin{block}{(ccc)c}
                -\mI_\bosnsymbols & & & \text{\small $1\textsuperscript{st}$ symbol slot ($\bosnsymbols$)} \\
                & & & \text{\small $2\textsuperscript{nd}$ symbol slot ($\bosnsymbols$)} \\
                & & & \text{\small $\ngr$ position slots ($2 \ngr$)} \\
            \end{block}
        \end{blockarray}
    \end{align}
\end{subequations}
where $\zero_N$ is a $N$-dimensional vector of zeros, $\mI_N$ is the $N$-dimensional identity matrix and the unspecified elements of $\mQ, \mK$, $\mV$, $\mO$ are $0$.

\begin{lemma} \label{lemma:hard-attention-lemma-3}
    Let $\alphabet$ be an alphabet and $\str \in \kleene{\alphabet}$.
    For any $\tstep = 1, \ldots, |\str|$, the $h\textsuperscript{th}$ transformer head ($h \in \NTo{\ngr - 1}$) defined with the parameters specified in \cref{eq:single-head-eq-1} to \cref{eq:single-head-last-eq} outputs
    \begin{equation} \label{eq:head-output}
        \vz_{\tstep - 1} = \begin{pmatrix}
            \zero_{\bosnsymbols}                                                     \\
            \onehot{\sym_{\tstep - 1 - h}} \\
            \zero_{2 \ngr}
        \end{pmatrix}.
    \end{equation}
    In particular, this means that the output $\vz_{\tstep - 1}$ at time step $\tstep - 1$ contains the one-hot encoding of the symbol at position $\tstep - h - 1$.%
    \footnote{For technical reasons---the residual connections---the output is not $\onehot{\sym_{\tstep - 1 - h}}$ but larger (with additional zeros), as shown in \cref{eq:head-output}. This, however, is equivalent for the purposes of \cref{lemma:hard-attention-lemma-2} and later \cref{thm:transformers-n-gram-label}.}
\end{lemma}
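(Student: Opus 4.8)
The plan is to verify directly that, with the parameters fixed in \cref{eq:single-head-eq-1}--\cref{eq:single-head-last-eq}, the attention block of \cref{def:transformer-layer} produces exactly the vector in \cref{eq:head-output}. The computation splits into three stages that mirror the intuition: (i) the query/key transformations isolate the relevant positional coordinates and the scoring function picks out a unique position; (ii) the hard-attention average collapses to reading off a single value vector; (iii) the value and output transformations, together with the residual connection, leave precisely the one-hot encoding of the target symbol in the second symbol slot and zero everywhere else.

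First I would compute $\vq_{\tstep-1} = \qTransf(\posInEmbeddingFun{\sym_{\tstep-1},\tstep-1})$ and $\vk_{\idxj} = \kTransf(\posInEmbeddingFun{\sym_\idxj,\idxj})$. By the placement of $\mI_2$ in $\mQ$ (the $0^{\text{th}}$ position slot) and in $\mK$ (the $h^{\text{th}}$ position slot), we get $\vq_{\tstep-1} = \bigl(\sqrt{1/(\tstep-1)}, \sqrt{1-1/(\tstep-1)}\bigr)^\top$ and $\vk_\idxj = \bigl(\sqrt{1/(\idxj+h)}, \sqrt{1-1/(\idxj+h)}\bigr)^\top$. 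These are exactly the vectors appearing in \cref{lemma:hard-attention-lemma-3-1}, so applying that lemma, $\tfscorefun(\vq_{\tstep-1},\vk_\idxj) = \innerProd{\vq_{\tstep-1}}{\vk_\idxj}$ is uniquely maximized over $\idxj \in \NTo{\tstep-1}$ at $\idxj = \tstep-1-h$. Hence the $\hardmaxAvg$ normalization (\cref{def:hard-attention}) yields $\vs$ with $\evs_{\tstep-1-h}=1$ and all other entries $0$, so $\attn(\vq_{\tstep-1},\mK_{\tstep-1},\mV_{\tstep-1}) = \vTransf(\posInEmbeddingFun{\sym_{\tstep-1-h},\tstep-1-h})$. (Here the padding with $\ngr-1$ $\bos$ symbols guarantees $\tstep-1-h \geq 1$, so the attended position always exists.)

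Next I would push this through the two residual updates of \cref{eq:attn-block-1}--\cref{eq:attn-block-2}. By the placement of $\mI_\bosnsymbols$ in $\mV$ (mapping the $1^{\text{st}}$ symbol slot into the $2^{\text{nd}}$), $\vTransf(\posInEmbeddingFun{\sym_{\tstep-1-h},\tstep-1-h})$ has $\onehot{\sym_{\tstep-1-h}}$ in its second symbol slot and zeros elsewhere. Adding the residual $\tflayerinputsy_{\tstep-1} = \posInEmbeddingFun{\sym_{\tstep-1},\tstep-1}$ gives $\va_{\tstep-1}$, which carries $\onehot{\sym_{\tstep-1}}$ in the first symbol slot, $\onehot{\sym_{\tstep-1-h}}$ in the second symbol slot, and the positional entries in the position slots. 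Finally $\oTransf(\va_{\tstep-1}) = \mO\va_{\tstep-1}$ has, by \cref{eq:single-head-last-eq}, the value $-\onehot{\sym_{\tstep-1}}$ in the first symbol slot and zeros elsewhere; adding the residual $\va_{\tstep-1}$ cancels the first symbol slot exactly, leaving $\tflayeroutputsy_{\tstep-1} = \vz_{\tstep-1}$ with $\zero_\bosnsymbols$ in the first symbol slot, $\onehot{\sym_{\tstep-1-h}}$ in the second, and the (unchanged) positional coordinates. To obtain \cref{eq:head-output} verbatim one simply notes that the positional coordinates can be zeroed out — e.g. by including the corresponding $-\mI$ blocks in $\mO$ as well, or by observing that they are never read by $\tfheadCombine$ in \cref{lemma:hard-attention-lemma-2}, as the footnote already remarks.

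I do not anticipate a genuine obstacle: every step is a direct matrix computation, and the only real content — that the scoring function concentrates on the single position $\tstep-1-h$ — has already been extracted into \cref{lemma:hard-attention-lemma-3-1}. The one point requiring mild care is bookkeeping the index offsets (writing the head output at time $\tstep-1$ rather than $\tstep$, and checking $\tstep-1-h\ge 1$ using the $\bos$-padding) and being precise about which coordinates survive the residual connections; the footnote's caveat that the surviving nonzero block is $\onehot{\sym_{\tstep-1-h}}$ together with harmless extra zeros is exactly the right level of rigor for what \cref{lemma:hard-attention-lemma-2} and \cref{thm:transformers-n-gram-label} consume downstream.
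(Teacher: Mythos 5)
Your proof is correct and follows essentially the same route as the paper's: project out the positional coordinates with $\mQ$ and $\mK$, invoke \cref{lemma:hard-attention-lemma-3-1} to concentrate the hard attention on position $\tstep-1-h$, read off the value vector, and cancel the first symbol slot via $\mO$ plus the residual. Your explicit handling of the positional coordinates that survive the residual connection is in fact slightly more careful than the paper's own write-up, which silently records $\zero_{2\ngr}$ in the position slots of $\va_{\tstep-1}$; either of your two proposed fixes (extending $\mO$ with the corresponding $-\mI$ blocks, or noting those coordinates are never read by $\tfheadCombine$) is adequate.
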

\begin{proof}
    Fix $\tstep \in \NTo{|\str|}$.
    We compute the representation of $\strlt$ computed by the head.
    First, observe that the matrix defining the query function $\qTransf$ projects onto the first two components of the positional encoding from the representation of $\symtminus$:
    \begin{equation}
        \qTransf\left(\posInEmbedding\left(\sym_{\tstep - 1}\right)\right) = \mQ \; \posInEmbedding\left(\sym_{\tstep - 1}\right) = \begin{pmatrix}
              \sqrt{\frac{1}{\tstep - 1}} \\
              \sqrt{1 - \frac{1}{\tstep - 1}} \\
        \end{pmatrix}.
        \end{equation}
    Similarly, the matrix defining the key transformation projects onto the $h\textsuperscript{th}$ positional encoding slot, i.e., the dimensions $ 2 \bosnsymbols + 2 h + 1$ and $2 \bosnsymbols + 2 h + 2$:
        \begin{equation}
        \kTransf\left(\posInEmbedding\left(\sym_\idxj\right)\right) = \mK \; \posInEmbedding\left(\sym_\idxj\right)  = \begin{pmatrix}
              \sqrt{\frac{1}{\idxj + h}} \\
              \sqrt{1 - \frac{1}{\idxj + h}} \\
       \end{pmatrix}
       \end{equation}
    for $\idxj = -\ngr + 2, \ldots, \tstep - 1$.
       
    This results in the scoring function
    \begin{equation} \label{eq:hard-attention-lemma-3-1}
        \tfscorefun\left(\vq_{\tstep - 1}, \vk_\idxj\right) \defeq \innerProd{\vq_{\tstep - 1}}{\vk_\idxj} = \innerProd{\begin{pmatrix}
      \sqrt{\frac{1}{\tstep - 1}} \\
      \sqrt{1 - \frac{1}{\tstep - 1}}
    \end{pmatrix}}{\begin{pmatrix}
      \sqrt{\frac{1}{\idxj + h}} \\
      \sqrt{1 - \frac{1}{\idxj + h}}
    \end{pmatrix}}.
    \end{equation}
    In particular, as shown in \cref{lemma:hard-attention-lemma-3-1}, $\tfscorefun$ is maximized for 
    \begin{equation}
        \idxj = \tstep - 1 - h.
    \end{equation}
    This means that
    \begin{equation} \label{eq:hardmax-max}
        \hardmax\left(\tfscorefun\left(\vq_{\tstep - 1}, \vk_1\right), \tfscorefun\left(\vq_{\tstep - 1}, \vk_2\right), \dots ,\tfscorefun\left(\vq_{\tstep - 1}, \vk_{\tstep - 1}\right) \right)_\idxj = \ind{\idxj = \tstep - 1 - h}.
    \end{equation}
    The definition of $\mV$ further means that 
    \begin{equation}       \vTransf\left(\posInEmbedding\left(\sym_\idxj\right)\right) = \mV \; \posInEmbedding\left(\sym_\idxj\right) = \begin{pmatrix}
          \zero_{\bosnsymbols} \\
          \onehot{\sym_\idxj}  \\
          \zero_{2 \ngr}
      \end{pmatrix},
    \end{equation}
    giving us, by \cref{eq:attn-block-1},
    \begin{equation}
        \va_{\tstep - 1} = \vv_{\tstep - 1 - h} + \vx_{\tstep - 1} = \begin{pmatrix}
            \onehot{\sym_{\tstep - 1}}                                                      \\
            \onehot{\sym_{\tstep - 1 - h}} \\
            \zero_{2 \ngr}
        \end{pmatrix}.
    \end{equation}    
    The definition of $\oTransf$ then gives us
\begin{subequations}
    \begin{align} \label{eq:hard-attention-z}
        \vz_{\tstep - 1} 
        &= \oTransfFun{\va_{\tstep - 1}} + \va_{\tstep - 1} \\
        &= \mO \va_{\tstep - 1} + \va_{\tstep - 1} \\
        &= \begin{pmatrix}
            -\mI_\bosnsymbols & & \\
            & & & \\
            & & &
        \end{pmatrix} \begin{pmatrix}
            \onehot{\sym_{\tstep - 1}}                                                      \\
            \onehot{\sym_{\tstep - 1 - h}} \\
            \zero_{2 \ngr}
        \end{pmatrix} + \begin{pmatrix}
            \onehot{\sym_{\tstep - 1}}                                                      \\
            \onehot{\sym_{\tstep - 1 - h}} \\
            \zero_{2 \ngr}
        \end{pmatrix} \\
        &= \begin{pmatrix}
            -\onehot{\sym_{\tstep - 1}}                                                      \\
            \zero_{|\bosalphabet|} \\
            \zero_{2 \ngr}
        \end{pmatrix} + \begin{pmatrix}
            \onehot{\sym_{\tstep - 1}}                                                      \\
            \onehot{\sym_{\tstep - 1 - h}} \\
            \zero_{2 \ngr}
        \end{pmatrix} \\
        &= \begin{pmatrix}
            \zero_{\bosnsymbols}                                                     \\
            \onehot{\sym_{\tstep - 1 - h}} \\
            \zero_{2 \ngr}
        \end{pmatrix},
    \end{align}
\end{subequations}
    which is what we wanted to prove.
\end{proof}

\cref{lemma:hard-attention-lemma-2,lemma:hard-attention-lemma-3} show that we can define a transformer that one-hot encodes the history of interest $\str^{\tstep - 1}_{\tstep - \ngr + 1}$.
We now show how to define an output matrix $\outMtx$ to define a weakly equivalent transformer LM.
Concretely, we define $\outMtx \in \R^{|\eosalphabet| \times \bosnsymbols^{\ngr - 1}}$ with
\begin{equation} \label{eq:def-outmtx}
    \eOutMtx_{\sym, \str^{\tstep - 1}_{\tstep - \ngr + 1}} \defeq \log{\pLNSM\left(\sym_\tstep\mid\str^{\tstep - 1}_{\tstep - \ngr + 1}\right)}.
\end{equation}

\singleLayerKHeadsTheorem*
\begin{proof}
    Let $\tf$ be a transformer LM with $\ngr - 1$ heads  defined with the parameters specified in \cref{eq:single-head-eq-1} to \cref{eq:single-head-last-eq} ($h \in \NTo{\ngr - 1}$).
    Let $\str \in \set{\bos}^{\ngr - 1}\kleene{\alphabet}$ with $|\str| = \strlen$ be a string and $\mX^\tfnumlayer = \left(\vx_{-\ngr + 2}^{\tfnumlayer\top}; \ldots; \vx_\strlen^{\tfnumlayer\top}\right) = \tf\left(\str\right)$.
    We derive:
    \begin{subequations}
    \begin{align}
        \tfpLM\left(\str\right) & = \tfpLM\left(\eos\mid\str\right) \prod_{\tstep = 1}^{\strlen} \tfpLM\left(\symt \mid \strlt\right) \justification{Autoregressive LM.} \\
        & = \softmaxfunc{\outMtx \; \fTransf\left(\vx_{\strlen}^\tfnumlayer\right)}{\eos} \prod_{\tstep = 1}^{\strlen} \softmaxfunc{\outMtx \; \fTransf\left(\vx_{\tstep - 1}^\tfnumlayer\right)}{\symt} \justification{\cref{def:transformer-plnsm}.} \\
        & = \softmaxfunc{\outMtx \; \onehot{\str_{\strlen - \ngr + 2 : \strlen}}}{\eos} \prod_{\tstep = 1}^{\strlen} \softmaxfunc{\outMtx \; \onehot{\str^{\tstep - 1}_{\tstep - \ngr + 1}}}{\symt} \justification{\cref{lemma:hard-attention-lemma-2}.} \\
        & = \frac{\exp\left(\outMtx \; \onehot{\str_{\strlen - \ngr + 2 : \strlen}}\right)_\eos}{\sum_{\sym \in \eosalphabet} \exp\left(\outMtx \; \onehot{\str_{\strlen - \ngr + 2 : \strlen}}\right)_\sym} \prod_{\tstep = 1}^{\strlen} \frac{\exp\left({\outMtx \; \onehot{\str^{\tstep - 1}_{\tstep - \ngr + 1}}}\right)_{\symt}}{\sum_{\sym \in \eosalphabet} \exp\left(\outMtx \; \onehot{\str_{\strlen - \ngr + 1 : \tstep - 1}}\right)_\sym} \justification{Definition of $\softmax$.} \\
        & = \frac{\exp\left(\log{\pLNSM\left(\eos\mid\str^{\strlen}_{\tstep - \ngr + 2}\right)}\right)}{\sum_{\sym \in \eosalphabet} \exp\left(\log{\pLNSM\left(\sym\mid\str^{\strlen}_{\tstep - \ngr + 2}\right)}\right)} \prod_{\tstep = 1}^{\strlen} \frac{\exp\left(\log{\pLNSM\left(\sym_\tstep\mid\str^{\tstep - 1}_{\tstep - \ngr + 1}\right)}\right)}{\sum_{\sym \in \eosalphabet} \exp\left(\log{\pLNSM\left(\sym\mid\str^{\tstep - 1}_{\tstep - \ngr + 1}\right)}\right)} \justification{\cref{eq:def-outmtx}.} \\
        & = \pLNSM\left(\eos\mid\str_{\strlen - \ngr + 2 : \strlen}\right) \prod_{\tstep = 1}^{\strlen} \pLNSM\left(\sym_\tstep\mid\str^{\tstep - 1}_{\tstep - \ngr + 1}\right) = \pLM\left(\str\right) \justification{The \ngram LM is autoregressive.}
    \end{align}
    \end{subequations}
\end{proof}

\subsubsection{Simulation with \texorpdfstring{$\ngr - 1$}{n-1} Layers: The Intuition} 
This section presents the construction of a transformer LM with a \emph{single} head but $\ngr - 1$ layers that is weakly equivalent to a given \ngram LM. 
We again outline the intuition first before giving the technical details below as part of \cref{lemma:multi-layer-lemma}.
The construction we present resembles the one from the proof of \cref{thm:transformers-n-gram-label}.
The main difference is that, instead of using $\ngr - 1$ attention heads to identify the history, we instead use $\ngr - 1$ transformer \emph{layers}.
Intuitively, each of the $\ngr - 1$ layers iteratively adds one of the $\ngr - 1$ symbols needed to identify the history, resulting in the identification of the full history after the $\left(\ngr - 1\right)\textsuperscript{st}$ layer.
Once the history is identified, the $\left(\ngr - 1\right)\textsuperscript{st}$ layer can compute the conditional distribution over the next symbol as in the proof of \cref{thm:transformers-n-gram-label}.
This can be illustrated as the following sequence of transformations:\footnote{We leave out the positional encodings for clarity.}
\begin{subequations}
\begin{align}
    \tflayerinputmat^1       & \colon \begin{pmatrix}
      \onehot{\sym_1} & \onehot{\sym_2} & \onehot{\sym_3} & \cdots & \onehot{\sym_\tstep} & \cdots & \onehot{\sym_\strlen} \\
      \zero_\nsymbols & \zero_\nsymbols & \zero_\nsymbols & \cdots & \zero_\nsymbols & \cdots & \zero_\nsymbols       \\
      \zero_\nsymbols & \zero_\nsymbols & \zero_\nsymbols & \cdots & \zero_\nsymbols & \cdots & \zero_\nsymbols       \\
      \vdots          & \vdots          & \vdots          & \ddots & \vdots & \ddots & \vdots                \\
      \zero_\nsymbols & \zero_\nsymbols & \zero_\nsymbols & \cdots & \zero_\nsymbols & \cdots & \zero_\nsymbols       \\
                                      \end{pmatrix}                      \\
    \tflayerinputmat^2       & \colon \begin{pmatrix}
      \onehot{\sym_1} & \onehot{\sym_2} & \onehot{\sym_3} & \cdots & \onehot{\sym_\tstep} & \cdots & \onehot{\sym_\strlen}       \\
      \zero_\nsymbols & \onehot{\sym_1} & \onehot{\sym_2} & \cdots & \onehot{\sym_{\tstep - 1}} & \cdots & \onehot{\sym_{\strlen - 1}} \\
      \zero_\nsymbols & \zero_\nsymbols & \zero_\nsymbols & \cdots & \zero_\nsymbols & \cdots & \zero_\nsymbols       \\
      \vdots          & \vdots          & \vdots          & \ddots & \vdots & \ddots & \vdots                      \\
      \zero_\nsymbols & \zero_\nsymbols & \zero_\nsymbols & \cdots & \cdots & \cdots & \zero_\nsymbols             \\
  \end{pmatrix}           \\
                             & \vdots   \nonumber                                                                                                        \\
    \tflayerinputmat^{\ngr - 1} & \colon \begin{pmatrix}
  \onehot{\sym_1} & \onehot{\sym_2} & \onehot{\sym_3} & \cdots & \onehot{\sym_{\tstep}} & \cdots & \onehot{\sym_\strlen}                 \\
  \zero_\nsymbols & \onehot{\sym_1} & \onehot{\sym_2} & \cdots & \onehot{\sym_{\tstep - 1}} & \cdots & \onehot{\sym_{\strlen - 1}}           \\
  \zero_\nsymbols &\zero_\nsymbols & \onehot{\sym_1} & \cdots & \onehot{\sym_{\tstep - 2}} & \cdots & \onehot{\sym_{\strlen - 1}}           \\
      \vdots          & \vdots          & \vdots          & \ddots & \vdots & \ddots & \vdots                      \\
  \zero_\nsymbols & \zero_\nsymbols & \zero_\nsymbols & \cdots & \onehot{\sym_{\tstep - \ngr + 1}} & \cdots & \onehot{\sym_{\strlen - \ngr + 2}} \\
                                      \end{pmatrix}
\end{align}
\end{subequations}

Such representations can be constructed by starting with the initial symbol encoding ($\tflayerinputmat^1$) and then \emph{passing over} the information from the $\tstep\textsuperscript{th}$ symbol to the contextual representations of the $\left(\tstep + 1\right)\textsuperscript{st}$ symbol in $\tflayerinputmat^2$ by adding a \emph{shifted} the contextual representation of the $\tstep\textsuperscript{th}$ symbol.
This is performed $\ngr - 1$ times, resulting in contextual representations of the form $\tflayerinputmat^{\ngr - 1}$ where the $\tstep\textsuperscript{th}$ contextual representation contains the (ordered) information about the preceding $\ngr - 1$ symbols, i.e., the required history.
Intuitively, the $\ell\textsuperscript{th}$ transformation of the contextual representation $\vx_\tstep^{\ell - 1}$ should be of the form
\begin{equation}
    \vx_\tstep^{\ell} = \underbrace{\vx_\tstep^{\ell - 1}}_{\substack{\text{The previous $\ell - 1$} \\ \text{symbols.}}} + \underbrace{\downarrow \vx_{\tstep - 1}^{\ell - 1}}_{\substack{\text{The symbol $\ell$ symbols back} \\ \text{shifted one ``cell'' downward.}}}
\end{equation}
The first term is simply the residual connection of the transformer layer.
The second term is the shifted representation of the symbol $\sym_{\tstep - \ell}$, which can be performed by a simple linear transformation in the value transformation $\vTransf$.

\subsubsection{Simulation with \texorpdfstring{$\ngr - 1$}{n-1} Layers: Proofs} 
We now make the intuition presented in the previous section more formal.
Concretely, we only investigate how to identify the correct $\ngr - 1$ symbols in the history with the $\ngr - 1$ layers. 
We then rely on \cref{lemma:hard-attention-lemma-2} and the derivation from the proof of \cref{thm:transformers-n-gram-label} again to convert this information into a weakly equivalent transformer LM.
Let $\ell \in \NTo{\ngr - 1}$.
Define the following parameters of the attention head of the $\ell\textsuperscript{th}$ transformer layer:
\begin{equation}
    \posInEmbeddingFun{\sym, \tstep} \defeq \begin{pmatrix}
         \onehot{\sym}      \\
         \zero_\bosnsymbols \\
         \vdots             \\
         \zero_\bosnsymbols \\
          \sqrt{\frac{1}{\tstep}} \\
          \sqrt{1 - \frac{1}{\tstep}} \\
          \sqrt{\frac{1}{\tstep + 1}} \\
          \sqrt{1 - \frac{1}{\tstep + 1}} \\
     \end{pmatrix} \in \R^{\left(\ngr - 1\right) \bosnsymbols + 4},
\end{equation}
\begin{equation}
    \tfscorefun\left(\vq, \vk\right) \defeq \innerProd{\vq}{\vk},
\end{equation}
\begin{subequations}
\begin{align}
    \qTransf\left(\vx\right) & \defeq \mQ \vx, \quad \mQ \in \R^{2 \times \left(\left(\ngr - 1\right) \bosnsymbols + 4\right)} \\
    \kTransf\left(\vx\right) & \defeq \mK \vx, \quad \mK \in \R^{2 \times \left(\left(\ngr - 1\right) \bosnsymbols + 4\right)}                     \\
    \vTransf\left(\vx\right) & \defeq \mV \vx, \quad \mV \in \R^{\left(\left(\ngr - 1\right) \bosnsymbols + 4\right)\times \left(\left(\ngr - 1\right) \bosnsymbols + 4\right)}, \\
    \oTransf\left(\vx\right) & \defeq \mO \vx, \quad \mO \in \R^{\left(\left(\ngr - 1\right) \bosnsymbols + 4\right)\times \left(\left(\ngr - 1\right) \bosnsymbols + 4\right)},
\end{align}
\end{subequations}
\begin{subequations}
\begin{align}
    \mQ_{:, \left(\ngr - 1\right) \bosnsymbols + 1: \left(\ngr - 1\right) \bosnsymbols + 2}                                                                        & \defeq \mI_2, \\
    \mK_{:, \left(\ngr - 1\right) \bosnsymbols + 3: \left(\ngr - 1\right) \bosnsymbols + 4}                                                                        & \defeq \mI_2,                   \\
    \mV_{1 + \ell \bosnsymbols: 1 + \left(\ell + 1\right) \bosnsymbols, 1 + \left(\ell - 1\right) \bosnsymbols: 1 + \ell \bosnsymbols} & \defeq \mI_\bosnsymbols,
\end{align}
\end{subequations}
where the unspecified elements of $\mQ, \mK$, and $\mV$ are $0$.
Schematically, $\mV$ looks as follows:
\begin{equation}
    \mV = \begin{blockarray}{ccccc}
        {\scriptstyle \left(\ell - 1\right) \bosnsymbols} & {\scriptstyle \bosnsymbols} & {\scriptstyle \left(\ngr - 1 - \ell\right) \bosnsymbols} & {\scriptstyle 4} & \\
        \overbrace{\hspace{1.5cm}}^{\text{$\left(\ell - 1\right)$ symbol slots}} & \overbrace{\hspace{1.5cm}}^{\text{$\ell\textsuperscript{th}$ symbol slot}} & \overbrace{\hspace{1.5cm}}^{\text{$\left(\ngr - 1 - \ell\right)$ symbol slots}} & \overbrace{\hspace{1.5cm}}^{\text{$4$ position slots}} & \\
        \begin{block}{(cccc)c}
            & & & & {\scriptstyle \ell \bosnsymbols} \\
            & \mI_\bosnsymbols & & & {\scriptstyle \bosnsymbols} \\
            & & & & {\scriptstyle \left(\ngr - 1 - \left(\ell + 1\right)\right) \bosnsymbols} \\
            & & & & {\scriptstyle 4} \\
        \end{block}
    \end{blockarray}
\end{equation}
That is, $\mI_\bosnsymbols$ occupies the ``cell'' $\ell$ cells down and $\ell - 1$ right of the top-left corner.
Moreover, the matrix $\mO$ is a matrix of all zeros, meaning that $\oTransf\left(\vx\right) = \zero$ for all $\vx$.
Again, notice that the position-augmented symbol representation function $\posInEmbedding$ can be implemented by concatenating or summing a symbol- and a position-specific component.
\begin{lemma} \label{lemma:multi-layer-lemma}
    With the parameters defined above, it holds that
    \begin{equation}
        \vx^\ell_\tstepminus =            \begin{pmatrix}
            \onehot{\sym_\tstepminus}                                                        \\
            \onehot{\sym_{\tstepminus - 1}}                                                  \\
            \onehot{\sym_{\tstepminus - 2}}                                                  \\
            \vdots                                                                           \\
            \onehot{\sym_{\tstepminus - \ell}} \\
            \zero_\bosnsymbols                                                               \\
            \vdots                                                                           \\
            \zero_\bosnsymbols                                                               \\
          \sqrt{\frac{1}{\tstep}} \\
          \sqrt{1 - \frac{1}{\tstep}} \\
          \sqrt{\frac{1}{\tstep + 1}} \\
          \sqrt{1 - \frac{1}{\tstep + 1}} \\
        \end{pmatrix}
    \end{equation}
\end{lemma}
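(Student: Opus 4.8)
The plan is to prove \cref{lemma:multi-layer-lemma} by induction on the layer index $\ell \in \NTo{\ngr - 1}$. The base case $\ell = 1$ and the inductive step are essentially the same computation, so I would unify them: assuming the claimed form of $\vx^{\ell - 1}_{\tstepminus}$ (with $\vx^0_{\tstepminus} \defeq \posInEmbeddingFun{\sym_{\tstepminus}, \tstepminus}$ as the base case, which already has the stated form with only the top symbol slot filled), I would compute what a single transformer layer with the specified parameters produces at position $\tstepminus$.

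First I would verify the attention pattern. Since $\mQ$ projects $\vx^{\ell-1}_{\tstepminus}$ onto the position slot $\bigl(\sqrt{\tfrac{1}{\tstep}}, \sqrt{1 - \tfrac{1}{\tstep}}\bigr)^\top$ (noting $\tstep = \tstepminus + 1$, matching the indexing convention; here I should be careful that the relevant index is $\tstep-1$ when we are modeling $\pLNSM(\sym_\tstep \mid \strlt)$, as in \cref{lemma:hard-attention-lemma-3}) and $\mK$ projects onto $\bigl(\sqrt{\tfrac{1}{\idxj + 1}}, \sqrt{1 - \tfrac{1}{\idxj + 1}}\bigr)^\top$, \cref{lemma:hard-attention-lemma-3-1} with $h = 1$ gives that $\tfscorefun(\vq_{\tstepminus}, \vk_\idxj)$ is maximized uniquely at $\idxj = \tstepminus - 1$. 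Hence $\hardmax$ puts all its weight on position $\tstepminus - 1$, so $\attn(\vq_{\tstepminus}, \mK_{\tstepminus}, \mV_{\tstepminus}) = \vTransf(\vx^{\ell-1}_{\tstepminus - 1}) = \mV\,\vx^{\ell-1}_{\tstepminus - 1}$. By the definition of $\mV$ — which copies the block in symbol slot $\ell - 1$ (0-indexed) of its input into symbol slot $\ell$ of its output and zeros everything else — and by the inductive hypothesis applied at position $\tstepminus - 1$, this equals the vector with $\onehot{\sym_{(\tstepminus - 1) - (\ell - 1)}} = \onehot{\sym_{\tstepminus - \ell}}$ in symbol slot $\ell$ and zeros elsewhere.

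Then I would assemble the residual updates. By \cref{eq:attn-block-1}, $\va_{\tstepminus} = \attn(\cdots) + \vx^{\ell-1}_{\tstepminus}$, which by the inductive hypothesis is exactly the vector listed in the lemma's statement for index $\ell$: symbol slots $0, \ldots, \ell - 1$ hold $\onehot{\sym_{\tstepminus}}, \ldots, \onehot{\sym_{\tstepminus - (\ell - 1)}}$ (from the residual term), symbol slot $\ell$ holds the newly copied $\onehot{\sym_{\tstepminus - \ell}}$ (from the attention term), the remaining symbol slots are zero, and the position slots are carried through unchanged by the residual. Since $\mO = \zero$, \cref{eq:attn-block-2} gives $\vx^\ell_{\tstepminus} = \tflayeroutputsy_{\tstepminus} = \oTransf(\va_{\tstepminus}) + \va_{\tstepminus} = \va_{\tstepminus}$, completing the induction.

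**Main obstacle.** The routine part is checking that the index bookkeeping is internally consistent — in particular, that the position slot of $\vx^{\ell - 1}_{\tstepminus}$ still holds $\sqrt{\tfrac{1}{\tstep}}$ etc. after $\ell - 1$ layers (it does, because $\mV$ never writes into the position slots and $\mO = \zero$, so the position block is invariant across layers) and that attending to position $\tstepminus - 1$ together with the slot-shifting in $\mV$ produces precisely the index $\tstepminus - \ell$ rather than an off-by-one variant. A subtler point to get right is the boundary behavior near the start of the string: once $\tstepminus - \ell$ drops to $0$ or below, the relevant symbol is a $\bos$ token (indices $-\ngr+2, \ldots, 0$), and I must make sure $\mV$'s copy and the $\hardmax$ at position $\idxj = \tstepminus - 1$ are still well-defined there — this works because the input is padded, so positions $-\ngr+2, \ldots, 0$ carry genuine $\onehot{\bos}$ encodings and the argument goes through verbatim. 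None of this is deep, but it is where an error would most plausibly hide, so I would state the padding assumption explicitly at the top of the proof (as already done in \cref{sec:proofs-hard-attention}) and track the index arithmetic carefully in the inductive step.
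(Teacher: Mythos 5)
Your proposal is correct and follows essentially the same route as the paper's proof: induction on $\ell$, with \cref{lemma:hard-attention-lemma-3-1} (at $h=1$) forcing attention onto position $\tstep-2$, the value matrix shifting the symbol block down one slot, and the residual connection carrying the previously accumulated slots and positional encodings forward. Your handling of the base case as $\vx^0 = \posInEmbedding$ and your explicit observation that $\mV$ zeros the positional slots (so they survive only via the residual) are, if anything, slightly cleaner than the paper's own bookkeeping.
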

\begin{proof}
    We prove the lemma by induction.
    As in the proof of \cref{lemma:hard-attention-lemma-3}, we pad the input string with $\ngr - 1$ $\bos$ symbols to resolve the case when $\tstep - 1 < \ngr - 1$.

    \paragraph{Base Case.} 
    For $\ell = 1$, the claim follows from the definition of the symbol representation function $\posInEmbedding$.

    \paragraph{Inductive Step.}
    For $\ell > 1$, we assume that the claim holds for $\ell - 1$.
    We then prove that it holds for $\ell$ as well.
    By the construction of the keys and values matrices, as in the proof of \cref{lemma:hard-attention-lemma-3}, it holds that the attention head puts all its attention for query $\vq^\ell_{\tstep - 1}$ on the key $\vk^{\ell}_{\tstep - 2}$.
    This means that
    \begin{equation} \label{eq:reccurrence}
        \va^\ell_\tstepminus = \vx^{\ell - 1}_\tstepminus + \vv^\ell_{\tstep - 2}.
    \end{equation}

    By the induction hypothesis, we have that
    \begin{figure}[h!]
    \begin{minipage}{0.5\linewidth}
    \begin{equation}
        \vx^{\ell - 1}_\tstepminus = \begin{pmatrix}
            \onehot{\sym_\tstepminus}                                                                         \\
            \onehot{\sym_{\tstep - 2}}                                                                        \\
            \onehot{\sym_{\tstep - 3}}                                                                        \\
            \vdots                                                                                            \\
            \onehot{\sym_{\tstepminus - \left(\ell - 1\right)}} \\
            \zero_\bosnsymbols                                                                                \\
            \vdots                                                                                            \\
            \zero_\bosnsymbols                                                                                \\
          \sqrt{\frac{1}{\tstep}} \\
          \sqrt{1 - \frac{1}{\tstep}} \\
          \sqrt{\frac{1}{\tstep + 1}} \\
          \sqrt{1 - \frac{1}{\tstep + 1}} \\
        \end{pmatrix}
    \end{equation}
    \end{minipage}%
    \begin{minipage}{0.5\linewidth}
    \begin{equation}
        \vx^{\ell - 1}_{\tstep - 2} = \begin{pmatrix}
            \onehot{\sym_{\tstep - 2}}                                                      \\
            \onehot{\sym_{\tstep - 3}}                                                      \\
            \onehot{\sym_{\tstep - 4}}                                                      \\
            \vdots                                                                          \\
            \onehot{\sym_{\tstep - 2 - \left(\ell - 1\right)}} \\
            \zero_\bosnsymbols                                                              \\
            \vdots                                                                          \\
            \zero_\bosnsymbols                                                              \\
          \sqrt{\frac{1}{\tstep}} \\
          \sqrt{1 - \frac{1}{\tstep}} \\
          \sqrt{\frac{1}{\tstep + 1}} \\
          \sqrt{1 - \frac{1}{\tstep + 1}} \\
        \end{pmatrix}.
    \end{equation}
    \end{minipage}
    \end{figure}
    
    Furthermore, by definition of $\mV$, we have that
    \begin{subequations}
    \begin{align}
        \vv^\ell_{\tstep - 2} & = \vTransf\left(\vx^{\ell - 1}_{\tstep - 2}\right) \\
                              & = \mV \vx^{\ell - 1}_{\tstep - 2}                  \\
                              & =
        \begin{pmatrix}
            \zero_\bosnsymbols                                                              \\
            \vdots                                                                          \\
            \zero_\bosnsymbols                                                              \\
            \onehot{\sym_{\tstep - 2 - \left(\ell - 1\right)}} \\
            \zero_\bosnsymbols                                                              \\
            \vdots                                                                          \\
            \zero_\bosnsymbols                                                              \\
          \sqrt{\frac{1}{\tstep}} \\
          \sqrt{1 - \frac{1}{\tstep}} \\
          \sqrt{\frac{1}{\tstep + 1}} \\
          \sqrt{1 - \frac{1}{\tstep + 1}} \\
        \end{pmatrix}
    \end{align}
    \end{subequations}

    Inserting this into \cref{eq:reccurrence}, we get that $\va^\ell_{\tstep - 1}$ satisfies the required equality:
    \begin{subequations}
        \begin{align}
            \va^\ell_\tstepminus 
            &= \vx^{\ell - 1}_\tstepminus + \vv^\ell_{\tstep - 2} \\
            &= \begin{pmatrix}
            \onehot{\sym_\tstepminus}                                                                         \\
            \vdots                                                                                            \\
            \onehot{\sym_{\tstepminus - \left(\ell - 1\right)}} \\
            \zero_\bosnsymbols                                                                                \\
            \vdots                                                                                            \\
            \zero_\bosnsymbols                                                                                \\
          \sqrt{\frac{1}{\tstep}} \\
          \sqrt{1 - \frac{1}{\tstep}} \\
          \sqrt{\frac{1}{\tstep + 1}} \\
          \sqrt{1 - \frac{1}{\tstep + 1}} \\
        \end{pmatrix} + \begin{pmatrix}
            \zero_\bosnsymbols                                                              \\
            \vdots                                                                          \\
            \zero_\bosnsymbols                                                              \\
            \onehot{\sym_{\tstep - 2 - \left(\ell - 1\right)}} \\
            \zero_\bosnsymbols                                                              \\
            \vdots                                                                          \\
            \zero_\bosnsymbols                                                              \\
          \sqrt{\frac{1}{\tstep}} \\
          \sqrt{1 - \frac{1}{\tstep}} \\
          \sqrt{\frac{1}{\tstep + 1}} \\
          \sqrt{1 - \frac{1}{\tstep + 1}} \\
        \end{pmatrix} 
        = \begin{pmatrix}
            \onehot{\sym_\tstepminus}                                                                         \\
            \vdots                                                                                            \\
            \onehot{\sym_{\tstepminus - \left(\ell - 1\right)}} \\
            \onehot{\sym_{\tstep - 2 - \left(\ell - 1\right)}} \\
            \zero_\bosnsymbols                                                                                \\
            \vdots                                                                                            \\
            \zero_\bosnsymbols                                                                                \\
          \sqrt{\frac{1}{\tstep}} \\
          \sqrt{1 - \frac{1}{\tstep}} \\
          \sqrt{\frac{1}{\tstep + 1}} \\
          \sqrt{1 - \frac{1}{\tstep + 1}} \\
        \end{pmatrix}
        =  \begin{pmatrix}
            \onehot{\sym_\tstepminus}                                                                         \\
            \vdots                                                                                            \\
            \onehot{\sym_{\tstepminus - \left(\ell - 1\right)}} \\
            \onehot{\sym_{\tstepminus - \ell}} \\
            \zero_\bosnsymbols                                                                                \\
            \vdots                                                                                            \\
            \zero_\bosnsymbols                                                                                \\
          \sqrt{\frac{1}{\tstep}} \\
          \sqrt{1 - \frac{1}{\tstep}} \\
          \sqrt{\frac{1}{\tstep + 1}} \\
          \sqrt{1 - \frac{1}{\tstep + 1}} \\
        \end{pmatrix}
        \end{align}
    \end{subequations}
    Since the computation of $\vx^\ell_{\tstep - 1} \defeq \vz^\ell_{\tstep - 1} = \oTransfFun{\va^\ell_\tstepminus} + \va^\ell_\tstepminus = \va^\ell_\tstepminus$ with the definition of $\oTransf$ as the zero function gives us $\vz^\ell_{\tstep - 1} = \va^\ell_{\tstep - 1}$, we get the required equality, which finishes the proof.
\end{proof}

This allows us to prove the following theorem.
\kLayersSingleHeadTheorem*
\begin{proof}
    \cref{lemma:multi-layer-lemma} shows that the $\ngr - 1$-layer transformer can identify the history of interest. 
    Applying \cref{lemma:hard-attention-lemma-2} and the same derivation as in the proof of \cref{thm:transformers-n-gram-label} shows that we can construct a weakly equivalent hard attention transformer.
\end{proof}

\subsubsection{Simulation with a Single Layer and a Single Head: Intuition}

While the construction presented here is considerably less intuitive than that of \cref{thm:transformers-n-gram-label}, the steps of the proof remain the same---they include the identification of the individual symbols and their positions in the history, combining them into the one-hot encoding of the entire history, and using that to compute the correct next-symbol conditional distributions.
This proof focuses on encoding the entire history of interest $\str_{\tstep - \ngr + 1}^{\tstep - 1}$ into a single vector in a way that can be decoded to index the conditional probability distribution as in \cref{def:transformer-plnsm}.
This can then be used to index the appropriate conditional probability distributions as in the proof of \cref{thm:transformers-n-gram-label}.

\begin{figure}
    \centering
    \begin{tikzpicture}[
        tape node/.style={draw=ETHBlue!80,minimum size=0.85cm,fill=ETHBlue!20},
        head node/.style={circle,minimum size=0.75cm,text=white},
        attn arrow/.style={-{Latex[length=3mm,width=2mm]},ETHGreen!100},
        comb arrow/.style={-{Latex[length=3mm,width=2mm]},ETHRed!70},
        comb node/.style={draw=ETHRed!80,circle,minimum size=1cm,fill=ETHRed!40},
        ]

        \foreach \i/\y in {0/$\sym_1$,1/$\cdots$,2/$\sym_{\tstep-6}$,3/$\sym_{\tstep-5}$,4/$\sym_{\tstep-4}$,5/$\sym_{\tstep-3}$,6/$\sym_{\tstep-2}$,7/$\sym_{\tstep-1}$,8/$\symt$} {
                \ifnum \i=8
                    \node[tape node,fill=ETHBlue!50] (tape-\i) at (0.85*\i,-1) {\footnotesize \y};
                \else
                    \ifnum \i>4
                        \ifnum \i<8
                            \node[tape node,fill=ETHBlue!40] (tape-\i) at (0.85*\i,-1) {\footnotesize \y};
                        \fi
                    \else
                        \node[tape node,fill=ETHBlue!20] (tape-\i) at (0.85*\i,-1) {\footnotesize \y};
                    \fi
                    \ifnum \i>8
                        \node[tape node,fill=ETHBlue!20] (tape-\i) at (0.85*\i,-1) {\footnotesize \y};
                    \fi
                \fi
            }

        \draw[->] (-1, -1.25) -- (-1, 6);
        \draw[->, dashed] (-1.15, 5) -- (8, 5);

        \node at (8.5, 5) {\footnotesize $\idxj$};
        \node at (-2.25, 5.75) {\footnotesize $\func\left(\vq_{\tstep - 1}, \vk_\idxj\right)$};

        \node at (-1.5, 5) {\footnotesize $0$};
        \node at (-1.5, 4) {\footnotesize $-1$};
        \node at (-1.5, 3) {\footnotesize $-2$};
        \node at (-1.5, 2) {\footnotesize $-3$};
        \node at (-1.5, 1) {\footnotesize $\vdots$};
        \node at (-2, -0.15) {\footnotesize $-\tstep + \ngr - 1$};

        \node at (-1, 2) {\footnotesize $-$};
        \node at (-1, 3) {\footnotesize $-$};
        \node at (-1, 4) {\footnotesize $-$};
        \node at (-1, -0.15) {\footnotesize $-$};

        \node at (0, -0.15) {\footnotesize $\bullet$};
        \node at (1.7, 2) {\footnotesize $\bullet$};
        \node at (2.55, 3) {\footnotesize $\bullet$};
        \node at (3.4, 4) {\footnotesize $\bullet$};

        \node at (4.25, 5) {\footnotesize $\bullet$};
        \node at (5.1, 5) {\footnotesize $\bullet$};
        \node at (5.95, 5) {\footnotesize $\bullet$};

    \end{tikzpicture}
    \caption{An illustration of the attention scores in a single-layer-single-head transformer network for a history $\str_{\tstep - 3 : \tstep - 1} = \sym_{\tstep - 3} \sym_{\tstep - 2} \sym_{\tstep - 1}$ when determining the conditional distribution $\pLM\left(\symt\mid\strlt \right)$.}
    \label{fig:single-attention-scores}
\end{figure}

Again, we outline the intuition first.
Fix $\tstep \leq |\str|$.
We decompose the computation of the representation $\tfencfun\left(\strlt\right)$ constructed by a single-layer-single-head transformer network as follows.
\begin{enumerate}
    \item Attending exactly to the history of interest with a single head and a single layer.
    This can be done by assigning the same attention \emph{score} with the scoring function to all positions within the history $\str_{\tstep - \ngr + 1}^{\tstep}$ and a lower score to all other positions.
    Keeping the definitions of $\qTransf$ and $\vTransf$ from \cref{thm:transformers-n-gram-label}, we can achieve that by defining
    \begin{equation}
        \tfscorefun\left(\vq_{\tstep - 1}, \vk_\idxj\right) = -\ReLUfunc{\innerProd{\vq_{\tstep - 1}}{\vk_\idxj}}
    \end{equation}
    which assigns positions in the history score $0$ and others negative scores.
    This is illustrated in \cref{fig:single-attention-scores}.
    Concretely, the scoring function $\tfscorefun$ together with hard attention results in attention weights of the form
    \begin{equation} \label{eq:single-head-single-layer-weights}
        \evs_\idxj = \frac{1}{\ngr - 1} \ind{\idxj \geq \tstep - \ngr + 1}.
    \end{equation}
    \item Storing the order of the symbols in the history.
    We will define the position-augmented representations as position-scaled one-hot encodings of the input symbols.
    In particular, define
    \begin{equation}
        \posInEmbedding\left(\symtminus, \idxj\right) \defeq\begin{pmatrix}
             10^{-\idxj} \cdot \onehot{\symtminus} \\ \tstep  \\ 1
        \end{pmatrix}  \in \R^{\bosnsymbols + 2}
    \end{equation}
    This effectively stores the information about both the position of the current symbol (with the magnitude) as well as the identity of the symbol $\sym_\idxj$ (with the index of the non-zero entry).
    Unlike in the multi-head or multi-layer case, note that in this case, the function $\posInEmbedding$ is not a concatenation (or addition) of two separate components (one for symbols and one for positions).
\end{enumerate}

Ignoring residual connections, \cref{eq:single-head-single-layer-weights} then implies that 
\begin{equation}
    \va_{\tstep - 1} = \sum_{\idxj = \tstep - \ngr + 1}^{\tstep - 1} \frac{1}{\ngr - 1} 10^{-\idxj} \onehot{\sym_{\idxj}}.
\end{equation}
For simplicity, we now write $\va \defeq \va_{\tstep - 1}$.
The individual entries of $\va$ will correspond to symbols in $\bosalphabet$.
The \emph{digits} of these symbols will encode the positions of the symbols in the history.
Concretely, $\va$ will have a non-zero digit in the $\idxi\textsuperscript{th}$ position if and only if the symbol $\sym$ appears in the string $\strlt$ at position $\idxi$ for $\idxi \in \NTo{\tstep - 1}$.
For example, in a $5$-gram LM over the alphabet $\alphabet = \set{\syma, \symb}$, the contextual representation $\va$ for the history $\str_{\tstep - 4 : \tstep - 1} = \syma \symb \syma \syma$ will be
\begin{equation}
    \va = \frac{1}{4}\begin{pmatrix}
        10^{\tstep - 1} + 10^{\tstep - 3} + 10^{\tstep - 4} \\
        10^{\tstep - 2}
    \end{pmatrix}
    = \frac{1}{4} 10^{-\tstep}
    \begin{pmatrix}
        0.1011 \\
        0.0100
    \end{pmatrix}.
\end{equation}
Such representations therefore uniquely encode the history of interest.

\paragraph{Decoding the representations of the history.}
The representations $\va$, therefore, contain the information about the history of interest compactly represented in a $\bosnsymbols$-dimensional vector $\va$. 
We now want to construct a function that transforms the constructed vector $\va$ into a one-hot encoding of the history.
To make $\va$ invariant with respect to $\tstep$, we first scale it by $\frac{\ngr - 1}{10^{\ngr - \tstep}}$ and define
\begin{equation}
    \va' = \frac{\ngr - 1}{10^{\ngr - \tstep}} \va.
\end{equation}
Then
\begin{equation}
    \eva'_\sym = \sum_{\idxi = 1}^{\ngr - 1} \ind{\sym = \sym_{\idxi + \tstep - \ngr}} 10^{- \idxi}.
\end{equation}
In words, the $\bosnsymbols$ entries of $\va'$ are of the form $\eva'_\sym = 0.d_1 \ldots d_{\ngr - 1}$, where $d_i = 1$ if and only if $\sym$ appears in the history $\strlt$ at position $\tstep - \ngr + i$.

We now focus on a specific symbol $\sym \in \bosalphabet$ with the entry $\eva'_\sym$ in the vector $\va'$ and decode it into a vector that can be used to construct a one-hot encoding of the relevant history $\str_{\tstep - \ngr + 1}^{\tstep - 1}$ with $\fTransf$.
The ``decoding function'' will take the form of a $\ngr - 1$-layer $\ReLU$-activated MLP.
Intuitively, each of the $\ngr - 1$ layers will contain $\bosnsymbols$ neurons, where each of them will compute the values $d_\idxi$ for a particular $\sym \in \alphabet$.
Now, $d_1$ can be computed as
\begin{equation}
    d_1 = \ind{10^1 \cdot \eva_\sym - 1 + 10^{-\ngr} > 0}.
\end{equation}
Since $\ind{\cdot}$ is not a continuous function, and, unlike in \cref{sec:and}, the arguments can now take arbitrary real values, it cannot be implemented using a composition of $\ReLU$ functions.
We can, however, simulate the discontinuous function with a linear combination of two $\ReLU$ functions that together define the same function as $\ind{\cdot}$ on a subset of $\R$ relevant for our purposes.
Notice that, as long as $d_1 = 1$, we have that $10^1 \cdot \eva_\sym - 1 + 10^{-\ngr} \geq 10^{-\ngr}$ while we have $10^1 \cdot \eva_\sym - 1 + 10^{-\ngr + 1} = 10^1 \cdot 0 - 1 + 10^{-\ngr} < 0$ if $d_1 = 0$.
This means that our approximation of $\ind{\cdot}$ only has to map values greater or equal to $10^{-\ngr}$ to $1$, rather than all positive values.
This allows us to continuously transition from $0$ to $1$ as the input to the $\ReLU$ function increases from $0$ to $10^{-\ngr}$.
Such a piecewise linear approximation of $\ind{\cdot}$ can be easily implemented by a linear combination of $\ReLU$ functions, i.e., with an MLP.
See \cref{fig:ind-approx} for an illustration of the approximation.
\begin{figure}
    \centering
    \begin{tikzpicture}
        \begin{axis}[
                legend pos=north west,
                axis x line=middle,
                axis y line=middle,
                xtick={2}, 
                xticklabels={$\scriptstyle 10^{-\ngr}$}, 
                ytick={0,1},
                yticklabels={0,1},
                grid = major,
                width=0.75\textwidth,
                height=4cm,
                grid=none,
                xmin=-10,     
                xmax= 10,    
                ymin= -0.1,     
                ymax= 1.4,   
                xlabel=$x$,
                ylabel=$y$,
                tick align=outside,
                enlargelimits=false]
            \addplot[domain=-10:0, DarkOrange, thick, samples=500] {-0.01};
            \addplot[domain=-10:0, AccentBlue, thick,samples=500] {0.01};
            \addplot[domain=0:10, DarkOrange, thick, samples=500] {1.01};
            \addplot[domain=2:10, AccentBlue, thick,samples=500] {0.99};
            \addplot[domain=0:2, AccentBlue, thick,samples=500] {x/2};
        \end{axis}
    \end{tikzpicture}
    \caption{The \textcolor{DarkOrange}{step} function $\ind{x}$ and the \textcolor{AccentBlue}{approximated step} function that matches $\ind{x}$ outside of $\left[0, 10^{-\ngr}\right]$.}
    \label{fig:ind-approx}
\end{figure}

$d_2$ can then be computed as
\begin{equation}
    d_2 = \ind{10^2 \cdot \eva_\sym - 10^1 d_1 - 1 + 10^{-\ngr} > 0}
\end{equation}
and, in general,
\begin{equation}
    d_\idxi = \ind{10^\idxi \cdot \eva_\sym - \sum_{\idxj = 1}^{\idxi - 1} 10^{\idxi - \idxj} d_\idxj - 1 + 10^{-\ngr} > 0}.
\end{equation}
The computation of $d_\idxi$ requires $\idxi$ layers (since $d_{\idxj}$ for $\idxj < \idxi$ have to be computed first), meaning that $\ngr - 1$ layers are required in total.
Altogether, these layers compute the values $d_\idxi$ for a single $\sym \in \alphabet$.
Replicating this computation for every $\sym \in \alphabet$ and concatenating the results gives us the desired contextual representation $\vz$.

With this construction, it holds for every $\sym \in \alphabet$ that $d_\idxi = 1$ if and only if the symbol $\sym$ appears in the history $\str^{\tstep - 1}_{\tstep - \ngr + 1}$ at position $\tstep - \ngr + \idxi$.
This, therefore, gives us enough information to reconstruct the multi-hot encoding of the history of interest.
As in \cref{thm:transformers-n-gram-label}, this can then be converted into a one-hot encoding using another $\ReLU$ layer to implement the logical \texttt{AND} operation.
This intuition is formalized in the following section.

\subsubsection{Simulation with a Single Layer and a Single Head: Proofs}

We define the following parameters of the transformer head.\footnote{For simplicity, we ignore residual connections in this section since we do not require them and the omission makes the presentation cleaner. By duplicating the representations as in \cref{thm:transformers-n-gram-label}, residual connections could easily be added back to make this setting closer to the general transformer setting.}
\begin{itemize}
    \item Static encodings
    \begin{equation} \label{eq:single-static}
        \posInEmbedding\left(\symtminus, \idxj\right) \defeq\begin{pmatrix}
             10^{-\idxj} \cdot \onehot{\symtminus} \\ 1 \\ \tstep
        \end{pmatrix} \in \R^{\bosnsymbols + 3}
    \end{equation}
    \item Query, key, value, and output transformations
    \begin{subequations}
    \begin{alignat}{2}
        \qTransf\left(\vx\right) & \defeq \mQ \vx + \vb_\qTransf, \qquad &&\mQ \in \R^{2 \times \left(\bosnsymbols + 2\right)}, \vb_\qTransf \in \R^2,            \\
        \kTransf\left(\vx\right) & \defeq \mK \vx, \qquad &&\mK \in \R^{2 \times \left(\bosnsymbols + 2\right)},                                \\
        \vTransf\left(\vx\right) & \defeq \mV \vx, \qquad &&\mV \in \R^{\bosnsymbols \times \left(\bosnsymbols + 2\right)},                              \\
        \oTransf\left(\vx\right) & \defeq \mI_\bosnsymbols \vx,
    \end{alignat}
    \end{subequations}
    \begin{subequations}
    \begin{align}
        \mQ_{:, \bosnsymbols + 1: \bosnsymbols + 2}             & \defeq \begin{pmatrix}
            0 & 1 \\ -1 & 0
        \end{pmatrix},  \\
        \vb_{\qTransf}             & \defeq \begin{pmatrix}
            -\left(\ngr - 2\right) \\ 0
        \end{pmatrix} \\                 
        \mK_{:, \bosnsymbols + 1: \bosnsymbols + 2}             & \defeq \mI_2 \\
        \mV_{:, 1: \bosnsymbols} & \defeq \mI_\bosnsymbols \label{eq:single-value}
    \end{align}
    \end{subequations}
    where the unspecified elements of $\mQ, \mK$, $\mV$, $\mO$ are $0$.
\end{itemize}

A large part of the proof of correctness will rely on identifying the digits of the contextual representations of strings. 
We will rely heavily on the following definition.
\begin{definition}
    Let $x \in \Q \cap \left(10^{-\left(N + 1\right)}, 10^{-1}\right]$ be a rational-valued number with at most $N$ digits in its decimal representation.
    We define $\digit{x}{\idxi}$ as the $\idxi\textsuperscript{th}$ digit of $x$ for $\idxi \in \NTo{N}$.
    We also group these $N$ values into the vector $\digitVec{x}$:
    \begin{equation}
        \digitVec{x} \defeq \begin{pmatrix}
            \digit{x}{1} \\
            \vdots \\
            \digit{x}{N}
        \end{pmatrix}.
    \end{equation}
\end{definition}

\begin{lemma} \label{lem:digit-0}
    A transformer with the parameters and functions defined in \cref{eq:single-static}--\cref{eq:single-value} computes for string $\str \in \kleene{\alphabet}$
    \begin{equation}
        \va_{\tstep - 1} = \sum_{\idxj = \tstep - \ngr + 1}^{\tstep - 1} \frac{1}{\ngr - 1} 10^{-\idxj} \onehot{\sym_{\idxj}}.
    \end{equation}
\end{lemma}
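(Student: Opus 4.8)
The plan is to unfold \cref{def:transformer-layer} with the parameters fixed in \eqref{eq:single-static}--\eqref{eq:single-value}: compute the query and the keys explicitly, show that hard attention puts weight exactly $\frac{1}{\ngr-1}$ on each of the positions $\idxj\in\set{\tstep-\ngr+1,\dots,\tstep-1}$ and weight $0$ everywhere else, and then read off $\va_{\tstep-1}$ as the induced convex combination of the value vectors. As in \cref{lemma:hard-attention-lemma-3,lemma:multi-layer-lemma}, we assume the string is padded with $\ngr-1$ copies of $\bos$ at positions $-\ngr+2,\dots,0$, which guarantees that the positions $\tstep-\ngr+1,\dots,\tstep-1$ all exist.

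First I would compute the query at the current position $\tstep-1$ (recall $\tfencfun\left(\strlt\right)=\fTransf\left(\vx^{\tfnumlayer}_{\tstep-1}\right)$, so step $\tstep$ is modeled from position $\tstep-1$). The matrix $\mQ$ reads off the last two (positional) coordinates of $\posInEmbeddingFun{\sym_{\tstep-1},\tstep-1}$, which are $\left(1,\tstep-1\right)$; the $2\times 2$ block $\mQ_{:,\bosnsymbols+1:\bosnsymbols+2}$ sends $\left(1,\tstep-1\right)$ to $\left(\tstep-1,-1\right)$, and adding the bias $\vb_\qTransf$ yields $\vq_{\tstep-1}=\left(\tstep-\ngr+1,\,-1\right)^\top$. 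Analogously $\mK$ reads off the positional coordinates of $\posInEmbeddingFun{\sym_\idxj,\idxj}$, giving $\vk_\idxj=\left(1,\idxj\right)^\top$. Hence $\innerProd{\vq_{\tstep-1}}{\vk_\idxj}=\tstep-\ngr+1-\idxj$, so with the non-linear scoring function $\tfscorefun\left(\vq,\vk\right)=-\ReLUfunc{\innerProd{\vq}{\vk}}$ used in this construction we get $\tfscorefun\left(\vq_{\tstep-1},\vk_\idxj\right)=0$ whenever $\idxj\ge\tstep-\ngr+1$ and $\tfscorefun\left(\vq_{\tstep-1},\vk_\idxj\right)=\idxj-\left(\tstep-\ngr+1\right)<0$ whenever $\idxj\le\tstep-\ngr$.

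Next I would apply \cref{def:hard-attention}: over the causal window $\idxj\in\set{-\ngr+2,\dots,\tstep-1}$ the argmax set is precisely $\set{\tstep-\ngr+1,\dots,\tstep-1}$, of cardinality $m=\ngr-1$, so the normalized scores are $\evs_\idxj=\frac{1}{\ngr-1}\ind{\idxj\ge\tstep-\ngr+1}$, which is exactly \eqref{eq:single-head-single-layer-weights}. Finally, since $\mV_{:,1:\bosnsymbols}=\mI_\bosnsymbols$ and the remaining columns of $\mV$ vanish, the value at position $\idxj$ is $\vv_\idxj=10^{-\idxj}\onehot{\sym_\idxj}$; dropping residual connections as stipulated in this section, \eqref{eq:attention-sum} gives $\va_{\tstep-1}=\sum_{\idxj}\evs_\idxj\vv_\idxj=\sum_{\idxj=\tstep-\ngr+1}^{\tstep-1}\frac{1}{\ngr-1}10^{-\idxj}\onehot{\sym_\idxj}$, which is the claim.

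The only point requiring genuine care — and the step I would treat as the main obstacle — is verifying that the argmax window is exactly $\ngr-1$ positions wide and never runs off the left end of the (padded) string: this is where the $\bos$-padding and the offsetting bias $-(\ngr-2)$ in $\vb_\qTransf$ do their work, and it is also why the query must be formed from index $\tstep-1$ rather than $\tstep$. Everything else is a mechanical substitution into the definitions.
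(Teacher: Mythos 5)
Your proposal is correct and follows essentially the same route as the paper's proof: compute $\vq_{\tstep-1}=(\tstep-\ngr+1,-1)^\top$ and $\vk_\idxj=(1,\idxj)^\top$ from the positional slots, observe that $\tfscorefun\left(\vq_{\tstep-1},\vk_\idxj\right)=-\ReLUfunc{\tstep-\ngr+1-\idxj}$ is zero exactly on the $\ngr-1$ positions $\idxj\geq\tstep-\ngr+1$ and negative elsewhere, so hard attention distributes weight $\tfrac{1}{\ngr-1}$ over that window, and the values $\vv_\idxj=10^{-\idxj}\onehot{\sym_\idxj}$ then give the claimed sum. Your explicit remarks about the $\bos$-padding guaranteeing the window exists and about forming the query at index $\tstep-1$ are correct and, if anything, slightly more careful than the paper's own write-up.
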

\begin{proof}
    By construction, we have
    \begin{subequations}
    \begin{align}
        \vq_{\tstep - 1} &= \begin{pmatrix}
            \tstep - 1 - \left(\ngr - 2\right) \\
            -1
        \end{pmatrix} \\
        \vk_\idxj &= \begin{pmatrix}
            1 \\ 
            \idxj
        \end{pmatrix} \\
        \vv_\idxj &= 10^{-\idxj} \cdot \onehot{\symtminus},
    \end{align}
    \end{subequations}
    thus
    \begin{subequations}
    \begin{align}
        \tfscorefun\left(\vq_{\tstep - 1}, \vk_\idxj\right) 
        &= -\ReLUfunc{\innerProd{\vq_{\tstep - 1}}{\vk_\idxj}} \\
        &= -\ReLUfunc{\innerProd{\begin{pmatrix}
            \tstep - 1 - \left(\ngr - 2\right) \\
            -1
        \end{pmatrix}}{\begin{pmatrix}
            1 \\ 
            \idxj
        \end{pmatrix}}} \\
        &= -\ReLUfunc{\tstep - 1 - \ngr + 2 - \idxj} \\
        &= -\ReLUfunc{\tstep - \ngr + 1 - \idxj} \\
        &= \begin{cases}
            0 & \ifcondition \idxj \geq \tstep - \ngr + 1 \\
            < 0 &\otherwisecondition
        \end{cases}
    \end{align}
    \end{subequations}
    meaning that
    \begin{equation}
        \evs_\idxj = \frac{1}{\ngr - 1} \ind{\idxj \geq \tstep - \ngr + 1}.
    \end{equation}
    This means that 
    \begin{equation}
        \va_{\tstep - 1} = \sum_{\idxj = \tstep - \ngr + 1}^{\tstep - 1} \frac{1}{\ngr - 1} \vv_{\idxj} = \sum_{\idxj = \tstep - \ngr + 1}^{\tstep - 1} \frac{1}{\ngr - 1} 10^{-\idxj} \onehot{\sym_{\idxj}}
    \end{equation}
    which is what we needed to prove.
\end{proof}
\begin{lemma} \label{lem:digit-1}
    Define
    \begin{equation}
        \va' \defeq \frac{\ngr - 1}{10^{\ngr - \tstep}} \; \va_{\tstep - 1}
    \end{equation}
    with $\va_{\tstep - 1}$ from \cref{lem:digit-0}.
    Indexing the $\bosnsymbols$ elements of $\va'$ directly with $\sym \in \bosalphabet$, it holds that
    \begin{equation}
        \digit{\eva'_\sym}{\idxi} = 1 \iff \sym_{\tstep - \ngr + \idxi} = \sym
    \end{equation}
    for all $\idxi \in \NTo{N}$ and $\digit{x}{\idxi} = 0$ for all $\idxi > N$.
\end{lemma}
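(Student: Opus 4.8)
The plan is to substitute the closed form for $\va_{\tstep - 1}$ from \cref{lem:digit-0} into the definition of $\va'$, cancel the $\ngr - 1$ factor, reindex so that the exponents of $10$ become $-1, -2, \ldots, -(\ngr - 1)$, and then read the digits off directly; throughout it suffices to take the precision parameter $N = \ngr - 1$.

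First I would compute, using \cref{lem:digit-0},
\begin{equation}
    \va' = \frac{\ngr - 1}{10^{\ngr - \tstep}} \sum_{\idxj = \tstep - \ngr + 1}^{\tstep - 1} \frac{1}{\ngr - 1} 10^{-\idxj} \onehot{\sym_{\idxj}} = \sum_{\idxj = \tstep - \ngr + 1}^{\tstep - 1} 10^{\tstep - \ngr - \idxj} \onehot{\sym_{\idxj}}.
\end{equation}
Substituting $\idxi \defeq \idxj - \tstep + \ngr$, so that $\idxi$ ranges over $\NTo{\ngr - 1}$ and $\idxj = \tstep - \ngr + \idxi$, this becomes
\begin{equation}
    \va' = \sum_{\idxi = 1}^{\ngr - 1} 10^{-\idxi} \onehot{\sym_{\tstep - \ngr + \idxi}},
\end{equation}
and hence, indexing the $\bosnsymbols$ coordinates of $\va'$ directly with $\sym \in \bosalphabet$ and using $\onehot{\symPrime}_\sym = \ind{\symPrime = \sym}$,
\begin{equation}
    \eva'_\sym = \sum_{\idxi = 1}^{\ngr - 1} \ind{\sym_{\tstep - \ngr + \idxi} = \sym} \, 10^{-\idxi}.
\end{equation}

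The remaining step is the digit-extraction observation. Each coefficient $\ind{\sym_{\tstep - \ngr + \idxi} = \sym}$ lies in $\set{0, 1}$ and the powers of $10$ appearing are pairwise distinct, so no carrying occurs and the last display is literally the decimal expansion $0.d_1 d_2 \cdots d_{\ngr - 1}$ of $\eva'_\sym$ with $d_\idxi = \ind{\sym_{\tstep - \ngr + \idxi} = \sym}$. Therefore $\digit{\eva'_\sym}{\idxi} = 1$ exactly when $\sym_{\tstep - \ngr + \idxi} = \sym$, and $\digit{\eva'_\sym}{\idxi} = 0$ for every $\idxi > \ngr - 1$ since there is no term of that order; when $\sym$ does not occur in the history the entire sum is $0$ and all digits vanish trivially.

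The only point that needs a little care — and the single genuine obstacle here — is precisely this carry-free claim: it is what makes ``reading off the $\idxi\textsuperscript{th}$ digit'' legitimate, and it is why the construction works in base $10$ rather than base $2$, leaving the slack later exploited by the piecewise-linear $\ReLU$ approximation of $\ind{\cdot}$ in the $(\ngr - 1)$-layer MLP defining $\fTransf$. Everything else is just bookkeeping of exponents.
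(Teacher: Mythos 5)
Your proposal is correct and follows essentially the same route as the paper's proof: substitute the closed form of $\va_{\tstep-1}$ from \cref{lem:digit-0}, cancel the $\tfrac{\ngr-1}{10^{\ngr-\tstep}}$ factor, reindex so the exponents become $-1,\ldots,-(\ngr-1)$, and read off the digits (the paper performs the same change of variables, just in two steps). Your explicit remark that the expansion is carry-free because the coefficients lie in $\set{0,1}$ and the powers of $10$ are distinct is left implicit in the paper, but it is the same argument.
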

\begin{proof}
    By \cref{lem:digit-0}, $\va$ contains entries of the form
    \begin{subequations}
    \begin{align}
        \eva_\sym &= 
        \frac{1}{\ngr - 1} \sum_{\idxj = \tstep - \ngr + 1}^{\tstep - 1} \ind{\sym = \sym_\idxj} 10^{-\idxj} \\
        &= \frac{1}{\ngr - 1} \sum_{\idxj' = 1}^{\tstep - 1 - \left(\tstep - \ngr\right)} \ind{\sym = \sym_{\idxj' + \tstep - \ngr}} 10^{-\left(\idxj' + \tstep - \ngr\right)} \justification{Change of variables.} \\        
        &= \frac{1}{\ngr - 1} \sum_{\idxj = 1}^{\ngr - 1} \ind{\sym = \sym_{\idxj + \tstep - \ngr}} 10^{\ngr - \tstep - \idxj} \justification{Change of variables.} \\        
        &= \frac{10^{\ngr - \tstep}}{\ngr - 1} \sum_{\idxj = 1}^{\ngr - 1} \ind{\sym = \sym_{\idxj + \tstep - \ngr}} 10^{- \idxj} \justification{Distributivity.}
    \end{align}
    \end{subequations}
    for $\sym \in \bosalphabet$.
    Then
    \begin{subequations}
    \begin{align}
        \eva'_\sym &\defeq \frac{\ngr - 1}{10^{\ngr - \tstep}} \eva_\sym \\
        &= \frac{\ngr - 1}{10^{\ngr - \tstep}} \frac{10^{\ngr - \tstep}}{\ngr - 1} \sum_{\idxj = 1}^{\ngr - 1} \ind{\sym = \sym_{\idxj + \tstep - \ngr}} 10^{- \idxj} \\
        &= \sum_{\idxj = 1}^{\ngr - 1} \ind{\sym = \sym_{\idxj + \tstep - \ngr}} 10^{- \idxj}.
    \end{align}
    \end{subequations}
    This implies that $\digit{\eva'_\sym}{\idxi} = 1 \iff \sym_{\tstep - \ngr + \idxi} = \sym$ for $\idxi \in \NTo{N}$ and that $\digit{x}{\idxi} = 0$ for $\idxi > N$, which is what we wanted to prove.
\end{proof}

\begin{lemma} \label{lem:digit-equality}
Let $x \in \Q \cap \left(10^{-\left(N + 1\right)}, 10^{-1}\right]$ with $\digit{x}{\idxi} \in \set{0, 1}$ for $\idxi \in \NTo{N}$. 
Then, $\digit{x}{\idxi}$ satisfy the equality
\begin{equation} \label{eq:r-eq}
    \digit{x}{\idxi} = \ind{10^\idxi \cdot x - \sum_{\idxj = 1}^{\idxi - 1} 10^{\idxi - \idxj} \; \digit{x}{\idxj} - 1 + 10^{-\left(N + 1\right)} > 0}.
\end{equation}
for all $\idxi \in \NTo{N}$.
\end{lemma}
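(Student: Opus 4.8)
The plan is to expand $x$ in its terminating decimal representation and then read the claimed recurrence off almost directly. First I would write
\begin{equation}
    x = \sum_{\idxj = 1}^{N} \digit{x}{\idxj} \, 10^{-\idxj},
\end{equation}
which holds because $x$ admits a decimal representation with at most $N$ digits (as is required for $\digitVec{x}$ to be well-defined), and recall that by hypothesis every $\digit{x}{\idxj} \in \set{0, 1}$. Fixing $\idxi \in \NTo{N}$, multiplying by $10^\idxi$ and splitting the sum at $\idxj = \idxi$ yields
\begin{equation}
    10^\idxi \cdot x = \sum_{\idxj = 1}^{\idxi} \digit{x}{\idxj}\, 10^{\idxi - \idxj} + \rho_\idxi, \qquad \rho_\idxi \defeq \sum_{\idxj = \idxi + 1}^{N} \digit{x}{\idxj}\, 10^{\idxi - \idxj},
\end{equation}
and subtracting the $\idxj < \idxi$ terms isolates the $\idxi$-th digit: $10^\idxi \cdot x - \sum_{\idxj=1}^{\idxi-1} 10^{\idxi-\idxj}\,\digit{x}{\idxj} = \digit{x}{\idxi} + \rho_\idxi$. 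Consequently the argument of the indicator in \cref{eq:r-eq} is exactly $\digit{x}{\idxi} + \rho_\idxi - 1 + 10^{-(N+1)}$, and the whole lemma reduces to showing this is positive precisely when $\digit{x}{\idxi} = 1$.

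The second step is a tail bound on $\rho_\idxi$. Since each $\digit{x}{\idxj} \in \set{0,1}$, clearly $\rho_\idxi \geq 0$, and after reindexing by $\idxk = \idxj - \idxi$ and bounding each digit by $1$,
\begin{equation}
    \rho_\idxi \leq \sum_{\idxk = 1}^{N - \idxi} 10^{-\idxk} = \frac{1 - 10^{-(N - \idxi)}}{9} < \frac{1}{9} < 1 - 10^{-(N+1)},
\end{equation}
the last inequality because $N \geq 1$ gives $10^{-(N+1)} \leq 10^{-2} < 8/9$. I would then finish with a two-line case split. If $\digit{x}{\idxi} = 1$, the argument equals $\rho_\idxi + 10^{-(N+1)} \geq 10^{-(N+1)} > 0$, so the indicator returns $1 = \digit{x}{\idxi}$. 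If $\digit{x}{\idxi} = 0$, the argument equals $\rho_\idxi - \bigl(1 - 10^{-(N+1)}\bigr) < 0$ by the tail bound, so the indicator returns $0 = \digit{x}{\idxi}$. This establishes \cref{eq:r-eq} for every $\idxi \in \NTo{N}$.

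The step that needs the most care — and which I expect to be the only real subtlety — is the tail bound: it is not enough to observe $\rho_\idxi < 1$; one needs the strict slack $\rho_\idxi < 1 - 10^{-(N+1)}$. The additive constant $10^{-(N+1)}$ in \cref{eq:r-eq} is present precisely to keep the argument strictly positive in the worst $\digit{x}{\idxi} = 1$ case (all later digits zero, so $\rho_\idxi = 0$), and one must confirm it does not tip the argument above $0$ when $\digit{x}{\idxi} = 0$. The estimate above leaves ample room, so no genuine obstacle arises; the rest is bookkeeping with finite geometric sums.
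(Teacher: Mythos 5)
Your proposal is correct and follows essentially the same route as the paper's proof: expand $x$ in its decimal representation, multiply by $10^\idxi$, isolate $\digit{x}{\idxi}$ plus a tail term, and split into the two cases $\digit{x}{\idxi} \in \set{0,1}$. The only cosmetic difference is that you bound the tail explicitly by a geometric sum ($\rho_\idxi \leq 1/9$), whereas the paper simply observes that the tail plus $10^{-(N+1)}$ is strictly less than $1$; both suffice.
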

\begin{proof}
    Let $x \in \Q \cap \left(10^{-\left(N + 1\right)}, 10^{-1}\right]$ with $\digit{x}{\idxi} \in \set{0, 1}$ for $\idxi \in \NTo{N}$. 
    Then, by definition of $\digit{x}{\idxi}$, we have that
    \begin{equation}
        x = \sum_{\idxj = 1}^N \digit{x}{\idxj} 10^{-\idxj}.
    \end{equation}
    This means that
    \begin{subequations}
    \begin{align}
        10^\idxi x 
        &= 10^\idxi \sum_{\idxj = 1}^N \digit{x}{\idxj} 10^{- \idxj} \\
        &= \sum_{\idxj = 1}^N \digit{x}{\idxj} 10^{\idxi - \idxj} \\
        &= \sum_{\idxj = 1}^{\idxi - 1} \digit{x}{\idxj} 10^{\idxi - \idxj} + \digit{x}{\idxi} + \sum_{\idxj = \idxi + 1}^{N} \digit{x}{\idxj} 10^{\idxi - \idxj},
    \end{align}
    \end{subequations}
    implying that
    \begin{equation}
        10^\idxi \cdot x - \sum_{\idxj = 1}^{\idxi - 1} 10^{\idxi - \idxj} \; \digit{x}{\idxj} = \digit{x}{\idxi} + \sum_{\idxj = \idxi + 1}^{N} \digit{x}{\idxj} 10^{\idxi - \idxj}.
    \end{equation}
    
    Suppose now that $\digit{x}{\idxi} = 1$. 
    Then 
    \begin{subequations}
    \begin{align}
        10^\idxi \cdot x - \sum_{\idxj = 1}^{\idxi - 1} 10^{\idxi - \idxj} \; \digit{x}{\idxj} - 1 + 10^{-\left(N + 1\right)} 
        &= \digit{x}{\idxi} + \sum_{\idxj = \idxi + 1}^{N} \digit{x}{\idxj} 10^{\idxi - \idxj}  - 1 + 10^{-\left(N + 1\right)} \\
        &= 1 + \sum_{\idxj = \idxi + 1}^{N} \digit{x}{\idxj} 10^{\idxi - \idxj} - 1 + 10^{-\left(N + 1\right)} \\
        &= \sum_{\idxj = \idxi + 1}^{N} \digit{x}{\idxj} 10^{\idxi - \idxj} + 10^{-\left(N + 1\right)} > 0,
    \end{align}
    \end{subequations}
    meaning that 
    \begin{equation}
        \ind{10^\idxi \cdot x - \sum_{\idxj = 1}^{\idxi - 1} 10^{\idxi - \idxj} \; \digit{x}{\idxj} - 1 + 10^{-\left(N + 1\right)}} = 1 = \digit{x}{\idxi}.
    \end{equation}

    Suppose, on the contrary, that $\digit{x}{\idxi} = 0$. 
    Then 
    \begin{subequations}
    \begin{align}
        10^\idxi \cdot x - \sum_{\idxj = 1}^{\idxi - 1} 10^{\idxi - \idxj} \; \digit{x}{\idxj} - 1 + 10^{-\left(N + 1\right)} 
        &= \digit{x}{\idxi} + \sum_{\idxj = \idxi + 1}^{N} \digit{x}{\idxj} 10^{\idxi - \idxj}  - 1 + 10^{-\left(N + 1\right)} \\
        &= 0 + \sum_{\idxj = \idxi + 1}^{N} \digit{x}{\idxj} 10^{\idxi - \idxj} - 1 + 10^{-\left(N + 1\right)} \\
        &= \sum_{\idxj = \idxi + 1}^{N} \digit{x}{\idxj} 10^{\idxi - \idxj} + 10^{-\left(N + 1\right)} - 1 \\
        &= \sum_{\idxj' = 1}^{N - \left(\idxi + 1\right)} \digit{x}{\idxj' + \idxi + 1} 10^{\idxi - \idxj' - \idxi - 1} + 10^{-\left(N + 1\right)} - 1 \\
        &= \underbrace{\sum_{\idxj' = 1}^{N - \left(\idxi + 1\right)} \digit{x}{\idxj' + \idxi + 1} 10^{- \idxj' - 1} + 10^{-\left(N + 1\right)}}_{< 1} - 1 < 0 
    \end{align}
    \end{subequations}
    meaning that 
    \begin{equation}
        \ind{10^\idxi \cdot x - \sum_{\idxj = 1}^{\idxi - 1} 10^{\idxi - \idxj} \; \digit{x}{\idxj} - 1 + 10^{-\left(N + 1\right)}} = 0 = \digit{x}{\idxi}.
    \end{equation}
    This finishes the proof.
\end{proof}

\begin{lemma} \label{lemma:single-lemma-2}
    Let $x \in \Q \cap \left(10^{-\left(N + 1\right)}, 10^{-1}\right]$ with $\digit{x}{\idxi} \in \set{0, 1}$ for $\idxi \in \NTo{N}$. 
    Given $\digit{x}{\idxj}$ for $\idxj < \idxi$, $\digit{x}{\idxi}$ can be computed by a single layer MLP.
\end{lemma}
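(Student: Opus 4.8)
The plan is to read the construction straight off \cref{lem:digit-equality}. That lemma already gives a closed form for the target digit,
\begin{equation}
    \digit{x}{\idxi} = \ind{z_\idxi > 0}, \qquad z_\idxi \defeq 10^\idxi \cdot x - \sum_{\idxj = 1}^{\idxi - 1} 10^{\idxi - \idxj} \, \digit{x}{\idxj} - 1 + 10^{-\left(N + 1\right)},
\end{equation}
and the crucial point is that $z_\idxi$ is an \emph{affine} function of the quantities $x, \digit{x}{1}, \ldots, \digit{x}{\idxi - 1}$, which are exactly the inputs available to the MLP. So first I would note that $z_\idxi$ can be produced as the pre-activation of a hidden neuron with weight $10^\idxi$ on the $x$-coordinate, weights $-10^{\idxi - \idxj}$ on the $\digit{x}{\idxj}$-coordinates, and bias $-1 + 10^{-\left(N + 1\right)}$.

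Next I would invoke the separation property that is already established inside the proof of \cref{lem:digit-equality}: the value $z_\idxi$ never lies strictly between $0$ and $10^{-\left(N+1\right)}$, since $z_\idxi < 0$ when $\digit{x}{\idxi} = 0$, whereas $z_\idxi = \sum_{\idxj = \idxi+1}^{N} \digit{x}{\idxj} \, 10^{\idxi - \idxj} + 10^{-\left(N+1\right)} \geq 10^{-\left(N+1\right)}$ when $\digit{x}{\idxi} = 1$. Therefore it suffices to apply to $z_\idxi$ the clipped linear ramp
\begin{equation}
    g(z) \defeq 10^{N+1}\left(\ReLUfunc{z} - \ReLUfunc{z - 10^{-\left(N+1\right)}}\right),
\end{equation}
which equals $0$ for $z \le 0$, equals $1$ for $z \ge 10^{-\left(N+1\right)}$, and hence returns exactly $\digit{x}{\idxi}$ on the admissible values of $z_\idxi$ — this is precisely the piecewise-linear gadget depicted in \cref{fig:ind-approx}.

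Finally I would observe that the composite $g \circ z_\idxi$ is exactly a one-hidden-layer $\ReLU$-activated MLP in the sense of \cref{def:mlp}: the hidden layer has the two neurons $\ReLUfunc{z_\idxi}$ and $\ReLUfunc{z_\idxi - 10^{-\left(N+1\right)}}$ (same affine coefficient vector, biases $-1 + 10^{-\left(N+1\right)}$ and $-1$), and the linear output layer combines them with coefficients $10^{N+1}$ and $-10^{N+1}$ and zero bias, so that $\mlp\!\left(x, \digit{x}{1}, \ldots, \digit{x}{\idxi - 1}\right) = g(z_\idxi) = \digit{x}{\idxi}$. The hypotheses of \cref{lem:digit-equality} are in force because, by \cref{lem:digit-1}, the relevant $x = \eva'_\sym$ lies in $\Q \cap \left(10^{-\left(N+1\right)}, 10^{-1}\right]$ with digits in $\set{0, 1}$, so the same MLP works uniformly over all $\sym \in \bosalphabet$ and $\idxi \in \NTo{N}$. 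I expect the only step needing genuine work to be the separation claim $z_\idxi \notin \left(0, 10^{-\left(N+1\right)}\right)$, and that has already been discharged in \cref{lem:digit-equality}; what remains is just bookkeeping of the affine coefficients and the two-$\ReLU$ construction.
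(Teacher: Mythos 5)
Your proposal is correct and follows essentially the same route as the paper's proof: it invokes \cref{lem:digit-equality} for the affine-plus-indicator form, uses the same separation of the pre-activation into $\left(-\infty, 0\right] \cup \left[10^{-\left(N+1\right)}, \infty\right)$, and implements the indicator with the identical two-$\ReLU$ ramp $10^{N+1}\left(\ReLUfunc{z} - \ReLUfunc{z - 10^{-\left(N+1\right)}}\right)$ composed into a single hidden layer. No gaps.
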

\begin{proof}
    By \cref{lem:digit-equality}, we can use the knowledge of $\digit{x}{\idxj}$ for $\idxj < \idxi$, $\digit{x}{\idxi}$ to implement the function
    \begin{equation} \label{eq:r-eq-again}
        \digit{x}{\idxi} = \ind{10^\idxi \cdot x - \sum_{\idxj = 1}^{\idxi - 1} 10^{\idxi - \idxj} \; \digit{x}{\idxj} - 1 + 10^{-\left(N + 1\right)} > 0}.
    \end{equation}
    with an MLP.
    For any $\idxi \in \NTo{N}$, the inner function 
    \begin{subequations}
    \begin{align}
        \begin{pmatrix}
            \digit{x}{1} \\
            \vdots \\
            \digit{x}{\idxi -1} \\
            x
        \end{pmatrix} 
        &\mapsto 10^\idxi \cdot x - \sum_{\idxj = 1}^{\idxi - 1} 10^{\idxi - \idxj} \; \digit{x}{\idxj} - 1 + 10^{-\left(N + 1\right)} \\
        &=
        \innerProd{
        \underbrace{\begin{pmatrix}
            -10^{\idxi - 1} \\
            \vdots \\
            -10^{1} \\
            10^\idxi
        \end{pmatrix}}_{\mW^\top}
        }{
        \begin{pmatrix}
            \digit{x}{1} \\
            \vdots \\
            \digit{x}{\idxi -1} \\
            x
        \end{pmatrix}
        } \underbrace{- 1 + 10^{-\left(N + 1\right)}}_{\vb} \label{eq:digit-mlp-1}
    \end{align}
    \end{subequations}
    is an affine transformation.
    The indicator function in \cref{eq:r-eq-again}, however, is discontinuous and can thus not be implemented by a composition of continuous $\ReLU$ MLPs.
    Here, we take advantage of the fact that 
    \begin{equation}
        10^\idxi \cdot x - \sum_{\idxj = 1}^{\idxi - 1} 10^{\idxi - \idxj} \; \digit{x}{\idxj} - 1 + 10^{-\left(N + 1\right)} \in \underbrace{\left(-\infty, 0\right] \cup \left[10^{-\left(N + 1\right)}, \infty\right)}_{\sI}.
    \end{equation}
    The MLP 
    \begin{subequations}
    \begin{align}
        \mlp_\sI \left(z\right) &= 10^{N + 1} \left(\ReLUfunc{z} - \ReLUfunc{z - 10^{-\left(N + 1\right)}}\right) \\
        &= \begin{pmatrix}
            10^{N + 1} & -10^{N + 1}
        \end{pmatrix}
        \ReLUfunc{
        \begin{pmatrix}
            1 \\ 1
        \end{pmatrix} z +
        \begin{pmatrix}
            0 \\ -10^{-\left(N + 1\right)}
        \end{pmatrix}
        } \label{eq:digit-mlp-2}
    \end{align}
    \end{subequations}
    matches $\ind{\cdot > 0}$ on $\sI$, as we show next.
    See \cref{fig:ind-approx} for an illustration of the approximation.
    First, assume that $z \leq 0$.
    Then
    \begin{equation}
        \mlp_\sI \left(z\right) = 10^{N + 1} \left(\ReLUfunc{z} - \ReLUfunc{z - 10^{-\left(N + 1\right)}}\right) = 10^{N + 1} \left(0 - 0\right) = 0 = \ind{0 > 0}.
    \end{equation}
    On the contrary, assuming $z \geq 10^{-\left(N + 1\right)}$, we have that
    \begin{subequations}
    \begin{align}
        \mlp_\sI \left(z\right) 
        &= 10^{N + 1} \left(\ReLUfunc{z} - \ReLUfunc{z - 10^{-\left(N + 1\right)}}\right) \\
        &= 10^{N + 1} \left(z - \left(z - 10^{-\left(N + 1\right)}\right)\right) \\
        &= 10^{N + 1} \left(z - z + 10^{-\left(N + 1\right)}\right) \\
        &= 10^{N + 1} \left(10^{-\left(N + 1\right)}\right) \\
        &= 1 = \ind{z > 0}.
    \end{align}
    \end{subequations}
    We can therefore construct the MLP $\mlp$ computing \cref{eq:r-eq-again} on $\sI$ by a composition of \cref{eq:digit-mlp-1} (computing $z$ in \cref{eq:digit-mlp-2}) and the MLP $\mlp_\sI$ from \cref{eq:digit-mlp-2}.
\end{proof}

\begin{lemma} \label{lem:single-mlp}
    Let $N \in \N$.
    There exists an MLP $\mlp$ such that
    \begin{equation}
        \mlp\left(x\right) = \digitVec{x}
    \end{equation}
    for all $x \in \Q \cap \left(10^{-\left(N + 1\right)}, 10^{-1}\right]$ with $\digit{x}{\idxi} \in \set{0, 1}$ for $\idxi \in \NTo{N}$. 
\end{lemma}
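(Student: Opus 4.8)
The plan is to stack $N$ ``digit blocks,'' one per position $\idxi \in \NTo{N}$, and compose them into a single $\ReLU$ MLP. The $\idxi$\textsuperscript{th} block implements the single-layer computation of \cref{lemma:single-lemma-2}, which---given $x$ together with $\digit{x}{1}, \ldots, \digit{x}{\idxi - 1}$---outputs $\digit{x}{\idxi}$. To make the blocks composable, each block must also \emph{forward} the quantities it will need downstream, namely $x$ and all digits extracted so far. The key observation enabling this is that every such quantity is nonnegative---$x \in \left(10^{-\left(N + 1\right)}, 10^{-1}\right]$ and $\digit{x}{\idxj} \in \set{0, 1}$---so the map $z \mapsto \ReLUfunc{z}$ acts as the identity on it; hence a block can carry each of $x, \digit{x}{1}, \ldots, \digit{x}{\idxi - 1}$ to its output through a dedicated identity (``passthrough'') $\ReLU$ neuron.

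Concretely, I would prove by induction on $\idxi \in \set{0} \cup \NTo{N}$ that there is a $\ReLU$ MLP $\mlp^{(\idxi)}$ with $\mlp^{(\idxi)}\left(x\right) = \left(\digit{x}{1}; \cdots; \digit{x}{\idxi}; x\right)$ for all admissible $x$. The base case $\idxi = 0$ is the one-neuron network $x \mapsto \ReLUfunc{x} = x$. For the inductive step, append to $\mlp^{(\idxi - 1)}$ a block consisting of \textit{(i)} passthrough neurons reproducing $\digit{x}{1}, \ldots, \digit{x}{\idxi - 1}$ and $x$, and \textit{(ii)} the subnetwork from \cref{lemma:single-lemma-2}---the affine map of \cref{eq:digit-mlp-1} followed by $\mlp_\sI$ of \cref{eq:digit-mlp-2}---fed from the $\digit{x}{\idxj}$ ($\idxj < \idxi$) and $x$ coordinates, producing $\digit{x}{\idxi}$. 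By \cref{lem:digit-equality} the scalar handed to $\mlp_\sI$ always lies in $\sI$, where $\mlp_\sI$ exactly equals $\ind{\cdot > 0}$, so the block emits the true $\idxi$\textsuperscript{th} digit. Taking $\mlp$ to be $\mlp^{(N)}$ followed by a final affine layer that projects away the last coordinate yields $\mlp\left(x\right) = \digitVec{x}$; the resulting network has $\bigO{N}$ layers.

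The construction is essentially bookkeeping, so the only point needing care is synchronizing the internal two-layer structure of each block (the affine preprocessing plus $\mlp_\sI$, which has one hidden layer) with the passthrough depth: each forwarded coordinate must be relayed across the same number of layers via an intermediate identity $\ReLU$ neuron, so that all coordinates of a block's output are available simultaneously to the next block. I would also remark that $N$ should be taken at least $\ngr - 1$, so that by \cref{lem:digit-1} all digits past position $\ngr - 1$ vanish and the decoded vector $\digitVec{x}$ faithfully records the history $\str^{\tstep - 1}_{\tstep - \ngr + 1}$.
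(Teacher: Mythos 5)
Your proposal is correct and follows essentially the same route as the paper's proof: both iteratively extract one digit per block via \cref{lemma:single-lemma-2}, forwarding $x$ and the already-computed digits through identity coordinates, and compose the $N$ blocks into a single MLP. The only differences are bookkeeping (the paper duplicates $x$ into $N$ copies up front and consumes one per layer, while you forward a single copy and explicitly note the $\ReLU$-as-identity trick and the depth synchronization), which do not change the argument.
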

\begin{proof}
    At a very high level, we will construct an $N$-layer MLP performing the transformations 
    \begin{equation}
        x \mapsto \begin{pmatrix}
            x \\
            x \\
            \vdots \\
            x
        \end{pmatrix} \mapsto \begin{pmatrix}
            \digit{x}{1} \\
            x \\
            \vdots \\
            x
        \end{pmatrix} \mapsto \begin{pmatrix} 
            \digit{x}{1} \\
            \digit{x}{2} \\
            \vdots \\
            x
        \end{pmatrix} \mapsto \cdots \mapsto \begin{pmatrix}
            \digit{x}{1} \\
            \digit{x}{2} \\
            \vdots \\
            \digit{x}{N}
        \end{pmatrix} = \digitVec{x}.
    \end{equation}
    By \cref{lemma:single-lemma-2}, all individual transformations can be performed exactly by a single-layer MLP.
    The composition of the $N$ layers results in the vector $\digitVec{x}$.

    The transformation $x \mapsto \begin{pmatrix}
            x \\
            x \\
            \vdots \\
            x
        \end{pmatrix}$ is a simple linear transformation.
    We now construct the $\ell\textsuperscript{th}$ layer $\vfunc_\ell$ of the MLP with parameters $\mW_\ell$ and $\vb_\ell$ (cf. \cref{def:mlp}), assuming that it has the input $\begin{pmatrix} 
            \digit{x}{1} \\
            \digit{x}{2} \\
            \vdots \\
            \digit{x}{\ell - 1} \\
            x \\
            \vdots \\
            x
        \end{pmatrix}$.
    The layer $\vfunc_\ell$ has to satisfy
    \begin{equation} \label{eq:single-mlp-layer-req}
        \vfunc\left(\begin{pmatrix} 
            \digit{x}{1} \\
            \digit{x}{2} \\
            \vdots \\
            \digit{x}{\ell - 1} \\
            \textcolor{ETHRed}{x} \\
            x \\
            \vdots \\
            x
        \end{pmatrix}\right) = 
        \begin{pmatrix} 
            \digit{x}{1} \\
            \digit{x}{2} \\
            \vdots \\
            \digit{x}{\ell - 1} \\
            \textcolor{ETHRed}{\digit{x}{\ell}} \\
            x \\
            \vdots \\
            x
        \end{pmatrix},
    \end{equation}
    i.e., it must copy all $\ngr - 1$ entries apart from the $\ell\textsuperscript{th}$ one.
    Thus, $\mW_{\idxk, \idxk'} = \ind{k = k'}$ for $\idxk \neq \ell$ and $\vb_{\idxk} = 0$ for all $\idxk \neq \ell$ (here, we write $\mW$ and $\vb$ for $\mW_\ell$ and $\vb_\ell$ to avoid clutter).
    To define the remaining $\ell\textsuperscript{th}$ row, we use \cref{lemma:single-lemma-2}.
    It tells us that defining
    \begin{subequations}
    \begin{align}
        \mW_{\ell, :} &\defeq \begin{pmatrix}
            10^{\ell - 1}& \cdots & 10^{1} & 10^\ell & 0 & \cdots & 0 \\
            10^{\ell - 1}& \cdots & 10^{1} & 10^\ell & 0 & \cdots & 0 
        \end{pmatrix} \\
        \evb_{\ell} &\defeq \begin{pmatrix}
            -1 + 10^{-\ngr} \\
            -1 + 10^{-\ngr} - 10^{-\ngr}
        \end{pmatrix} = \begin{pmatrix}
            -1 + 10^{-\ngr} \\
            -1
        \end{pmatrix}
    \end{align}
    \end{subequations}
    will result in the $\ell\textsuperscript{th}$ row of $\vfunc_\ell$ computing exactly $\digit{x}{\ell}$ after being multiplied by the matrix 
    \begin{equation}
        \mW' \defeq \begin{pmatrix}
            10^{\ngr} & 10^{\ngr}
        \end{pmatrix}.
    \end{equation}
    This is what \cref{eq:single-mlp-layer-req} requires.
    The parameters $\mW_{\ell, :}$ and $\evb_\ell$ represent the parameters of the affine transformation in \cref{lemma:single-lemma-2}.
    Note that the matrix $\mW'$ is not part of the original definition of the MLP.
    However, it can easily be absorbed into the matrix $\mW_{\ell + 1}$ in the actual implementation (at the cost of duplicating the size of the hidden state).
    We keep it here to make the presentation more intuitive. 
    Since any layer $\vfunc_\ell$ can be defined like this, and the final MLP is their composition, this finishes the proof.
\end{proof}

\begin{lemma} \label{lem:norm-a}
    Given $\va_{\tstep - 1}$ from \cref{lem:digit-0}, it holds that
    \begin{equation}
        \norm{\va_{\tstep - 1}}_1 = \frac{10^{\ngr - 1 - \tstep}}{\ngr - 1} \frac{1 - \frac{1}{10^{\ngr - 1}}}{1 - \frac{1}{10}}.
    \end{equation}
\end{lemma}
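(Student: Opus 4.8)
The plan is to observe that $\va_{\tstep-1}$, as given by \cref{lem:digit-0}, is a nonnegative linear combination of one-hot vectors, so its $L^1$ norm is simply the sum of all the scalar coefficients appearing in the combination. Indeed, since every $\onehot{\sym_\idxj}$ has nonnegative entries and each coefficient $\frac{1}{\ngr-1}10^{-\idxj}$ is positive, the vector $\va_{\tstep-1}$ has only nonnegative entries, and hence $\norm{\va_{\tstep-1}}_1 = \vone^\top \va_{\tstep-1} = \sum_{\idxj = \tstep-\ngr+1}^{\tstep-1} \frac{1}{\ngr-1} 10^{-\idxj} \norm{\onehot{\sym_\idxj}}_1 = \sum_{\idxj=\tstep-\ngr+1}^{\tstep-1}\frac{1}{\ngr-1}10^{-\idxj}$, using $\norm{\onehot{\sym}}_1 = 1$. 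Note that this holds even when the history $\str^{\tstep-1}_{\tstep-\ngr+1}$ contains repeated symbols: collisions merely add contributions within a single coordinate, which does not change the total sum of entries.

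Next I would evaluate the resulting finite geometric sum. Performing the change of variables $\idxj = \idxj' + \tstep - \ngr$ (as in the proof of \cref{lem:digit-1}), the index $\idxj'$ runs over $1, \ldots, \ngr-1$, giving $\sum_{\idxj=\tstep-\ngr+1}^{\tstep-1} 10^{-\idxj} = 10^{\ngr-\tstep}\sum_{\idxj'=1}^{\ngr-1}10^{-\idxj'}$. Applying the standard formula for a geometric series, $\sum_{\idxj'=1}^{\ngr-1}10^{-\idxj'} = \frac{1}{10}\cdot\frac{1 - 10^{-(\ngr-1)}}{1 - 10^{-1}}$, and multiplying through by $\frac{1}{\ngr-1}$, I obtain
\begin{equation}
    \norm{\va_{\tstep-1}}_1 = \frac{10^{\ngr-\tstep}}{\ngr-1}\cdot\frac{1}{10}\cdot\frac{1 - \frac{1}{10^{\ngr-1}}}{1 - \frac{1}{10}} = \frac{10^{\ngr-1-\tstep}}{\ngr-1}\cdot\frac{1 - \frac{1}{10^{\ngr-1}}}{1 - \frac{1}{10}},
\end{equation}
which is exactly the claimed identity.

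There is no real obstacle here: the statement is a routine computation once one notes the nonnegativity of $\va_{\tstep-1}$, and the only thing to be slightly careful about is keeping the exponents of $10$ straight (the $\frac{1}{10}$ factor from starting the sum at $\idxj'=1$ rather than $\idxj'=0$ is what turns $10^{\ngr-\tstep}$ into $10^{\ngr-1-\tstep}$). I would therefore keep the write-up short, citing \cref{lem:digit-0} for the form of $\va_{\tstep-1}$ and simply chaining the two displays above.
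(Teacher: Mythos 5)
Your proof is correct and follows essentially the same route as the paper's: read off the $1$-norm as the sum of the nonnegative coefficients (the paper does this implicitly; you additionally justify it and handle repeated symbols) and then evaluate the finite geometric series, with only a cosmetic difference in the choice of re-indexing. No changes needed.
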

\begin{proof}
    By \cref{lem:digit-1}, we have that 
    \begin{equation}
        \va \defeq \va_{\tstep - 1} = \sum_{\idxj = \tstep - \ngr + 1}^{\tstep - 1} \frac{1}{\ngr - 1} 10^{-\idxj} \onehot{\sym_{\idxj}}
    \end{equation}
    We compute:
    \begin{subequations}
        \begin{align}
            \norm{\va}_1 
            &= \frac{1}{\ngr - 1} \sum_{\idxj = \tstep - \ngr + 1}^{\tstep - 1} 10^{-\idxj} \\
            &= \frac{1}{\ngr - 1} \sum_{\idxj' = 0}^{\tstep - 1 - \left(\tstep - \ngr + 1\right)} 10^{-\left(\idxj' + \tstep - \ngr + 1\right)} \\
            &= \frac{1}{\ngr - 1} 10^{-\left(\tstep - \ngr + 1\right)} \sum_{\idxj = 0}^{\ngr - 2} \frac{1}{10^{\idxj}} \\
            &= \frac{10^{-\left(\tstep - \ngr + 1\right)}}{\ngr - 1} \frac{1 - \frac{1}{10^{\ngr - 1}}}{1 - \frac{1}{10}} \\
            &= \frac{10^{\ngr - 1 - \tstep}}{\ngr - 1} \frac{1 - \frac{1}{10^{\ngr - 1}}}{1 - \frac{1}{10}}.
        \end{align}
    \end{subequations}
\end{proof}

\begin{corollary} \label{cor:norm-a}
    Given $\va \defeq \va_{\tstep - 1}$ from \cref{lem:digit-0} and $\va' \defeq \va'_{\tstep - 1}$ from \cref{lem:digit-1}, it holds that
    \begin{equation}
        \frac{1}{\norm{\va}_1} \cdot 10 \cdot \frac{1 - \frac{1}{10^{\ngr - 1}}}{1 - \frac{1}{10}} \cdot \va = \va'.
    \end{equation}
\end{corollary}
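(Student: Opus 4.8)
The plan is to derive the identity by a short scalar computation that chains together \cref{lem:digit-1} and \cref{lem:norm-a}. The key structural observation is that $\va$ and $\va'$ are the \emph{same} vector up to a positive scalar: by the very definition given in \cref{lem:digit-1}, $\va' = \frac{\ngr - 1}{10^{\ngr - \tstep}}\,\va$. Hence the claimed equation is purely an identity between two scalar multiples of $\va$, and since every entry of $\va$ is nonnegative it suffices to check that the coefficient $\frac{1}{\norm{\va}_1}\cdot 10 \cdot \frac{1 - \frac{1}{10^{\ngr-1}}}{1 - \frac{1}{10}}$ agrees with $\frac{\ngr-1}{10^{\ngr-\tstep}}$; no vector-level reasoning beyond \cref{lem:digit-1} is needed.

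First I would invoke \cref{lem:norm-a}, which states $\norm{\va}_1 = \frac{10^{\ngr - 1 - \tstep}}{\ngr - 1}\cdot\frac{1 - \frac{1}{10^{\ngr-1}}}{1 - \frac{1}{10}}$, and solve it for the position-dependent factor, obtaining $\frac{\ngr-1}{10^{\ngr-1-\tstep}} = \frac{1}{\norm{\va}_1}\cdot\frac{1 - \frac{1}{10^{\ngr-1}}}{1 - \frac{1}{10}}$. Next I would convert the exponent $\ngr-1-\tstep$ into $\ngr-\tstep$ using $10^{\ngr-\tstep} = 10\cdot 10^{\ngr-1-\tstep}$, so that the right-hand side picks up the explicit factor $10$; multiplying both sides by $\va$ and comparing with $\va' = \frac{\ngr-1}{10^{\ngr-\tstep}}\,\va$ from \cref{lem:digit-1} then gives the result. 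An even cleaner self-contained route, which I would use if I wanted to avoid re-deriving \cref{lem:norm-a}'s closed form, is to note from \cref{lem:digit-1} that $\eva'_\sym = \sum_{\idxi=1}^{\ngr-1}\ind{\sym = \sym_{\idxi+\tstep-\ngr}}10^{-\idxi}$, whence $\norm{\va'}_1 = \sum_{\idxi=1}^{\ngr-1}10^{-\idxi} = \frac{1}{10}\cdot\frac{1 - \frac{1}{10^{\ngr-1}}}{1 - \frac{1}{10}}$; then $\va' = \frac{\norm{\va'}_1}{\norm{\va}_1}\,\va$ because $\va'$ is a nonnegative scalar multiple of $\va$, and substituting $\norm{\va}_1$ finishes the argument.

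I expect no genuine obstacle here: the whole content is bookkeeping. The one place to be careful is the arithmetic on the powers of $10$ (keeping straight that $10^{-(\tstep-\ngr+1)} = 10^{\ngr-1-\tstep}$ and that $10^{\ngr-\tstep}$ differs from $10^{\ngr-1-\tstep}$ by exactly one factor of $10$), together with making sure the geometric-series factor $\frac{1 - \frac{1}{10^{\ngr-1}}}{1 - \frac{1}{10}}$ — which enters $\norm{\va}_1$ as the sum $\sum_{\idxj=0}^{\ngr-2}10^{-\idxj}$ in the proof of \cref{lem:norm-a} — cancels cleanly against the prefactor rather than being carried incorrectly. Once those exponents are matched, the equality is immediate.
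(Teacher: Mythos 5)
Your route is the same as the paper's: take the closed form of $\norm{\va}_1$ from \cref{lem:norm-a}, take $\va' = \frac{\ngr-1}{10^{\ngr-\tstep}}\,\va$ from \cref{lem:digit-1}, and match scalar coefficients. The bookkeeping, however, does not land where you say it does. From \cref{lem:norm-a} you correctly get $\frac{1}{\norm{\va}_1}\cdot\frac{1-\frac{1}{10^{\ngr-1}}}{1-\frac{1}{10}} = \frac{\ngr-1}{10^{\ngr-1-\tstep}}$, but rewriting the right-hand side via $10^{\ngr-\tstep}=10\cdot 10^{\ngr-1-\tstep}$ gives $\frac{\ngr-1}{10^{\ngr-1-\tstep}} = 10\cdot\frac{\ngr-1}{10^{\ngr-\tstep}}$, hence
\[
\va' \;=\; \frac{\ngr-1}{10^{\ngr-\tstep}}\,\va \;=\; \frac{1}{\norm{\va}_1}\cdot\frac{1}{10}\cdot\frac{1-\frac{1}{10^{\ngr-1}}}{1-\frac{1}{10}}\cdot\va,
\]
i.e.\ the prefactor is $\frac{1}{10}$, not $10$. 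Your second, self-contained route makes this even more visible: you compute $\norm{\va'}_1=\frac{1}{10}\cdot\frac{1-\frac{1}{10^{\ngr-1}}}{1-\frac{1}{10}}$, and since $\va'=\frac{\norm{\va'}_1}{\norm{\va}_1}\va$, the coefficient is again $\frac{1}{10}$ times the geometric factor. So your own intermediate results contradict the identity you claim to have verified; the assertion that the exponent shift ``picks up the explicit factor $10$'' in the numerator is where the arithmetic goes astray.

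To be fair, the corollary as printed contains a typo: the $10$ should be $\frac{1}{10}$, which is exactly what the paper's own proof establishes and what is used downstream in \cref{lem:single-final} (the scaling there is $\va\mapsto\frac{1}{\norm{\va}_1}\cdot\frac{1}{10}\cdot\frac{1-\frac{1}{10^{\ngr-1}}}{1-\frac{1}{10}}\cdot\va$). A careful execution of either of your two routes proves the correct ($\frac{1}{10}$) version; the gap is that you declare the stated ($10$) version proved without noticing that your computation yields a constant off by a factor of $100$ from it.
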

\begin{proof}
    By \cref{lem:norm-a}, we can write 
    \begin{equation}
        \norm{\va}_1 = \frac{10^{\ngr - 1 - \tstep}}{\ngr - 1} \frac{1 - \frac{1}{10^{\ngr - 1}}}{1 - \frac{1}{10}} = \frac{10^{\ngr - \tstep}}{\ngr - 1} \cdot Z
    \end{equation}
    and 
    \begin{equation}
        \frac{1}{\norm{\va}_1} = \frac{\ngr - 1}{10^{\ngr - \tstep}} \cdot \frac{1}{Z}
    \end{equation}
    where $Z \defeq \frac{1}{10} \cdot \frac{1 - \frac{1}{10^{\ngr - 1}}}{1 - \frac{1}{10}}$ is a constant independent of $\tstep$.
    Then
    \begin{subequations}
        \begin{align}
            \frac{1}{\norm{\va}_1} \cdot \frac{1}{10} \cdot \frac{1 - \frac{1}{10^{\ngr - 1}}}{1 - \frac{1}{10}} \cdot \va
            &= \frac{\ngr - 1}{10^{\ngr - \tstep}} \cdot \frac{1}{Z} \cdot \underbrace{\frac{1}{10} \cdot \frac{1 - \frac{1}{10^{\ngr - 1}}}{1 - \frac{1}{10}}}_{Z} \cdot \va \\
            &= \frac{\ngr - 1}{10^{\ngr - \tstep}} \cdot \frac{1}{Z} \cdot Z \cdot \va \\
            &= \frac{\ngr - 1}{10^{\ngr - \tstep}} \cdot \va \\
            &= \va'
        \end{align}
    \end{subequations}
\end{proof}

\begin{lemma} \label{lem:single-final}
    Let $\alphabet$ be an alphabet.
    Given the transformer parameters and functions defined in \cref{eq:single-static}--\cref{eq:single-value}, there exists an MLP $\fTransf$ (whose inputs are $\norm{\cdot}_1$-normalized) that maps the contextual representations $\va$ of $\strlt$ into a one-hot encoding of $\str_{\tstep - \ngr + 1}^{\tstep - 1}$ for any string $\str \in \kleene{\alphabet}$ and any $\tstep \in \NTo{|\str|}$.
\end{lemma}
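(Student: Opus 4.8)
The plan is to build $\fTransf$ as a composition of a constant linear rescaling, the digit-extraction MLP of \cref{lem:single-mlp}, a fixed coordinate permutation, and a single logical-\texttt{AND} layer; since a composition of $\ReLU$-activated MLPs (with linear and permutation maps being degenerate cases) is again an MLP, this will suffice. First I would handle the rescaling. By \cref{lem:digit-0} the head produces $\va \defeq \va_{\tstep - 1} = \sum_{\idxj = \tstep - \ngr + 1}^{\tstep - 1}\frac{1}{\ngr - 1}10^{-\idxj}\onehot{\sym_\idxj}$, whose entries still carry the step-dependent factor $10^{-\tstep}$; because $\vTransf$ projected the positional coordinate $\tstep$ out, the only handle on that scale is $\norm{\va}_1$. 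This is exactly what the ``$\norm{\cdot}_1$-normalized input'' hypothesis provides: feeding $\va/\norm{\va}_1$ to $\fTransf$ and multiplying by the $\tstep$-independent constant of \cref{cor:norm-a} yields the step-invariant vector $\va'$ of \cref{lem:digit-1}, whose coordinate $\eva'_\sym$ has binary digits with $\digit{\eva'_\sym}{\idxi} = 1$ iff $\sym_{\tstep - \ngr + \idxi} = \sym$.

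Second, I would apply the digit-extraction MLP of \cref{lem:single-mlp} with $N = \ngr - 1$ in parallel to each of the $\bosnsymbols$ coordinates of $\va'$. Since the MLP for coordinate $\sym$ reads only $\eva'_\sym$, the $\bosnsymbols$ copies can be arranged block-wise into a single $(\ngr-1)$-layer MLP that outputs $\bigl(\digitVec{\eva'_\sym}\bigr)_{\sym \in \bosalphabet} \in \B^{\bosnsymbols(\ngr - 1)}$. Reading this vector by position instead of by symbol, the sub-vector $\bigl(\digit{\eva'_\sym}{\idxi}\bigr)_{\sym \in \bosalphabet}$ is exactly $\onehot{\sym_{\tstep - \ngr + \idxi}}$, so a fixed permutation of coordinates produces the concatenation $\bigl(\onehot{\sym_{\tstep - \ngr + 1}}; \cdots; \onehot{\sym_{\tstep - 1}}\bigr)$, i.e., the same multi-hot encoding of the history that appears in \cref{lemma:hard-attention-lemma-2}.

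Third, I would convert this multi-hot encoding into the one-hot encoding $\onehot{\str_{\tstep - \ngr + 1}^{\tstep - 1}} \in \B^{\bosnsymbols^{\ngr - 1}}$ using precisely the logical-\texttt{AND} construction of \cref{fact:and}, exactly as in the proof of \cref{lemma:hard-attention-lemma-2}: the output coordinate indexed by a candidate history $\sym_1 \cdots \sym_{\ngr - 1}$ equals $1$ iff the $\ngr - 1$ corresponding entries of the multi-hot vector are all $1$, which one $\ReLU$ layer computes. Composing the rescaling, the digit MLP, the permutation, and the \texttt{AND} layer gives the claimed $\fTransf$, and weak equivalence of the resulting single-layer single-head transformer LM then follows verbatim from the chain of equalities in the proof of \cref{thm:transformers-n-gram-label}, with $\onehot{\str_{\tstep - \ngr + 1}^{\tstep - 1}}$ indexing the output matrix as in \cref{eq:def-outmtx}.

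I expect the main work, rather than a genuine obstacle, to be bookkeeping: one must verify that every coordinate $\eva'_\sym$ satisfies the hypotheses under which \cref{lemma:single-lemma-2} and \cref{lem:single-mlp} are exact (binary digits, and each value either $0$ or bounded away from the ``gap'' $(0, 10^{-\ngr})$, so that the piecewise-linear surrogate for $\ind{\cdot}$ agrees with it on the relevant set), and that the re-indexing between the symbol-indexed digit output and the position-indexed one-hot blocks is done consistently with the paper's fixed ordering conventions for symbols and histories. None of these steps introduces new ideas beyond \cref{lem:digit-0}--\cref{lem:single-mlp} and \cref{fact:and}; the lemma is essentially an assembly of those components.
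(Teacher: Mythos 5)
Your proposal is correct and follows essentially the same route as the paper's proof: a composition of the $\norm{\cdot}_1$-based rescaling via \cref{cor:norm-a}, the parallel per-symbol digit-extraction MLPs of \cref{lem:single-mlp}, and the logical-\texttt{AND} layer of \cref{fact:and}. The explicit coordinate permutation you insert is just bookkeeping that the paper leaves implicit in its indexing of the \texttt{AND} inputs.
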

\begin{proof}
    By \cref{lem:digit-1}, we have that 
    \begin{equation}
        \va = \sum_{\idxj = \tstep - \ngr + 1}^{\tstep - 1} \frac{1}{\ngr - 1} 10^{-\idxj} \onehot{\sym_{\idxj}}
    \end{equation}
    and that $\digit{\eva'_\sym}{\idxi} = 1 \iff \sym_{\tstep - \ngr + \idxi} = \sym$ for $\va' \defeq \frac{\ngr - 1}{10^{\ngr - \tstep}} \; \va$ (where $\eva'_\idxi = 0$ for $\idxi > \ngr - 1$, from the same lemma).
    We can express the entries of the one-hot encoding of the history $\str_{\tstep - \ngr + 1}^{\tstep - 1}$, $\onehot{\str_{\tstep - \ngr + 1}^{\tstep - 1}}$, as 
    \begin{equation}
        \onehot{\str_{\tstep - \ngr + 1}^{\tstep - 1}}_{\sym_{\tstep - \ngr + 1} \ldots \sym_{\tstep - 1}} = 1 \iff \digit{\eva'_{\sym_{\tstep - \ngr + 1}}}{1} \wedge \cdots \wedge \digit{\eva'_{\sym_{\tstep - 1}}}{\ngr - 1}.
    \end{equation}
    By \cref{lem:single-mlp}, the vectors $\digitVec{\eva'_{\sym_{\tstep - \ngr + 1}}}, \ldots, \digitVec{\eva'_{\sym_{\tstep - 1}}}$ can be computed by an $\ngr - 1$-layer MLP. 
    Each of these vectors is of size $\ngr - 1$ and, among others, contains the values $\digit{\eva'_{\sym_{\tstep - \ngr + 1}}}{1}, \ldots, \digit{\eva'_{\sym_{\tstep - 1}}}{\ngr - 1}$.
    Since the entries $\onehot{\str_{\tstep - \ngr + 1}^{\tstep - 1}}_{\sym_{\tstep - \ngr + 1} \ldots \sym_{\tstep - 1}}$ can be expressed as the results of the logical \texttt{AND} operation, their computation can be performed by a single-layer $\ReLU$ MLP as per \cref{fact:and}.
    The MLP $\fTransf$ can therefore be constructed as a composition of three functions:
    \begin{enumerate}
        \item The scaling $\va \mapsto \frac{1}{\norm{\va}_1} \cdot \frac{1}{10} \cdot \frac{1 - \frac{1}{10^{\ngr - 1}}}{1 - \frac{1}{10}} \cdot \va$, which results in $\va'$ by \cref{cor:norm-a}.
        \item The concatenation of the $\bosnsymbols$ $\ngr - 1$-layer MLPs computing $\digitVec{\eva'_{\sym}}$ for all $\sym \in \bosalphabet$. This results in $\left(\ngr - 1\right) \bosnsymbols$ binary values altogether.
        \item The MLP performing the \texttt{AND} operation between the entries of $\digitVec{\eva'_{\sym}}$.
    \end{enumerate}
    This finishes the proof.
\end{proof}

\singleLayerSingleHeadTheorem*
\begin{proof}
    To show that there exists a weakly equivalent single-layer-single-head transformer LM to any \ngram LM, we combine the lemmata in this section.
    Let $\tf$ be a transformer LM defined with the parameters and functions defined in \cref{eq:single-static}--\cref{eq:single-value}.
    By \cref{lem:digit-0}, the representations $\va_{\tstep - 1} = \sum_{\idxj = \tstep - \ngr + 1}^{\tstep - 1} \frac{1}{\ngr - 1} 10^{-\idxj} \onehot{\sym_{\idxj}}$ computed by $\tf$ contain information about the symbols and their positions in the history $\str_{\tstep - \ngr + 1}^{\tstep -1}$.
    By \cref{lem:single-final} then, $\va$ can be mapped to the one-hot encoding of the history with a $\ngr - 1$-layer MLP $\fTransf$.
    This one-hot encoding can then be used to index (the logits of) the probabilities stored in the output matrix $\outMtx$ defining a weakly equivalent transformer.
\end{proof}

\section{Sparse Attention} \label{sec:proofs-sparse-attention}

We now prove a lemma analogous to \cref{lemma:hard-attention-lemma-3}.
It shows that a sparse attention transformer head can isolate a particular symbol in the string.
First, define the following position-augmented symbol representation function of the transformer head $h$:
\begin{equation}
    \posInEmbeddingFun{\sym, \tstep} \defeq \begin{pmatrix}
        \onehot{\sym} \\ 
        \zero_{\bosnsymbols} \\
        1 \\
        \tstep
      \end{pmatrix}  \in \set{0, 1}^{2 \bosnsymbols + 2}
  \end{equation}
  and the scoring function
\begin{equation}
    \tfscorefun\left(\vq, \vk\right) \defeq -\abs{\inner{\vq}{\vk}}.
\end{equation}
Here, the position-augmented symbol representation function $\posInEmbedding$ can again be implemented by concatenating or summing a symbol- and a position-specific component.
Lastly, we define the transformation matrices
\begin{subequations}
\begin{alignat}{2}
    \qTransf\left(\vx\right) & \defeq \mQ \vx + \vb_\qTransf, \qquad &&\mQ \in \R^{2 \times \left(2 \bosnsymbols + 2\right)}, \vb_\qTransf \in \R^2,            \\
    \kTransf\left(\vx\right) & \defeq \mK \vx, \qquad &&\mK \in \R^{2 \times \left(2 \bosnsymbols + 2\right)},                                \\
    \vTransf\left(\vx\right) & \defeq \mV \vx, \qquad &&\mV \in \R^{\left(2 \bosnsymbols + 2\right) \times \left(2 \bosnsymbols + 2\right)},                              \\
    \oTransf\left(\vx\right) & \defeq \mO \vx, \qquad &&\mO \in \R^{\left(2 \bosnsymbols + 2\right) \times \left(2 \bosnsymbols + 2\right)},
\end{alignat}
\end{subequations}
\begin{subequations}
\begin{align}
    \mQ_{:, 2 \bosnsymbols + 1: 2 \bosnsymbols + 2}             & \defeq \mI_2 \\
    \vb_{\qTransf}             & \defeq \begin{pmatrix}
        0 \\ -h 
    \end{pmatrix} \\
    \mK_{:, 2 \bosnsymbols + 1: 2 \bosnsymbols + 2}             & \defeq \begin{pmatrix}
        0 & 1 \\ -1 & 0
    \end{pmatrix},                   \\
    \mV_{\bosnsymbols + 1: 2 \bosnsymbols, 1: \bosnsymbols} & \defeq \mI_\bosnsymbols \\
    \mO_{1: \bosnsymbols, 1:\bosnsymbols} &\defeq -\mI_{|\bosalphabet|}
\end{align}
\end{subequations}
where the unspecified elements of $\mQ, \mK$, $\mV$, $\mO$ are $0$.

\begin{lemma} \label{lem:sparse-attention-lemma-1}
    Let $\alphabet$ be an alphabet and $\str \in \kleene{\alphabet}$.
    For any $\tstep \in \NTo{|\str|}$, a transformer head defined with the parameters above outputs
    \begin{equation} 
        \vz_{\tstep - 1} = \begin{pmatrix}
            \zero_{\bosnsymbols}                                                     \\
            \onehot{\sym_{\tstep - 1 - h}} \\
            \zero_{2}
        \end{pmatrix}.
    \end{equation}
    In particular, this means that the output $\vz_{\tstep - 1}$ at time step $\tstep - 1$ contains the one-hot encoding of the symbol at position $\tstep - h - 1$.
\end{lemma}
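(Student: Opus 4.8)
The plan is to follow the architecture of the proof of \cref{lemma:hard-attention-lemma-3} almost verbatim, the only genuinely new ingredient being the argument that the $\sparsemax$ over the attention scores collapses onto a single position. As in \cref{lemma:hard-attention-lemma-3}, I would first pad $\str$ with $\ngr - 1$ $\bos$ symbols so that the position $\tstep - 1 - h$ is always available, and then fix $\tstep \in \NTo{|\str|}$ and compute the representation of $\strlt$ produced by the head.

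\textbf{Step 1: the scores.} Reading the query and key transformations off their definitions, $\qTransf$ projects $\posInEmbeddingFun{\sym_{\tstep - 1}, \tstep - 1}$ onto its last two coordinates $\left(1; \tstep - 1\right)$ and adds $\vb_\qTransf$, so $\vq_{\tstep - 1} = \left(1; \tstep - 1 - h\right)$; similarly $\kTransf$ sends $\posInEmbeddingFun{\sym_\idxj, \idxj}$ to $\vk_\idxj = \left(\idxj; -1\right)$ for every admissible $\idxj$. Hence the scoring function computes
\begin{equation}
    \tfscorefun\left(\vq_{\tstep - 1}, \vk_\idxj\right) = -\abs{\inner{\vq_{\tstep - 1}}{\vk_\idxj}} = -\abs{\idxj - \left(\tstep - 1 - h\right)},
\end{equation}
which equals $0$ at the unique index $\idxj = \tstep - 1 - h$ and is at most $-1$ at every other admissible index, since it is $-\abs{k}$ for a nonzero integer $k$.

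\textbf{Step 2: $\sparsemax$ is one-hot (the crux).} I would use the threshold form of the $\sparsemax$---$\sparsemaxfunc{\vx}{\idxj} = \ReLUfunc{\evx_\idxj - \tau}$ for the unique scalar $\tau$ making the entries sum to one, which follows from the Euclidean-projection definition in \cref{def:sparse-attention}. Taking $\tau = -1$, the target coordinate contributes $\ReLUfunc{0 - (-1)} = 1$ and every other coordinate contributes $\ReLUfunc{\text{score} + 1} = 0$ because those scores are $\leq -1$; the entries thus sum to one and, by uniqueness of $\tau$, the attention weights are $\evs_\idxj = \ind{\idxj = \tstep - 1 - h}$. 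This is exactly where the \emph{unbounded} linear positional encodings $1, \tstep$ are essential: they force a score gap of at least one fixed unit between the target position and every competitor, which a bounded $\sqrt{1/\tstep}$-type encoding could not guarantee---the distinction flagged in the proof intuition of \cref{thm:transformers-n-gram-label-sparse}.

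\textbf{Step 3: read-out and closing.} With $\evs$ supported on $\tstep - 1 - h$, the attention output equals $\vTransf\left(\posInEmbeddingFun{\sym_{\tstep - 1 - h}, \tstep - 1 - h}\right) = \mV\,\posInEmbeddingFun{\sym_{\tstep - 1 - h}, \tstep - 1 - h}$, which by the definition of $\mV$ is $\left(\zero_\bosnsymbols; \onehot{\sym_{\tstep - 1 - h}}; \zero_2\right)$. Adding the residual gives $\va_{\tstep - 1} = \left(\onehot{\sym_{\tstep - 1}}; \onehot{\sym_{\tstep - 1 - h}}; 1; \tstep - 1\right)$, and $\vz_{\tstep - 1} = \oTransfFun{\va_{\tstep - 1}} + \va_{\tstep - 1} = \mO\,\va_{\tstep - 1} + \va_{\tstep - 1}$ cancels the first symbol block through $\mO_{1:\bosnsymbols, 1:\bosnsymbols} = -\mI_\bosnsymbols$, leaving $\left(\zero_\bosnsymbols; \onehot{\sym_{\tstep - 1 - h}}; 1; \tstep - 1\right)$; the trailing positional coordinates play no role in the downstream uses in \cref{lemma:hard-attention-lemma-2} and \cref{thm:transformers-n-gram-label-sparse}, exactly as in the footnote to \cref{lemma:hard-attention-lemma-3}, so this is the claimed output. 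Steps 1 and 3 are routine bookkeeping transcribed from \cref{lemma:hard-attention-lemma-3}; I expect the only real work to lie in Step 2, namely verifying that the one-unit score gap delivered by linear positional encodings is precisely what the $\sparsemax$ requires to become a hard selection.
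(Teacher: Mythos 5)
Your proposal is correct and follows essentially the same route as the paper: compute the scores $-\abs{\idxj - (\tstep-1-h)}$, observe the gap of at least $1$ between the maximizer and all competitors, show the sparsemax therefore collapses to a one-hot selection, and finish with the same value/output bookkeeping as in the hard-attention case. The only cosmetic difference is that the paper isolates the sparsemax-equals-hardmax step as a separate lemma (\cref{lem:sparsemax-unique}) and derives the threshold via the explicit $k(\vx)=1$ computation, whereas you guess $\tau=-1$ and verify the entries sum to one --- an equivalent argument; your explicit handling of the positional coordinates surviving the residual connection is, if anything, slightly more careful than the paper's.
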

\begin{proof}
    The construction is largely identical to the one in \cref{thm:transformers-n-gram-label}, with one crucial difference: It relies on simpler, but \emph{unbounded} positional encodings and a less-standard, but still easily implementable attention scoring function in the form of an MLP.

    It is easy to see that the query and value transformations result in: 
    \begin{subequations}
    \begin{align}
        \vq_{\tstep - 1} &= \qTransf\left(\posInEmbeddingFun{\symt, \tstep}\right) = \begin{pmatrix}
            1 \\ \tstep - 1 - h
        \end{pmatrix} \\
        \vk_\idxj &= \kTransf\left(\posInEmbeddingFun{\sym_\idxj, \idxj}\right) = \begin{pmatrix}
            -\idxj \\ 1
        \end{pmatrix}.
    \end{align}
    \end{subequations}
    Thus, we get that 
    \begin{subequations}
    \begin{align}
        \tfscorefun\left(\vq_{\tstep - 1}, \vk_\idxj\right) 
        &= -\abs{\inner{\vq_{\tstep - 1}}{\vk_\idxj}} \\
        &= -\abs{\inner{\begin{pmatrix}
            1 \\ \tstep - 1 - h
        \end{pmatrix}}{\begin{pmatrix}
            -\idxj \\ 1
        \end{pmatrix}}} \\
        &= -\abs{\idxj - \left(\tstep - h - 1\right)}.
    \end{align}
    \end{subequations}
    $\tfscorefun$ clearly has a unique maximum at $\idxj^\ast = \tstep - 1 - h$.
    Moreover, by construction, $\tfscorefun\left(\vq_\tstep, \vk_{\idxj^\ast}\right) \geq \tfscorefun\left(\vq_\tstep, \vk_{\idxj}\right) + 1$ for any $\idxj \neq \idxj^\ast$.
    This is a crucial property of the scoring function and one that allows sparsemax to \emph{uniquely} attend to $\idxj^\ast$; by \cref{lem:sparsemax-unique}, it holds that
    \begin{equation}
        \sparsemax\left(\tfscorefun\left(\vq_{\tstep - 1}, \vk_1\right), \ldots, \tfscorefun\left(\vq_{\tstep - 1}, \vk_{\tstep - 1}\right)\right) = \hardmax\left(\tfscorefun\left(\vq_{\tstep - 1}, \vk_1\right), \ldots, \tfscorefun\left(\vq_{\tstep - 1}, \vk_{\tstep - 1}\right)\right),
    \end{equation}
    meaning that 
    \begin{equation}
        \sparsemax\left(\tfscorefun\left(\vq_{\tstep - 1}, \vk_1\right), \ldots, \tfscorefun\left(\vq_{\tstep - 1}, \vk_{\tstep - 1}\right)\right)_\idxj = \ind{\idxj = \tstep - 1 - h}
    \end{equation}
    as in the proof of \cref{lemma:hard-attention-lemma-3}.
    This is exactly the same as \cref{eq:hardmax-max}.
    Since $\oTransf$ and $\vTransf$ result in the same vectors as in \cref{lemma:hard-attention-lemma-3}, the remainder of the proof is the same as in \cref{lemma:hard-attention-lemma-3}.
\end{proof}

\begin{lemma} \label{lem:sparsemax-unique}
    Let $\vx \in \R^\hiddDim$.
    If $\displaystyle \max_{\idxd = 1}^\hiddDim{\evx_\idxd} \geq \max_{\substack{\idxd = 1 \\ \idxd \notin \argmax{\left(\vx\right)}}}^\hiddDim{\evx_\idxd} + 1$, then $\sparsemax\left(\vx\right) = \hardmax\left(\vx\right)$.
\end{lemma}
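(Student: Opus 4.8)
The plan is to reduce the claim to the standard closed-form description of the Euclidean projection onto the probability simplex. First I would recall that $\sparsemax\left(\vx\right)$, being the minimizer of the strictly convex program $\min_{\rvp \in \Simplexdminus} \norm{\rvp - \vx}^2_2$, is the \emph{unique} point $\rvp \in \Simplexdminus$ for which there exists a threshold $\tau \in \R$ with $\evp_\idxd = \ReLUfunc{\evx_\idxd - \tau}$ for every $\idxd \in \NTo{\hiddDim}$ and $\sum_{\idxd = 1}^\hiddDim \evp_\idxd = 1$ \citep{sparsemax}. The forward direction (any such pair $(\rvp, \tau)$ is $(\sparsemax\left(\vx\right), \tau)$) is exactly what I need, and it follows from the KKT conditions of the quadratic program; I would either cite it or include a one-line KKT argument, since strict convexity already hands us uniqueness of the solution.

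Next I would set $M \defeq \max_{\idxd} \evx_\idxd$ and $m \defeq |\argmax\left(\vx\right)| \geq 1$, and propose the explicit candidate threshold $\tau \defeq M - \tfrac{1}{m}$. For $\idxd \in \argmax\left(\vx\right)$ we have $\evx_\idxd = M$, so $\ReLUfunc{\evx_\idxd - \tau} = \ReLUfunc{\tfrac{1}{m}} = \tfrac{1}{m}$; for $\idxd \notin \argmax\left(\vx\right)$ the hypothesis $M \geq \evx_\idxd + 1$ gives $\evx_\idxd - \tau \leq \left(M - 1\right) - \left(M - \tfrac{1}{m}\right) = \tfrac{1}{m} - 1 \leq 0$, so $\ReLUfunc{\evx_\idxd - \tau} = 0$. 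Hence the vector $\rvp$ with entries $\ReLUfunc{\evx_\idxd - \tau}$ is precisely $\hardmax\left(\vx\right)$, and its entries sum to $m \cdot \tfrac{1}{m} = 1$, so $\hardmax\left(\vx\right) \in \Simplexdminus$. By the characterization above, $\hardmax\left(\vx\right) = \sparsemax\left(\vx\right)$, which is the claim.

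I would then dispatch the degenerate case $\argmax\left(\vx\right) = \NTo{\hiddDim}$ (all coordinates equal to $M$), where the $\max$ over the empty complement is $-\infty$ and the hypothesis holds vacuously: here $m = \hiddDim$, $\tau = M - \tfrac{1}{\hiddDim}$ still works verbatim, giving $\hardmax\left(\vx\right) = \tfrac{1}{\hiddDim}\vone = \sparsemax\left(\vx\right)$. The only real obstacle is stating the simplex-projection characterization cleanly and in the correct direction (that solving the two scalar equations is \emph{sufficient}, not merely necessary); this is textbook material for $\sparsemax$, and once it is in hand the rest is the short arithmetic check above. Everything else — the sum-to-one verification, the sign check using the unit gap, the handling of $m = 1$ where $\tfrac{1}{m} - 1 = 0$ still lands in the flat part of $\ReLU$ — is routine.
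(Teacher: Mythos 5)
Your proof is correct, and it takes a genuinely different route from the paper's. The paper invokes the closed form of $\sparsemax$ from \citet{sparsemax} (Proposition~1), normalizes via additive invariance so that $\evx_{\left(1\right)} = 0$ and $\evx_{\left(2\right)} \leq -1$, and then computes the support size $k\left(\vx\right)$ directly, checking that the defining inequality holds for $k = 1$ but fails for $k = 2$, which forces $\tau\left(\vx\right) = \evx_{\left(1\right)} - 1$ and a one-hot output. You instead exhibit an explicit candidate threshold $\tau = M - \tfrac{1}{m}$ and verify the KKT/fixed-point characterization ($\evp_\idxd = \ReLUfunc{\evx_\idxd - \tau}$ summing to one), letting uniqueness of the projection close the argument. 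The two approaches lean on the same underlying structure (the thresholded form of the simplex projection), but yours buys something concrete: it handles the case $m = \abs{\argmax\left(\vx\right)} > 1$, where $\hardmax$ outputs $\tfrac{1}{m}$ on each maximizer. The paper's normalization step silently assumes $\evx_{\left(2\right)} \leq \evx_{\left(1\right)} - 1$, which only holds when the argmax is a singleton, so strictly speaking the paper proves the lemma only for $m = 1$ (the case actually used downstream in \cref{lem:sparse-attention-lemma-1}, where the score gap is strict). Your version, including the vacuous-hypothesis case where every coordinate is maximal, covers the lemma as literally stated. The one thing to be careful about when writing it up is to state the sufficiency direction of the characterization precisely — that \emph{any} feasible $\left(\rvp, \tau\right)$ pair of that form is the projection — which, as you note, follows from strict convexity plus KKT.
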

\begin{proof}
    Let $\vx \in \R^\hiddDim$ and let $\evx_{\left(1\right)} \geq \evx_{\left(2\right)} \geq \ldots \geq \evx_{\left(\hiddDim\right)}$ be the non-increasing entries of $\vx$.
    Due to the additive invariance of the softmax (\citet[][Proposition 2]{sparsemax}), we can assume that $\evx_{\left(1\right)} = 0$ and $\evx_{\left(2\right)} \leq -1$.
    By \citet[][Proposition 1]{sparsemax}, 
    \begin{equation}
        \sparsemax\left(\vx\right)_\idxd = \max\left(0, \evx_\idxd - \tau\left(\vx\right)\right),
    \end{equation}
    where 
    \begin{equation}
        \tau\left(\vx\right) \defeq \frac{\sum_{\idxj = 1}^{k\left(\vx\right)} \evx_{\left(\idxj\right)} - 1}{k\left(\evx\right)}
    \end{equation}
    and
    \begin{equation} \label{eq:k-cond}
        k\left(\vx\right) \defeq \max\left(k \in \NTo{\hiddDim} \mid 1 + k \evx_{\left(k\right)} > \sum_{\idxj = 1}^{k} \evx_{\left(\idxj\right)} \right).
    \end{equation}

    It suffices to show that $k\left(\vx\right) = 1$.
    For $k = 1$, we get
    \begin{equation}
        1 + 1 \cdot \evx_{\left(1\right)} = 1 + 1 \cdot 0 = 1 = 1 \geq 0 = \evx_{\left(1\right)}.
    \end{equation}
    For $k = 2$, we get
    \begin{subequations}
    \begin{align}
        1 + 2 \cdot \evx_{\left(2\right)} 
        &= 1 + \evx_{\left(2\right)} + \evx_{\left(2\right)} \\
        &\leq 1 - 1 + \evx_{\left(2\right)} \\
        &= \evx_{\left(2\right)} \\
        &= \evx_{\left(1\right)} + \evx_{\left(2\right)} \\
        &= \sum_{\idxj = 1}^{2} \evx_{\left(\idxj\right)},
    \end{align}
    \end{subequations}
    meaning that the condition from \cref{eq:k-cond} is not fulfilled for $k = 2$. This implies that $k\left(\vx\right) = 1$ and $\tau\left(\vx\right) = \evx_{\left(1\right)} - 1$.
    Thus, we get that 
    \begin{equation}
        \sparsemax\left(\vx\right)_{\left(1\right)} = \max\left(0, \evx_{\left(1\right)} - \evx_{\left(1\right)} + 1\right) = 1
    \end{equation}
    and  
    \begin{equation}
        \sparsemax\left(\vx\right)_{\left(\idxd\right)} = \max\left(0, \evx_{\left(\idxd\right)} - \evx_{\left(1\right)} + 1\right) = \max\left(0, \underbrace{\evx_{\left(1\right)}}_{\leq -1} - 0 + 1\right) = 0
    \end{equation}
    for $\idxd > 1$.
\end{proof}

This allows us to prove the main theorem for sparse-attention transformer LMs.
\singleLayerSparseTheorem*
\begin{proof}
    \cref{lem:sparse-attention-lemma-1} shows how individual heads of the transformer can identify the symbols in the position of interest.
    $\ngr - 1$ of them can identify the entire history.
    The proof then follows the same reasoning as that of \cref{thm:transformers-n-gram-label}.
\end{proof}
Adapting the same proof strategy to \cref{thm:transformers-n-gram-label-multi-layer} would naturally result in an analogous result for $\ngr - 1$ layers and a single head.

Notice that \cref{lem:sparse-attention-lemma-1} requires different and less standard positional encodings, which are, crucially, unbounded.
Constructing a sparse attention transformer with bounded positional encodings seems more difficult; the contextual representations would in that case either converge or be non-unique with $\tstep \to \infty$ and since the sparsemax always contracts \citep[][Proposition 2]{sparsemax}, attending to individual positions would be difficult.
While the positional encodings and the scoring function used in the proof of \cref{lem:sparse-attention-lemma-1} are somewhat less standard than those used in \cref{lemma:hard-attention-lemma-3}, similar positional encodings and the same scoring function have been used in theoretical analyses before and even in practical implementations \citep{perez-turing}.

\end{document}